\documentclass[journal]{IEEEtran}
\usepackage{amssymb}
\usepackage{times}
\usepackage{graphicx}
\usepackage{subfigure}
\usepackage{multirow}
\usepackage{multicol}
\usepackage{amsmath}
\usepackage{bm}
\allowdisplaybreaks

\usepackage{dashrule}
\usepackage{algorithm}
\usepackage{algorithmic}
\usepackage[table]{xcolor}
\usepackage{threeparttable}
\usepackage{dcolumn}
\usepackage{multirow}
\usepackage{multicol}
\usepackage{booktabs}
\usepackage{subfigure}
\usepackage{hyperref}  

\usepackage{amsthm}

\usepackage{latexsym}

\usepackage{rotating}
\usepackage{textcomp}
\usepackage{cite}
\usepackage{mathrsfs}
\usepackage{colortbl}
\definecolor{light-gray}{gray}{0.75}

\newtheorem{theorem}{Theorem}

\ifCLASSINFOpdf
\else
\fi
\begin{document}
%
\title{From Consistency to Collaborative Discovery: \textit{MFEA-CoD} for Multitask Novelty Search}
%
%
%

\author{Jiao Liu,
     Yanchi Li,
     Hua Yu,
     Abhishek Gupta,
     and Yew-Soon Ong
     \thanks{The authors acknowledge the use of Gemini and ChatGPT to identify improvements in the writing.}
     \thanks{Jiao Liu is with the College of Computing \& Data Science, Nanyang Technological University, Singapore (e-mail: jiao.liu@ntu.edu.sg).}
     \thanks{Yanchi Li is with the School of Computer Science, China University of Geosciences, (e-mail: int\_lyc@cug.edu.cn).}
     \thanks{Hua Yu is with the School of Intelligent Rail Transportation, Dalian Jiaotong University, (e-mail: yhiccd@163.com).}
     \thanks{Abhishek Gupta is with the School of Mechanical Sciences, Indian Institute of Technology, Goa, India. (e-mail: abhishekgupta@iitgoa.ac.in).}
     \thanks{Yew-Soon Ong is with the College of Computing \& Data Science, Nanyang Technological University, Singapore (e-mail: asysong@ntu.edu.sg.}
    }

%
%

\markboth{Journal of \LaTeX\ Class Files,~Vol.~14, No.~8, August~2015}%
{Shell \MakeLowercase{\textit{et al.}}: Bare Demo of IEEEtran.cls for IEEE Journals}
%



\maketitle

\begin{abstract}
Evolutionary multitasking (EMT) has shown strong capability in solving multiple optimization problems simultaneously by exploiting latent \textit{inter-task consistency}, such as similarities in promising solutions or search directions. However, most existing EMT studies remain focused on objective-driven optimization, where such consistency is mainly used to accelerate convergence toward predefined optima. In this paper, we move EMT \textit{from consistency to collaborative discovery} and propose a \textit{multifactorial evolutionary algorithm with collaborative discovery} (MFEA-CoD) for multitask novelty search. Unlike conventional EMT, MFEA-CoD coordinates multiple novelty search tasks to collaboratively discover behaviorally novel solutions rather than merely transferring consistent search information for faster convergence. Specifically, a \textit{multitask repulsion operator} encourages different tasks to explore distinct regions of the unified search space, thereby reducing redundant behavioral discoveries. Meanwhile, an \textit{adaptive inter-task transfer} mechanism exploits shared discovery opportunities in overlapping novelty-improving regions by adjusting the transfer probability according to the online contribution of transferred information. Furthermore, MFEA-CoD is extended to multitask novelty-augmented optimization, where behavioral novelty is jointly considered with objective information to alleviate premature convergence caused by deceptive objectives. Experiments on synthetic basin-type problems, deceptive maze navigation problems, MuJoCo policy optimization problems, and generative novelty search problems demonstrate that MFEA-CoD improves the efficiency of discovering diverse novel solutions and shows clear advantages in deceptive objective landscapes.
\end{abstract}

\begin{IEEEkeywords}
Evolutionary multitasking, novelty search, multifactorial evolutionary algorithm, collaborative discovery
\end{IEEEkeywords}

%
\IEEEpeerreviewmaketitle

\section{Introduction}
Evolutionary multitasking (EMT) has recently emerged as a promising optimization paradigm in the evolutionary computation community~\cite{gupta2017insights,gupta2022half}. Unlike conventional approaches that address optimization problems independently, EMT enables the simultaneous solving of multiple related optimization tasks within a unified framework~\cite{wei2021review}. By exploiting latent inter-task relationships during evolution, EMT enables the transfer of \textit{consistent} search information across tasks, such as solution similarity~\cite{bai2021multitask} and search-direction consistency~\cite{yin2019multifactorial}, thereby accelerating convergence and improving search performance relative to traditional evolutionary algorithms. Owing to these advantages, EMT has been widely applied to a variety of optimization tasks, including single-objective~\cite{gupta2015multifactorial,bali2019multifactorial}, multiobjective~\cite{gupta2016multiobjective,bali2020cognizant,Li2025CMO-MTO}, and combinatorial optimization tasks~\cite{feng2020explicit,feng2019solving}.

Despite the remarkable progress of EMT in recent years, most existing studies have primarily focused on improving optimization efficiency and enhancing convergence speed~\cite{bai2021multitask}. Specifically, the goal of efficiency improvement is advanced by capturing consistent inter-task searching characteristics~\cite{lin2023ensemble}, designing more effective similarity measures and transfer mechanisms~\cite{liang2020evolutionary,li2024multiobjective}, and developing scalable frameworks for scenarios involving a large number of optimization tasks~\cite{liang2021evolutionary,jiang2024knowledge,li2023multitask}.
Recently, emerging studies suggest that the potential of EMT expands beyond improvements in optimization efficiency and convergence speed, opening up broader application paradigms. For example, Choong \textit{et al}.~\cite{choong2023jack} proposed the use of EMT to derive a collection of compact task-specific models from large pretrained models, enabling more efficient deployment in real-world applications. Wong \textit{et al}.~\cite{wong2025llm2tea} combined EMT principles with large language models to support the discovery of novel aerodynamic designs through inter-task crossover. These pioneering studies indicate that, beyond its established merits in improving optimization efficiency, EMT also possesses substantial yet underexplored potential in broader domains, which deserves further systematic investigation.

In this paper, we expand the scope of EMT at two levels. From a methodological perspective, EMT is expanded from exploiting inter-task search consistency for convergence acceleration to enabling \textit{collaborative discovery} across unified search spaces for collecting novel and diverse solutions. From a task perspective, EMT is expanded beyond conventional optimization problems to the domain of \textit{novelty search}~\cite{lehman2011abandoning}, where its potential has remained largely unexplored.
Unlike conventional optimization tasks, novelty search considers a mapping from candidate solutions in the \textit{genotype space} to behavioral descriptors in the \textit{phenotype space} where the novelty is defined. Its purpose is not to optimize a predefined fitness function, but to discover a diverse set of behaviorally novel solutions that maximizes the coverage of the phenotype space. A key advantage of novelty search lies in its ability to cope with deceptive landscapes. Owing to the \textit{stepping-stone effect}~\cite{lehman2011abandoning,meyerson2017discovering}, novelty search is particularly effective in deceptive landscapes, as it encourages exploration beyond locally attractive yet ultimately unpromising regions. 
Under this paradigm, EMT is no longer used merely to accelerate convergence. Instead, it serves as a mechanism for coordinating discovery across related novelty search tasks. The key idea of collaborative discovery is that different tasks may produce overlapping yet not identical novelty improving solutions. Therefore, blindly solving them in isolation may waste evaluations on redundant behavioral discoveries, whereas indiscriminate sharing may suppress task-specific exploration. Collaborative discovery addresses this tension by encouraging useful sharing while maintaining complementary exploration among tasks.
Building upon this view, we also generalize the EMT framework to \textit{novelty-augmented optimization}~\cite{conti2018improving}, where both task objectives and behavioral novelty are considered during the evolution. In complex problems with deceptive objectives, such as policy optimization~\cite{mustafaoglu2025evolutionary}, this paradigm offers clear advantages by guiding the search away from deceptive traps and toward more promising regions of the solution space.

The main contributions of this paper are summarized as follows:
\begin{itemize}
    \item We propose the \textit{multifactorial evolutionary algorithm with collaborative discovery} (MFEA-CoD) to solve multiple pure novelty search tasks simultaneously. MFEA-CoD is designed for novelty-driven exploration by repulsion and inter-task transfer. Specifically, repulsion encourages different tasks to explore complementary regions and reduces redundant search, while inter-task transfer allows useful information to be shared when tasks exhibit overlapping target genotype regions. Through this design, MFEA-CoD improves the efficiency of discovering diverse novel solutions.

    \item We equip MFEA-CoD with an adaptive inter-task transfer mechanism to enhance search efficiency. Specifically, the mechanism dynamically adjusts the transfer probability according to the online contribution of transferred information to the novelty search performance of each task during evolution.

    \item We further expand MFEA-CoD to multitask novelty-augmented optimization, which is more representative of real-world applications. By enabling a dynamic balance between novelty-driven exploration and objective-driven exploitation, the proposed approach helps the search process escape deceptive local optima and ultimately attain higher-quality solutions over complex objective landscapes.

    \item The proposed MFEA-CoD is extensively evaluated on four groups of problems, namely synthetic basin-type problems, deceptive maze navigation problems, MuJoCo policy optimization problems, and generative novelty search problems. The results show that, compared with single-task search, MFEA-CoD achieves higher efficiency in discovering diverse novel solutions. Furthermore, in novelty-augmented optimization, MFEA-CoD demonstrates the capability on coping with deceptive landscapes arising from complex objectives. 
\end{itemize}

The remainder of this paper is organized as follows. Section II reviews the related literature and introduces the background of novelty search and novelty-augmented optimization. Section III presents the motivation and theoretical analysis of multitask novelty search. Section IV describes the overall framework of MFEA-CoD for pure novelty search. Section V expands MFEA-CoD to novelty-augmented optimization. Section VI reports the experimental results on four groups of problems. Finally, Section VII concludes this paper.

\section{Preliminaries}

\subsection{Related Work}
EMT has emerged as a powerful paradigm that exploits the implicit parallelism of evolutionary algorithms to address multiple optimization tasks simultaneously \cite{gupta2017insights,gupta2015multifactorial}. By enabling cross-task knowledge transfer, EMT can substantially improve the convergence and the solution quality \cite{bali2019multifactorial,feng2020explicit}. Existing approaches can generally be categorized into two main classes: \textit{implicit transfer} and \textit{explicit transfer}.

Implicit transfer frameworks, exemplified by the MFEA family of methods~\cite{gupta2015multifactorial,gupta2016multiobjective}, avoid the need for complex domain mapping by encoding different task spaces within a unified chromosomal representation. Under this paradigm, knowledge sharing is naturally induced through crossover across different tasks, whereby individuals associated with different \textit{skill factors} interact to implicitly exchange useful information. To further reduce the risk of negative transfer, advanced variants such as MFEA-II~\cite{bali2019multifactorial,bali2020cognizant} incorporate online inter-task synergy estimation. These methods employ probabilistic models or mixture distributions to adaptively assess task relatedness during evolution, thereby dynamically regulating the extent of genetic transfer across tasks~\cite{zhou2020toward,martinez2021adaptive,Li2025MTDE-MKTA}. 

In contrast, explicit transfer approaches are designed to construct direct mappings between heterogeneous problem domains, thus inject promising solutions from the source to the target task to guide the search~\cite{feng2020explicit}. Representative techniques include affine transformation~\cite{xue2020affine}, ensemble of transformations~\cite{lin2023ensemble}, optimal transport-based distribution alignment \cite{liu2025optimal}, as well as manifold alignment methods \cite{zhang2026multiobjective}. Although explicit transfer methods often incur higher computational overhead due to the mapping construction process, they provide more targeted and precise knowledge injection, which is particularly beneficial when handling tasks with heterogeneous dimensionalities or substantially different fitness landscapes.

Beyond these two mainstream categories, recent years have also witnessed increasing interest in \textit{surrogate-assisted transfer} approaches for expensive optimization problems~\cite{tan2024surrogate}. In such settings, surrogate models are introduced to reduce the number of costly fitness evaluations, while transfer mechanisms exploit data or models collected from multiple related tasks to construct more accurate multitask or transfer-aware surrogates for evolutionary search~\cite{min2017multiproblem,liu2024extremo,wang2020transfer}. These methods have demonstrated strong potential for improving search efficiency under stringent evaluation budgets. Furthermore, such ideas have been extended to parametric optimization~\cite{wei2025theta,cheng2026parametric}, where infinitely many tasks are continuously defined over a parameter space and can be optimized in a unified manner~\cite{wei2025parametric}. In addition, \textit{inverse transfer}~\cite{liu2025bayesian} strategies have been proposed to learn from the nondominated solution sets of expensive multitask optimization problems, thereby accelerating the solution of expensive multiobjective optimization tasks~\cite{wei2024bayesian}.

Despite the extensive development of EMT algorithms in recent years, existing studies have remained largely confined to conventional {objective-driven} optimization tasks. In particular, most EMT frameworks are built upon the fundamental premise of exploiting inter-task relationships to accelerate convergence toward optimal fitness values. To the best of our knowledge, the integration of EMT into novelty search has not yet been investigated. Unlike conventional optimization, which is primarily driven by objective performance, novelty search emphasizes the discovery of behaviorally diverse and previously unexplored solutions in the phenotype space~\cite{lehman2011abandoning}. Existing studies on novelty search have mainly focused on improving individual algorithmic components, such as behavior characterization~\cite{kistemaker2011critical}, searching strategies~\cite{fontaine2020covariance}, and archive management strategies~\cite{lehman2011evolving}. However, no prior work has examined whether EMT can be incorporated to promote collaborative discovery across multiple novelty search tasks. To fill this gap, this paper presents a first investigation into the feasibility and effectiveness of evolutionary multitask novelty search. By reinterpreting inter-task knowledge transfer as a mechanism for promoting exploration diversity rather than merely accelerating the convergence of the optimization tasks, the proposed framework aims to enable the efficient discovery of diverse and novel solutions across concurrent tasks.

\subsection{Background Concepts}
In this subsection, we provide the formal definitions of the search spaces and the two distinct types of novelty search problems addressed in this study: \textit{pure novelty search} and \textit{novelty-augmented optimization}.

\subsubsection{Pure Novelty Search}
Consider a search problem defined over a genotype space, where each candidate solution is associated with a corresponding behavioral descriptor in a phenotype space. Pure novelty search prioritizes behavioral diversity by encouraging the exploration of previously unvisited regions of the phenotype space~\cite{lehman2011abandoning}. To formalize this setting, the key concepts and notations used throughout this work are summarized as follows.

\begin{itemize}
    \item \textbf{Genotype Space}: Let $\mathcal{X}$ denote the \textit{genotype space} (or decision space), which contains all possible solution encodings. In continuous optimization, $\mathcal{X} \subseteq \mathbb{R}^D$, where $D$ is the dimensionality of the decision variables.

    \item \textbf{Phenotype Space}: Let $\mathcal{B}$ denote the \textit{phenotype space} (or behavior space), which represents the set of distinct behavioral descriptors exhibited by candidate solutions. Typically, $\mathcal{B} \subseteq \mathbb{R}^M$, where $M$ is the dimensionality of the behavior descriptor.

    \item \textbf{Behavior Mapping}: A behavior mapping function $\phi: \mathcal{X} \to \mathcal{B}$ is defined to map a genotype $x \in \mathcal{X}$ to its corresponding behavioral descriptor $b \in \mathcal{B}$, i.e., $b = \phi(x)$.

    \item \textbf{Novelty Score}: The novelty of a solution is quantified according to the sparsity of its local neighborhood in the phenotype space. Given an archive of previously encountered solutions, denoted by $\mathcal{A}$, the novelty score of a candidate solution $x$ is defined as
    \begin{equation}
    \rho(x,\mathcal{A}) = \frac{1}{K}\sum_{k=1}^{K} \mathrm{dist}(\phi(x), \mu_k),
    \end{equation}
    where $\mu_k$ is the $k$-th nearest neighbor of $\phi(x)$ in $\mathcal{A}$, and $\mathrm{dist}(\cdot,\cdot)$ is a distance metric in $\mathcal{B}$, such as the Euclidean distance.

    \item \textbf{Goal of Pure Novelty Search}: The goal of pure novelty search is to collect the novel solutions by maximizing the novelty score rather than the objective function value, i.e.,
    \begin{equation}
    \max_{x \in \mathcal{X}} : \; F_{\text{nov}}(x) = \rho(x,\mathcal{A}),
    \end{equation}
    so as to iteratively discover behaviorally novel solutions and expand the coverage of the phenotype space $\mathcal{B}$.
\end{itemize}

\subsubsection{Novelty-Augmented Optimization}
Building upon the formulation of the pure novelty search, novelty-augmented optimization further incorporates objective performance into the search criterion~\cite{conti2018improving}, so as to balance behavioral exploration and objective exploitation. This formulation is motivated by the fact that purely objective-driven search may become trapped in deceptive local optima, whereas novelty can provide an additional exploratory pressure toward less-visited yet potentially promising regions. The additional key concepts in novelty-augmented optimization are summarized as follows:

\begin{itemize}
    \item \textbf{Objective Function}: Let $f: \mathcal{X} \to \mathbb{R}$ denote the objective function that evaluates the quality of a solution. Without loss of generality, we consider a maximization setting, where a larger value of $f(x)$ indicates better solution quality. It is worth noting, however, that in pure novelty search this objective is not considered.
    
    \item \textbf{Novelty-Augmented Objective}: The objective value and the novelty score are linearly combined to define the composite search criterion:
    \begin{equation}\label{eq:nov_aug_obj}
    F_{\text{nov-aug}}(x) = w \cdot f(x) + (1-w) \cdot \rho(x,\mathcal{A}),
    \end{equation}
    where $w \in [0,1]$ is a trade-off coefficient controlling the relative emphasis on the objective performance and the novelty.

    \item \textbf{Goal of Novelty-Augmented Optimization}: The goal of novelty-augmented optimization is to identify a solution that maximizes the above composite objective, i.e.,
    \begin{equation}\label{eq:nov_aug}
    \max_{x \in \mathcal{X}} : \; F_{\text{nov-aug}}(x).
    \end{equation}
    In \eqref{eq:nov_aug}, since $\rho(x,\mathcal{A})$ depends on the evolving archive $\mathcal{A}$, the effective search landscape induced by $F_{\text{nov-aug}}(x)$ is dynamic. As a result, by considering the novelty score, the search of novelty-augmented optimization is encouraged to move away from already well-explored regions and toward less-visited regions with high objective potential, thereby improving the ability to escape deceptive local optima.
\end{itemize}

\section{Repulsion \& Inter-Task Transfer: The Key of Evolutionary Multitask Novelty Search}\label{sec3:analysis}

\subsection{Intuitive Explanation of Multitask Novelty Search}\label{sec:intuitive_analysis}
\begin{figure*}[!h]
	\begin{center}
        \subfigure[]{\label{fig:scenario_1}\includegraphics[width=0.65\columnwidth]{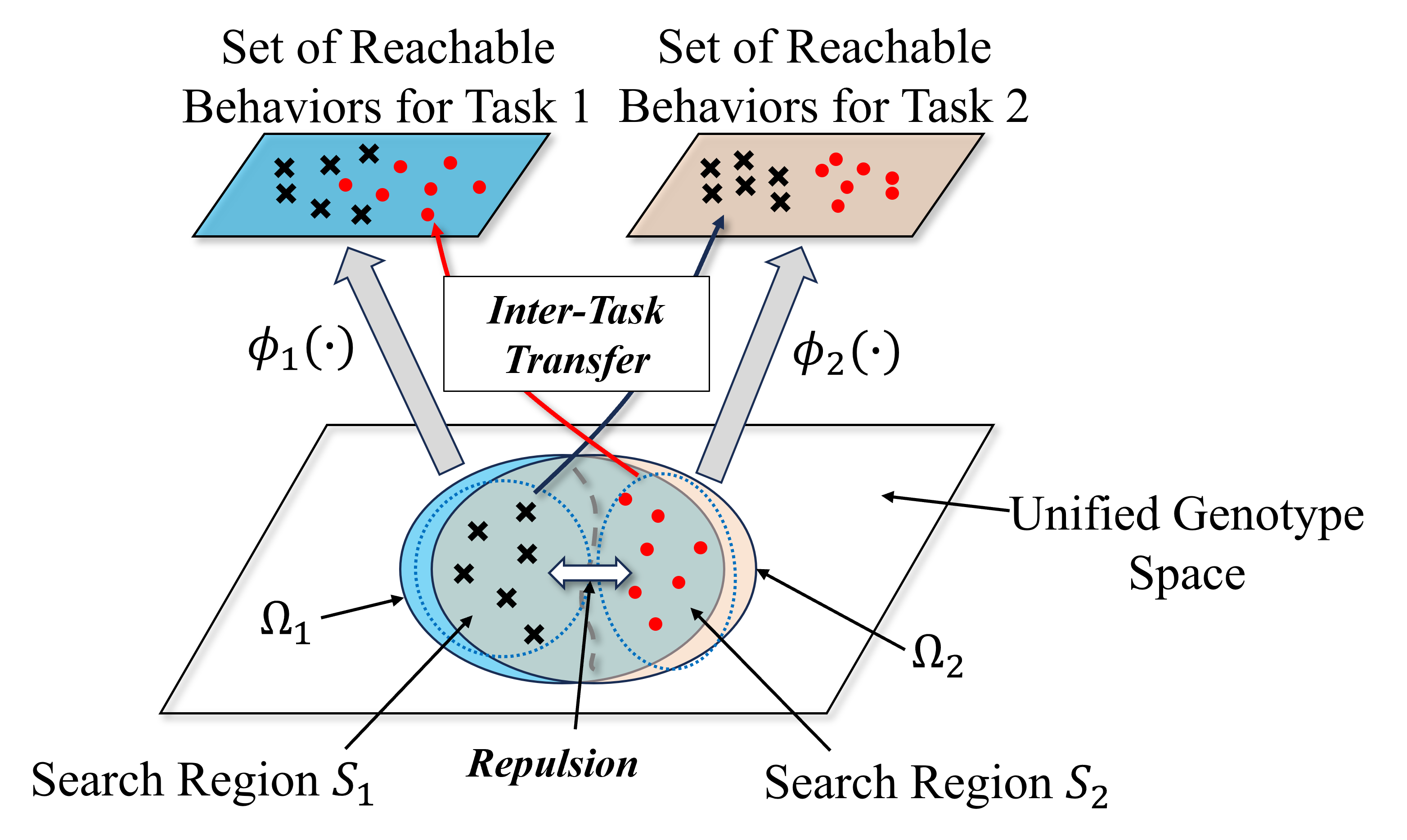}}
		\subfigure[]{\label{fig:scenario_2}\includegraphics[width=0.65\columnwidth]{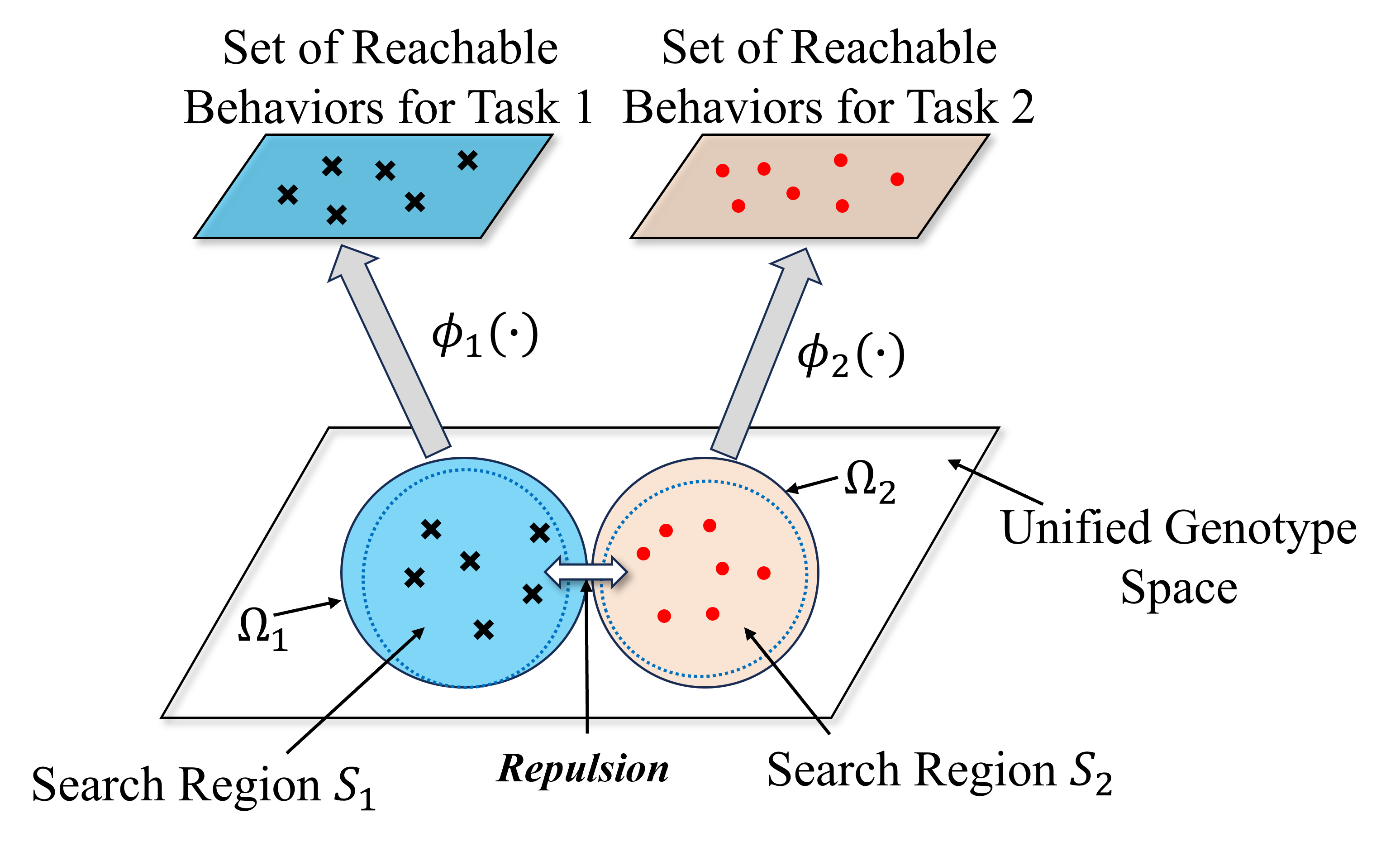}}
        \subfigure[]{\label{fig:scenario_3}\includegraphics[width=0.65\columnwidth]{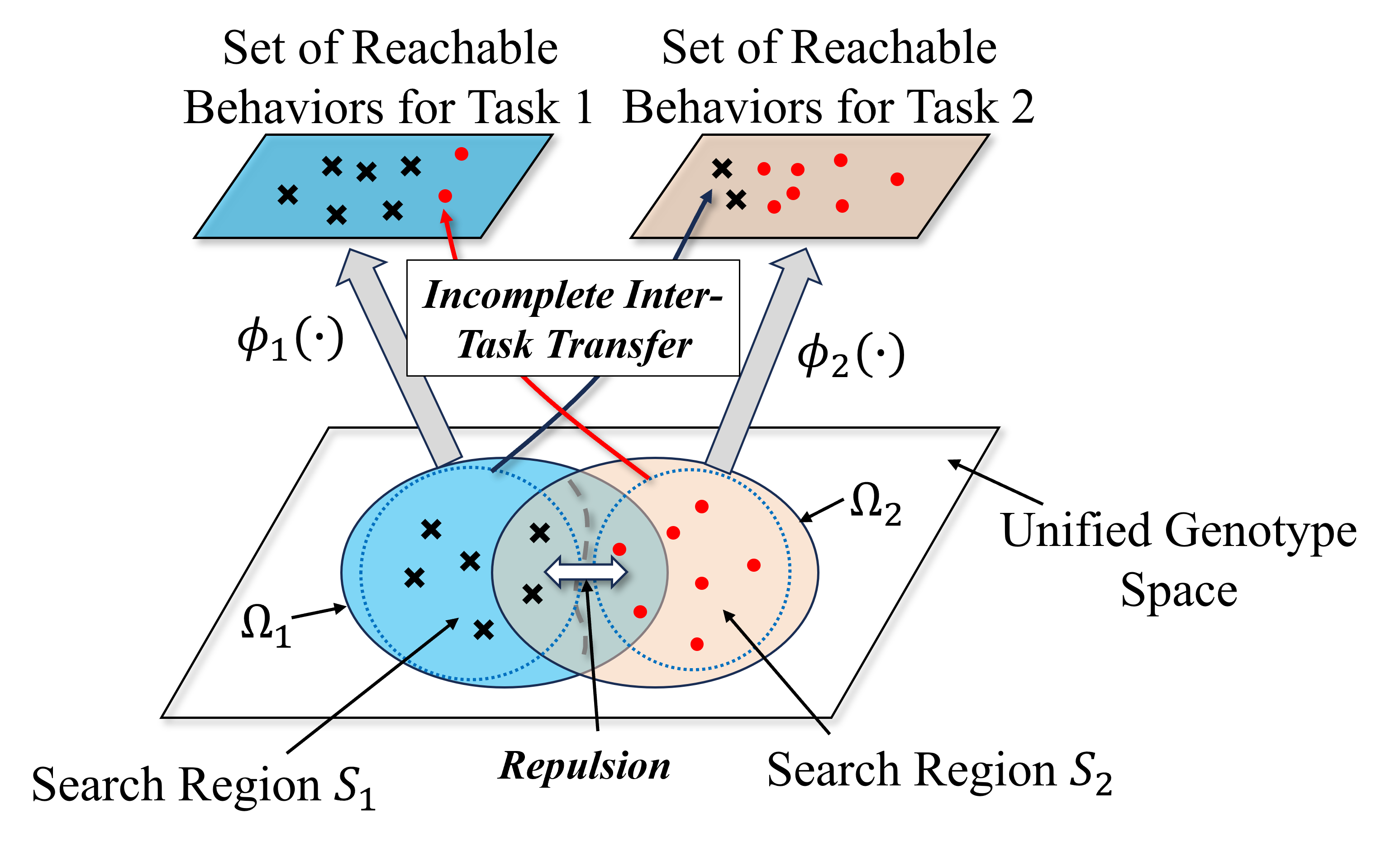}}
        \caption{
        Illustration of multitask novelty search under three representative overlap scenarios of the target genotype sets. (a) \textbf{Scenario I: Overlapping Target Genotype Sets.} Repulsion promotes complementary exploration of the shared search space, while inter-task transfer facilitates efficient sharing of discovered novel solutions. (b) \textbf{Scenario II: Disjoint Target Genotype Sets.} Repulsion provides an inductive bias toward non-overlapping search regions, helping each task rapidly focus on its own target genotype set. (c) \textbf{Scenario III: Partially Overlapping Target Genotype Sets.} Repulsion preserves complementary exploration, while incomplete inter-task transfer enables incomplete sharing of novel solutions within the overlapping region.
        }
        \label{fig:case_convergence}
        \vspace{-12pt}
	\end{center}
\end{figure*}

In EMT, most existing studies aim to accelerate convergence and improve solution quality through cross-task knowledge transfer. In MFEA, for example, inter-task crossover enables the implicit transfer of genetic material among individuals associated with different \textit{skill factors}. Under \textit{simulated binary crossover} (SBX), offspring are generated from a distribution parameterized by the parent solutions, rather than sampled independently of them~\cite{deb1995simulated}. Consequently, crossover between individuals from different tasks introduces a task-coupled search bias, since the offspring inherit information jointly determined by both tasks. In MFEA, such a bias may promote exploration toward mutually relevant regions of the search space, thereby inducing an implicit consistency pressure across tasks.
While such a consistency is beneficial for convergence-oriented optimization when multiple task objectives are positively correlated, it is less well suited to novelty search, where sustained exploration and behavioral diversity are the primary goal~\cite{bai2021multitask}. In contrast to optimization, novelty search seeks broad coverage of the phenotype space rather than concentration around common regions. Therefore, the real value of EMT in this context lies not in consistency, but in ``\textit{collaborative discovery}": by encouraging different tasks to explore distinct rather than overlapping regions, EMT can foster complementary search behaviors and thereby improve the efficiency of discovering diverse and novel solutions. In this subsection, we provide an intuitive explanation of that the joint effect of \textit{repulsion} and \textit{inter-task transfer} can drive multitask evolution toward collaborative discovery.

To provide a clear and rigorous formalization of the potential benefits of repulsion in novelty search, we consider the setting of \textit{multitask pure novelty search}. Without loss of generality, we focus on the two-task case, consisting of Task~1 (i.e., $\mathcal{T}_1$) and Task~2 (i.e., $\mathcal{T}_2$). Let $\mathcal{X}$ denote the \textit{unified} genotype space. For each task $\mathcal{T}_i$ ($i \in \{1,2\}$), let $\phi_i:\mathcal{X} \to \mathcal{B}_i$ denote the mapping from the unified genotype space to the corresponding behavior space, where ${\mathcal{B}}_i$ denotes the overall behavior space associated with task $\mathcal{T}_i$. Let $\tilde{\mathcal{B}}_i \subseteq {\mathcal{B}}_i$ denote the set of reachable behaviors for task $\mathcal{T}_i$. We then define the \textit{target genotype set} $\Omega_i \subseteq \mathcal{X}$ as the set of all genotypes whose induced behaviors are reachable under task $\mathcal{T}_i$, i.e.,
\begin{equation}
    \Omega_i=\left\{x \mid \phi_i(x)\in \tilde{\mathcal{B}}_i\right\}.
\end{equation}
From the perspective of novelty search, the goal is to obtain a representative subset of $\Omega_i$ whose image under $\phi_i$ sufficiently covers the reachable behavior space of interest. According to the geometric relationship between $\Omega_1$ and $\Omega_2$ in the genotype space, the resulting \textit{multitask pure novelty search} problem can be categorized into the following three representative cases.

\begin{itemize}
\item \textbf{Scenario I: Overlapping Target Genotype Sets.} In this scenario, we assume that $\Omega_1$ and $\Omega_2$ exhibit substantial overlap, such that they occupy nearly the same region in the unified genotype space. A promising setting for such a multitask novelty search maybe the \textit{divide-and-conquer} search strategy: instead of repeatedly exploring the same region, the two tasks are encouraged to focus on different \textit{search regions} of the overlapped target set, so as to improve the overall coverage efficiency. Specifically, let $\Omega^{\#} = \bigcup_{i=1,2}\Omega_i$ denote the combined target region, and assume that $\Omega^{\#} \approx S_1 \cup S_2$, where $S_1$ and $S_2$ are disjoint search regions satisfying $S_1 \cap S_2 = \emptyset$, implicitly associated with $\mathcal{T}_1$ and $\mathcal{T}_2$, respectively. Under this partition, each task is expected to concentrate on a different search regions, so that the exploration effort can be distributed more effectively across the shared search space. Intuitively, covering a smaller search region $S_i$ with valid solutions is more efficient than having each task independently attempt to cover the entire set $\Omega_i$. Once the two tasks have sufficiently explored their respective search regions, the coverage of the overlapping target set can be complemented through \textit{inter-task transfer} of the discovered solutions. The key to achieving such a collaborative discovery is to maintain sufficient separation between the search trajectories of the two tasks during exploration. \textit{In this regard, the repulsion mechanism plays a central role by encouraging the two tasks to spread toward complementary regions of the search space, thereby reducing redundancy and improving the overall coverage rate.} A schematic illustration of this scenario is provided in Fig.~\ref{fig:scenario_1}.


\item \textbf{Scenario II: Disjoint Target Genotype Sets.} In this scenario, the target genotype sets $\Omega_1$ and $\Omega_2$ are assumed to be disjoint, i.e., they occupy separate regions of the unified genotype space. Under this condition, repulsion is well aligned with the problem structure of multitask novelty search. Since valid solutions for different tasks are distributed in distinct regions, separating the search region of the tasks discourages aggregation and promotes exploration over mutually exclusive subspaces. As a result, each task can remain focused on its own target genotype set, thereby reducing inter-task interference. \textit{From this perspective, repulsion acts as an inductive bias that encourages task-specific exploration while preserving the parallelism of multitask search.} A schematic illustration of this scenario is provided in Fig.~\ref{fig:scenario_2}.


\item \textbf{Scenario III: Partially Overlapping Target Genotype Sets.}
This is the most general case, where $\Omega_1$ and $\Omega_2$ partially overlap while each still contains task-specific regions; Scenarios~I and~II can therefore be viewed as two limiting cases. In this setting, repulsion serves as a persistent mechanism to maintain complementary search regions for different tasks, whereas an incomplete inter-task transfer is suggested to be implemented according to the degree of overlap between the target genotype sets. When the overlap is large, more transfer is beneficial for exploring shared structure; when the overlap is small, transfer should be reduced to avoid ineffective information exchange. \textit{Therefore, efficient multitask novelty search in Scenario~III is characterized by persistent repulsion together with an incomplete and adaptive inter-task transfer process.} A schematic illustration is given in Fig.~\ref{fig:scenario_3}.

\end{itemize}

The above analysis suggests that effective EMT for multitask novelty search requires the joint action of two mechanisms: (i) \textbf{repulsion}, which promotes the searching over distinct regions, and (ii) \textbf{inter-task transfer}, which exploits the shared region when beneficial. 

\subsection{Theoretical Analysis of Multitask Novelty Search}
Further, we formalize the benefit of solving multiple novelty search tasks concurrently. The key question is whether an overlap-aware partition of the shared genotype region, together with inter-task transfer, can attain the same novelty-coverage purpose with fewer evaluations than independent exploration. The following analysis demonstrates this possibility.

\begin{theorem}
\label{thm:multitask_novelty_overlap}
Consider two pure novelty search tasks defined on a common genotype space $\mathcal{X}$. For each task $i\in\{1,2\}$, let
\begin{equation*}
\begin{aligned}
\phi_i:\mathcal{X}\to\mathcal{B}_i
\end{aligned}
\end{equation*}
denote the task-specific behavior mapping, and let $\Omega_i = \{x \mid \phi_i(x)\in \tilde{\mathcal{B}}_i\} \subseteq\mathcal{X}$ denote the corresponding target genotype set, where $\tilde{\mathcal{B}}_i$ is the set with reachable behaviors. Define the task-exclusive and overlapping regions by
\begin{equation*}
\begin{aligned}
E_1 &:= \Omega_1\setminus\Omega_2,\\
E_2 &:= \Omega_2\setminus\Omega_1,\\
O &:= \Omega_1\cap\Omega_2.
\end{aligned}
\end{equation*}
Fix a novelty resolution $r>0$. For each task $i$, let $\psi_i^{(r)}(\phi_i(x))$ denote the archive contribution induced by behavior $\phi_i(x)$ at resolution $r$, and define the equivalence relation
\begin{equation*}
\begin{aligned}
x\sim_i y
\quad\Longleftrightarrow\quad
\psi_i^{(r)}(\phi_i(x))=\psi_i^{(r)}(\phi_i(y)).
\end{aligned}
\end{equation*}
Thus, the equivalence classes of $\sim_i$ represent phenotype-resolvable novelty units for Task~$i$.
Assume that the two induced partitions coincide on the overlap region $O$, i.e.,
\begin{equation*}
\begin{aligned}
\forall x,y\in O,\qquad
x\sim_1 y \iff x\sim_2 y.
\end{aligned}
\end{equation*}
Let $\sim_O$ denote this common equivalence relation on $O$, and define
\begin{equation*}
\begin{aligned}
\Pi_O := O/{\sim_O}, \
c := |\Pi_O|.
\end{aligned}
\end{equation*}
Also define
\begin{equation*}
\begin{aligned}
a_1
&:=
\left|
\left\{
Q\in \Omega_1/{\sim_1}:Q\cap E_1\neq\emptyset
\right\}
\right|,\\
a_2
&:=
\left|
\left\{
Q\in \Omega_2/{\sim_2}:Q\cap E_2\neq\emptyset
\right\}
\right|.
\end{aligned}
\end{equation*}
Suppose the following hold:
\begin{enumerate}
    \item Under independent search, Task~$i$ must discover all of its own novelty units. Discovery follows the standard coupon-collector model: if a task has $N$ relevant novelty units, then each evaluation independently samples one of these $N$ units uniformly at random, with repeated hits allowed. Hence the expected number of evaluations required to discover all $N$ units is $NH_N$, where $H_N=\sum_{k=1}^N \frac{1}{k}$.
    
    \item Under multitask search, the shared overlap units are partitioned into two disjoint subsets
    \begin{equation*}
    \begin{aligned}
    \Pi_O^{(1)}\cap \Pi_O^{(2)}&=\emptyset,\\
    \Pi_O^{(1)}\cup \Pi_O^{(2)}&=\Pi_O,
    \end{aligned}
    \end{equation*}
    and Task~$i$ directly explores only the units in $\Pi_O^{(i)}$.
    
    \item Once an overlap novelty unit in $\Pi_O$ is discovered by one task, it is transferred to the other task at no additional evaluation cost.
\end{enumerate}

Let
$c_1 := |\Pi_O^{(1)}|,
c_2 := |\Pi_O^{(2)}|,
c_1+c_2=c.$
Let $T_{\mathrm{ind}}$ and $T_{\mathrm{mt}}$ denote the total evaluation costs under the independent and multitask regimes, respectively. Then
\begin{equation*}
\begin{aligned}
\mathbb{E}[T_{\mathrm{ind}}]
&=
(a_1+c)H_{a_1+c}
+
(a_2+c)H_{a_2+c},
\end{aligned}
\end{equation*}
and
\begin{equation*}
\begin{aligned}
\mathbb{E}[T_{\mathrm{mt}}]
&=
(a_1+c_1)H_{a_1+c_1}
+
(a_2+c_2)H_{a_2+c_2}.
\end{aligned}
\end{equation*}
Moreover,
\begin{equation*}
\begin{aligned}
\mathbb{E}[T_{\mathrm{mt}}]
\le
\mathbb{E}[T_{\mathrm{ind}}],
\end{aligned}
\end{equation*}
with strict inequality whenever $c>0$.
\end{theorem}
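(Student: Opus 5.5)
The plan has three stages: count the novelty units each task needs, evaluate the two costs with the coupon-collector formula of assumption~1, and compare them through monotonicity of $g(N):=NH_N$.

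\textbf{Counting units.} I would first show that the units Task~$i$ must discover under independent search number exactly $a_i+c$. The classes of $\Omega_i/{\sim_i}$ split into those meeting $E_i$ (there are $a_i$ of them) and those contained in $O$. The classes contained in $O$ coincide with the classes of $\Pi_O$ by the assumption that the partitions agree on $O$.

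This is the delicate step. A class $Q$ could in principle meet both $E_i$ and $O$, and then it would be counted once in $a_i$ and also correspond to a unit of $\Pi_O$. I would handle this by reading the statement's bookkeeping as intended: the relevant units for Task~$i$ are the $a_i$ exclusive units together with the $c$ overlap units, treated as distinct coupons. Equivalently, one can assume $\psi_i^{(r)}$ separates $E_i$ from $O$. I would state this interpretation explicitly and proceed. Assumption~1 then gives
\[
\mathbb{E}[T_{\mathrm{ind}}]=g(a_1+c)+g(a_2+c).
\]

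\textbf{Multitask cost.} Under assumptions~2 and~3, Task~$i$ directly samples only its $a_i$ exclusive units plus the $c_i$ units of $\Pi_O^{(i)}$. The remaining $c-c_i$ overlap units arrive by transfer at zero evaluation cost. Modeling each task's direct exploration as a coupon collector over $a_i+c_i$ units and summing expectations by linearity gives
\[
\mathbb{E}[T_{\mathrm{mt}}]=g(a_1+c_1)+g(a_2+c_2).
\]
Only linearity is needed here, not independence between the tasks. After both tasks finish, transfer ensures each task holds all of $\Pi_O$, so the multitask regime achieves the same coverage goal.

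\textbf{Comparison.} The function $g$ is strictly increasing on nonnegative integers, since
\[
g(N+1)-g(N)=H_{N+1}+\frac{N}{N+1}\cdot 0+\ldots=H_N+1>0,
\]
using $(N+1)H_{N+1}=(N+1)H_N+1$. Since $c_i\le c$, each term satisfies $g(a_i+c_i)\le g(a_i+c)$, which gives $\mathbb{E}[T_{\mathrm{mt}}]\le\mathbb{E}[T_{\mathrm{ind}}]$.

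For strictness, suppose $c>0$. Because $c_1+c_2=c$ with $c_i\ge 0$, at least one $c_j<c$ whenever both tasks share the load. The only exception is a partition like $c_1=c$, $c_2=0$. Even then $c_2=0<c$, so the second term drops strictly: $g(a_2)<g(a_2+c)$. In general, $c_1+c_2=c>0$ forces $\min(c_1,c_2)\le c/2<c$, so at least one term is strictly smaller, and strict inequality follows.
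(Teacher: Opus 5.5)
Your proposal is correct and follows the paper's proof essentially step for step: you count $a_i+c$ units under independent search versus $a_i+c_i$ under multitask search, apply the coupon-collector formula, and use that $g(N)=NH_N$ is strictly increasing together with the impossibility of $c_1=c_2=c$ when $c>0$. Your caveat that a $\sim_i$-class could meet both $E_i$ and $O$ is correct, so the count $a_i+c$ tacitly assumes $\psi_i^{(r)}$ separates them; the paper passes over this silently. Separately, you should clean up the garbled middle expression in your display for $g(N+1)-g(N)$, since the identity $(N+1)H_{N+1}=(N+1)H_N+1$ alone gives $H_N+1$.
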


The proof of \textbf{Theorem~\ref{thm:multitask_novelty_overlap}} can be found in Section S-I of the supplementary file. This theorem formalizes a sufficient condition under which multitask novelty search provably improves sample efficiency. When the two tasks induce the same novelty partition on the overlap region, the shared portion of the search space decomposes into genuinely common novelty units. Independent search pays the discovery cost of these shared units twice, once per task, whereas multitask search can allocate them across tasks and transfer them for free. The expected savings therefore arise entirely from eliminating redundant discovery of shared novelty units. Although derived under an idealized abstraction with several assumptions, this theoretical analysis captures the core intuition behind novelty sharing in multitask search and provides useful guidance for practical algorithm design. Motivated by this principle, we next develop MFEA-CoD as a practical algorithm for the multitask novelty search.

\section{MFEA-CoD for Pure Novelty Search}
In this section, the technical components of MFEA-CoD on the pure novelty search are presented in detail. We first outline the overall framework to provide a holistic view, and then delve into the designed \textit{multitask repulsive operator} and the \textit{inter-task transfer probability adaption}.

\subsection{Overall Framework of the MFEA-CoD}
{
\setlength{\textfloatsep}{10pt}
\begin{algorithm}[!t]
    \caption{MFEA-CoD}
    \label{Alg:MFEA-Repulsion}
    \begin{algorithmic}[1]
        \REQUIRE $I$: Number of tasks $\{\mathcal{T}_i\}_{i=1}^I$; $K$: Number of emitters per task; $N$: Batch size; $t_{\max}$: Max generations; $\{\mathcal{A}_i\}_{i=1}^I$: Initial archives.
        \ENSURE Updated archives $\{\mathcal{A}_i\}_{i=1}^I$.
        
        \STATE Initialize the inter-task transfer probability matrix $\textit{ITP} \in [0, 1]^{I \times I}$;
        
        \FOR{$i = 1$ \TO $I$}
            \STATE Extract emitter set $\mathcal{E}_i$ such that $|\mathcal{E}_i| = K$;
            \STATE Assign source skill factor $\tau(e) = i, \forall e \in \mathcal{E}_i$;
        \ENDFOR
        \STATE Assemble unified emitter pool $\mathcal{E} = \bigcup_{i=1}^{I} \mathcal{E}_i$;
        
        \FOR{$t = 1$ \TO $t_{\max}$}
            \FOR{$l = 1$ \TO $ I \times K $}
                \STATE Generate $N$ temporary candidates $\{\hat{x}_n\}_{n=1}^{N}$ based on $e_{l}$;
                \STATE Set temporary skill factor $\tau'_n = \tau(e_l)$;
                \STATE Determine target skill factor $\tau_n$ for $\hat{x}_n$ by sampling from the categorical distribution defined by the $\tau'_n$-th row of $\textit{ITP}$;
                \STATE $x_n \leftarrow \textit{MultitaskRepulsion}(\hat{x}_n)$ \hfill $\triangleright$ See \textbf{Algorithm 2}
                \STATE Evaluate $x_n$ on task $\mathcal{T}_{\tau_n}$ to obtain the behavior descriptor $b_{\tau_n}(x_n)$;
                \STATE Collect information: $\mathcal{P}_l = \{ (x_n, \tau_n, b_{\tau_n}(x_n) ) \}_{n=1}^{N}$;
            \ENDFOR
            \STATE Update archives $\{\mathcal{A}_i\}_{i=1}^I$ with $\mathcal{P}$;
            \STATE Calculate scalar fitness for $\mathcal{P} = \bigcup_{l=1}^{I \times K} \mathcal{P}_l$;
            \STATE Update $\mathcal{E}$ according to the scalar fitness;
            \STATE Update $\textit{ITP}$ via \textbf{Algorithm 3};
        \ENDFOR
    \end{algorithmic}
\end{algorithm}
}

To provide an overview of the proposed MFEA-CoD, we first present its procedural framework. Consistent with divergent search paradigms~\cite{tjanaka2023pyribs}, the proposed framework adopts \emph{emitters} as the basic evolutionary units. This description makes MFEA-CoD readily compatible with a broad class of emitter-based illumination and novelty search algorithms. The overall procedure is summarized in \textbf{Algorithm~1}\footnote{Additional implementation details beyond the pseudocode are provided in Section S-III of the supplementary file.}, and its key components are described below:
\begin{itemize}
    \item \textbf{Initialization}: The procedure begins by initializing the inter-task transfer probability matrix and the emitter set. Suppose that $I$ tasks are optimized concurrently. An inter-task transfer probability matrix $\textit{ITP}\in[0,1]^{I\times I}$ is first constructed. Subsequently, for each task $\mathcal{T}_i$, $K$ emitters are initialized, yielding a unified emitter pool $\mathcal{E}=\{e_1,e_2,\dots,e_{I\times K}\}$. Each emitter $e\in\mathcal{E}$ is associated with a skill factor $\tau(e)\in\{1,\dots,I\}$, which specifies the task domain to which it belongs.

    \item \textbf{Offspring Generation}: In each generation, every emitter $e \in \mathcal{E}$ produces $N$ temporary candidate solutions, denoted by $\{\hat{x}_n\}_{n=1}^{N}$, using its emitter-specific reproduction operator. Each candidate solution is assigned a temporary skill factor $\tau'_n=\tau(e)$, indicating its originating task. This process is repeated until all emitters have generated their corresponding candidate solutions.
    
    \item \textbf{Inter-Task Transfer and Repulsion}: This stage performs inter-task transfer followed by repulsive refinement.
        \begin{itemize}
            \item \textit{Probabilistic Transfer}: To allow an emitter specialized for task $i$ to contribute to other tasks, each temporary candidate $\hat{x}_n$ may be reassigned a target skill factor $\tau_n \neq i$. This reassignment is sampled from a categorical distribution parameterized by the $\tau'_n$-th row of the \textit{ITP}.
        
            \item \textit{Repulsive Refinement}: After transfer, a \textit{multitask repulsive operator} (see \textbf{Algorithm~2} in Section~\ref{sec:repulsion}) is applied to $\hat{x}_n$. This operator adjusts the candidate in order to reduce redundant search behavior, thereby producing the final offspring $x_n$.
        \end{itemize}
    
    \item \textbf{Evaluation and Update}: Each offspring $x_n$ is evaluated exclusively on task $\mathcal{T}_{\tau_n}$ to derive its behavioral descriptor $b_{\tau_n}(x_n)$. These descriptors are then utilized to update the task-specific archives $\{\mathcal{A}_i\}_{i=1}^I$. Similar with the MFEA paradigm~\cite{gupta2015multifactorial}, a scalar fitness is computed for each individual based on its performance relative to its assigned skill factor, which subsequently guides the evolution of emitters in $\mathcal{E}$.

    \item \textbf{Inter-Task Transfer Probability Adaptation}: The \textit{ITP} matrix is updated at the end of each generation. This is achieved via the adaptive strategy detailed in \textbf{Algorithm 3} (Section~\ref{sec:itp_adaptation}), which leverages the success rates of historical transfers to bias future inter-task interactions.
\end{itemize}

The evolution process described in Steps~7--20 is repeated until a predefined termination criterion is met. Upon termination, the algorithm returns the task-specific novelty archives $\{\mathcal{A}_i\}_{i=1}^I$, which collectively characterize the explored phenotype spaces. Each archive stores a diverse set of solutions together with their corresponding genotypes and phenotype descriptors.

\subsection{Multitask Repulsive Operator}\label{sec:repulsion}

In this subsection, we introduce the \textit{multitask repulsive operator} for multitask novelty search. The main idea is to treat the solution distributions induced by the task-specific archives as repulsive sources, such that newly generated offspring are encouraged to move toward under-explored regions of the unified genotype space.

For each task $\mathcal{T}_i$, a temporary archive $\bar{\mathcal{A}}_i$ is constructed from the most recently collected $N_i^{\text{tem}}$ solutions. The spatial distribution of $\bar{\mathcal{A}}_i$ is then approximated by a multivariate Gaussian model with diagonal covariance. This diagonal approximation is computationally efficient as it avoids estimating full cross-dimensional dependencies. Accordingly, for task $\mathcal{T}_i$, we define the centroid $\boldsymbol{\mu}_i=[\mu_{i,1},\dots,\mu_{i,D}]^{T}$ and the variance vector $\boldsymbol{\sigma}_i^2=[\sigma_{i,1}^2,\dots,\sigma_{i,D}^2]^{T}$, where
\[
\mu_{i,d}=\frac{1}{N^{\text{tem}}_i}\sum_{\mathbf{x}\in \bar{\mathcal{A}}_i} x_d, \
\sigma_{i,d}^2=\frac{1}{N^{\text{tem}}_i-1}\sum_{\mathbf{x}\in \bar{\mathcal{A}}_i}(x_d-\mu_{i,d})^2,
\]
for each dimension $d\in\{1,\dots,D\}$. The resulting probability density function associated with task $\mathcal{T}_i$ is given by
\begin{equation}
P_i(\bar{\mathbf{x}})
=
\prod_{d=1}^{D}
\frac{1}{\sqrt{2\pi \sigma_{i,d}^2}}
\exp\!\left(
-\frac{(\bar{x}_d-\mu_{i,d})^2}{2\sigma_{i,d}^2}
\right).
\end{equation}
This model characterizes the occupancy of task $\mathcal{T}_i$ in the decision space. Based on it, the proposed repulsive operator shifts a candidate solution away from regions with high archive density.

Given a candidate solution $\hat{\mathbf{x}}$ and the temporary archives $\{\bar{\mathcal{A}}_i\}_{i=1}^{I}$, we first define the normalized distance vector $\boldsymbol{\delta}_i$ for task $\mathcal{T}_i$ as
\begin{equation}
\delta_{i,d}=\frac{\hat{x}_d-\mu_{i,d}}{\sigma_{i,d}},
\qquad d\in\{1,\dots,D\}.
\end{equation}
The corresponding repulsive velocity vector is then formulated as
\begin{equation}
\mathbf{v}_i
=
\eta \cdot
\exp\!\left(-\frac{1}{2}\boldsymbol{\delta}_i^2\right)
\odot
\operatorname{sgn}(\boldsymbol{\delta}_i)
\odot
\boldsymbol{\sigma}_i,
\end{equation}
where $\odot$ denotes the Hadamard product and $\eta$ is the repulsion step-size parameter. Intuitively, the exponential term modulates the repulsion intensity according to the candidate's relative position with respect to the temporary archive centroid, so that the repulsive effect is stronger in regions that are more densely occupied. The sign term determines the repulsion direction, and $\boldsymbol{\sigma}_i$ scales the displacement according to the spread of the task distribution.

{
\setlength{\textfloatsep}{10pt}
\begin{algorithm}[!t]
    \caption{Multitask Repulsive Operator}
    \label{Alg:repulsion}
    \begin{algorithmic}[1]
        \REQUIRE Candidate solution $\hat{\textbf{x}}$, archive set $\{\mathcal{A}_{i}\}_{i=1}^{I}$, parameter $\eta$ and $N_{i}^{\text{tem}}$.
        \ENSURE Repelled solution ${\textbf{x}}$.
        \FOR {$i = 1 \textbf{ to } I$}
        \STATE Construct the temporary archive $\bar{\mathcal{A}}_i$ based on the recent $N_{i}^{\text{tem}}$ solutions added in $\mathcal{A}_{i}$.
        \STATE $\boldsymbol{\mu}_{i} \gets \text{mean}(\bar{\mathcal{A}}_{i})$;
        \STATE $\boldsymbol{\sigma}_{i} \gets \text{std}(\bar{\mathcal{A}}_{i})$;
        \STATE $\boldsymbol{\delta}_{i} \gets (\hat{\textbf{x}} - \boldsymbol{\mu}_{i}) / \boldsymbol{\sigma}_{i}$;
        \STATE $\mathbf{v}_{i} \gets \eta \cdot e^{-\frac{1}{2}\boldsymbol{\delta}_{i}^2} \odot \text{sgn}(\boldsymbol{\delta}_{i}) \odot \boldsymbol{\sigma}_{i}$;
        \ENDFOR
        \STATE $\textbf{x} = \text{Clip}(\hat{\textbf{x}} + \sum_{i=1}^{I} \mathbf{v}_{i}, \textbf{LB}, \textbf{UB})$;
    \end{algorithmic}
\end{algorithm}
}

Finally, the repelled offspring is obtained by aggregating the repulsive contributions from all task archives and projecting the result back into the feasible search domain:
\begin{equation}\label{eq:repulsion_update}
\mathbf{x}
=
\operatorname{Clip}
\left(
\hat{\mathbf{x}}+\sum_{i=1}^{I}\mathbf{v}_i,\,
\mathbf{LB},\,
\mathbf{UB}
\right),
\end{equation}
where $\mathbf{LB}$ and $\mathbf{UB}$ denote the lower and upper bounds of the genotype space, respectively. The pseudocode of the proposed multitask repulsive operator is provided in \textbf{Algorithm~2}.

\subsection{Inter-Task Transfer Probability Adaptation}\label{sec:itp_adaptation}
{
\setlength{\textfloatsep}{10pt}
\begin{algorithm}[!t]
\caption{Inter-Task Transfer Probability Adaptation}
\label{alg:itp_adaptation}
\begin{algorithmic}[1]
    \REQUIRE Offspring set $\mathcal{S}^{(t)}$, archive set $\{\mathcal{A}_j^{(t-1)}\}_{j=1}^{I}$, latent matrix $\mathbf{\Theta}^{(t)}$, cumulative return matrix $\mathbf{G}^{(t-1)}$, learning rate $\alpha$, decay factor $\lambda$, regularization factor $\beta$
    \ENSURE Updated transfer probability matrix $\textit{ITP}^{(t+1)}$

    \STATE Compute the contribution matrix $\mathbf{C}^{(t)}=[C_{i,j}^{(t)}]$ according to \eqref{eq:contribution_matrix};
    \STATE Compute the reward matrix $\mathbf{R}^{(t)}$ according to \eqref{eq:retention_reward_refined};
    \STATE Update the cumulative reward matrix according to \eqref{eq:discounted_return_refined};

    \FOR{$i = 1$ to $I$}
        \STATE $b_i^{(t)} \leftarrow \frac{1}{I}\sum_{j=1}^{I} G_{i,j}^{(t)}$;
        \FOR{$j = 1$ to $I$}
            \STATE $A_{i,j}^{(t)} \leftarrow G_{i,j}^{(t)} - b_i^{(t)}$;
            \STATE $\theta_{i,j}^{(t+1)} \leftarrow \theta_{i,j}^{(t)} + \alpha A_{i,j}^{(t)}(1-\rho_{i,j}(\mathbf{\Theta}^{(t)}))$;
        \ENDFOR
        \STATE Calculate $\textit{ITP}^{(t+1)}$ based on $\mathbf{\Theta}^{(t+1)}$ according to \eqref{eq:itp_row_norm};
    \ENDFOR
    \RETURN $\textit{ITP}^{(t+1)}$
\end{algorithmic}
\end{algorithm}
}

The strength of the transfer in MFEA-CoD is governed by the inter-task transfer probability matrix, denoted by $\textit{ITP}$, which controls how frequently offspring generated from one task are reassigned to another task. In this subsection, we introduce how the $\textit{ITP}$ is adaptive online.

Let $\mathbf{\Theta}=[\theta_{i,j}] \in \mathbb{R}^{I\times I}$ be a latent parameter matrix, where $\theta_{i,j}$ characterizes the preference of assigning offspring generated from source task $\mathcal{T}_i$ to target task $\mathcal{T}_j$. The corresponding transfer probability is obtained by row-wise normalization:
\begin{equation}
\label{eq:itp_row_norm}
\text{itp}_{i,j} = \rho_{i,j}(\mathbf{\Theta}) = \frac{e^{\theta_{i,j}}}{\sum_{j'=1}^{I}e^{\theta_{i,j'}}}.
\end{equation}
Thus, each row of $\textit{ITP}$ defines a categorical distribution over target tasks conditioned on the source task, i.e., $\textit{ITP} = [\rho_{i,j}(\mathbf{\Theta})] \in \mathbb{R}^{I\times I}$.
To adapt $\textit{ITP}$, we evaluate the contribution of each source--target transfer channel at every generation. Let $\mathcal{S}^{(t)}$ denote the set of all offspring generated at generation $t$. For each pair $(i,j)$, let $\mathcal{S}_{i,j}^{(t)} \subseteq \mathcal{S}^{(t)}$ denote the subset of offspring that are generated by emitters associated with source task $\mathcal{T}_i$ and evaluated on target task $\mathcal{T}_j$. Let $\mathcal{A}_j^{(t-1)}$ be the archive of task $\mathcal{T}_j$ before inserting the offspring of generation $t$. We define the \emph{contribution} of channel $(i,j)$ at generation $t$ as
\begin{equation}
\label{eq:contribution_matrix}
C_{i,j}^{(t)}
=
\left|
\mathcal{U}_j\!\left(\mathcal{A}_j^{(t-1)}, \mathcal{S}_{i,j}^{(t)}\right)
\right|
-
\left|
\mathcal{A}_j^{(t-1)}
\right|,
\end{equation}
where $\mathcal{U}_j(\cdot,\cdot)$ denotes the archive update operator of task $\mathcal{T}_j$, including the same admission criteria as those used in the actual archive maintenance procedure, such as novelty thresholds and proximity constraints. Then, $C_{i,j}^{(t)}$ quantifies the effective contribution of transfer channel $(i,j)$ to the archive growth of task $\mathcal{T}_j$. Based on the contributions, the reward is then defined based on the \emph{contribution ratio} of each transfer channel, i.e.,
\begin{equation}
\label{eq:retention_reward_refined}
R_{i,j}^{(t)}=\frac{C_{i,j}^{(t)}}{\max\!\left(N_{i,j}^{\text{eval},(t)},1\right)},
\end{equation}
where $N_{i,j}^{\text{eval},(t)}$ is the number of offspring evaluated through the same channel at the $t$-th iteration. Therefore, $R_{i,j}^{(t)}$ measures the effective retention rate of transfer channel $(i,j)$.
To capture the long-term utility of transfer, we further maintain a discounted cumulative reward matrix $\mathbf{G}^{(t)}=[G_{i,j}^{(t)}]$, updated as
\begin{equation}
\label{eq:discounted_return_refined}
\begin{aligned}
\mathbf{G}^{(t)} =\lambda \mathbf{G}^{(t-1)} + & \mathbf{R}^{(t)} - \beta \cdot \left( \textit{ITP}^{(t-1)} - \mathbf{M} \right), \\
\mathbf{G}^{(0)} & =\mathbf{0},
\end{aligned}
\end{equation}
where $\lambda\in[0,1]$ is the decay factor, $\mathbf{R}^{(t)}=[R_{i,j}^{(t)}]$ is the reward matrix, $\mathbf{M}$ is the identity matrix, and $\beta  \cdot \left( \textit{ITP}^{(t)} - \mathbf{M} \right)$ is a regularization term that discourages transfer when task contributions are similar, thus reducing the risk of negative transfer.

The adaptation is performed row-wise based on $\mathbf{G}^{(t)}$. For each source task $\mathcal{T}_i$, we define the baseline as the mean cumulative reward over its outgoing transfer channels:
\begin{equation}
\label{eq:baseline_final}
b_i^{(t)}=\frac{1}{I}\sum_{j=1}^{I}G_{i,j}^{(t)}.
\end{equation}
The corresponding advantage is given by
\begin{equation}
\label{eq:advantage_final}
A_{i,j}^{(t)} = G_{i,j}^{(t)}-b_i^{(t)},
\end{equation}
so that channels with above-average cumulative contribution receive positive reinforcement. Using this advantage signal, the latent parameter matrix $\mathbf{\Theta}$ is updated as
\begin{equation}
\label{eq:theta_update_final}
\theta_{i,j}^{(t+1)}
=
\theta_{i,j}^{(t)}
+
\alpha \, \left[ A_{i,j}^{(t)} \left(1-\rho_{i,j}(\mathbf{\Theta}^{(t)})\right)   \right],
\end{equation}
where $\alpha>0$ is the learning rate. Afterward, the transfer probability matrix $\textit{ITP}$ is recomputed according to \eqref{eq:itp_row_norm}.
The pseudo code of the whole adaptation process is described in \textbf{Algorithm~3}.

\section{MFEA-CoD for Novelty-Augmented Optimization}
Unlike pure novelty search, novelty-augmented optimization still aims to identify the optimal solution with respect to a predefined objective, while leveraging novelty as an auxiliary search signal to alleviate deception and premature convergence in complex fitness landscapes. By discouraging the search from stagnating in well-explored local optima, novelty augmentation facilitates the discovery of higher-quality basins of attraction. To expand the proposed MFEA-CoD framework to novelty-augmented optimization, we draw inspiration from NSRA-ES in~\cite{conti2018improving} and equip MFEA-CoD with the ability to dynamically trade off objective-driven exploitation and novelty-guided exploration. The resulting extension is mainly characterized by the following two strategies:
\begin{itemize}
\item \textbf{Weighted Multitask Repulsive Operator:}
In novelty-augmented optimization, the trade-off coefficient $w$ in \eqref{eq:nov_aug_obj} controls the relative emphasis between objective-driven exploitation and novelty-guided exploration. As the multitask repulsive operator in MFEA-CoD is designed to promote novelty-guided exploration, its influence should be pronounced when novelty is emphasized and suppressed when the search is dominated by objective exploitation. Accordingly, the repulsive operator is reformulated heuristically as
\begin{equation}
\label{eq:rep_update_nao}
\textbf{x} = \mathrm{Clip}\!\left(\hat{\textbf{x}} + \sum_{i=1}^{I} w_i \mathbf{v}_i,\ \textbf{LB},\ \textbf{UB}\right),
\end{equation}
where $w_i$ denotes the trade-off coefficient associated with the $i$th task, and thus directly modulates the magnitude of the corresponding repulsive update. Following NSRA-ES~\cite{conti2018improving}, each $w_i$ is adaptively updated during evolution.

\item \textbf{Objective-Driven Inter-Task Transfer Probability Adaptation:}
Unlike pure novelty search, where the $\textit{ITP}$ matrix is adapted according to the contribution of transferred solutions to archive expansion, novelty-augmented optimization requires the transfer mechanism to favor offspring that improve the objective performance. Accordingly, we redefine the reward signal $R_{i,j}^{(t)}$ to evaluate the effectiveness of each transfer channel from an objective-driven perspective:
\begin{equation}
    R_{i,j}^{(t)} = \frac{1}{N_{i,j}^{\text{eval},(t)}} \sum_{k=1}^{N_{i,j}^{\text{eval},(t)}} f_j\!\left(x_{i,j}^{k,(t)}\right),
\end{equation}
where $f_j(x)$ denotes the normalized objective value for the $j$-th task, and $x_{i,j}^{k,(t)}$ denotes the $k$-th offspring generated through the transfer channel $(i,j)$ at the $t$-th generation.
\end{itemize}
Through these modifications, MFEA-CoD achieves a balance between objective-driven exploitation and novelty-guided exploration, thereby promoting the discovery of high-quality solutions.

\section{Experimental Studies}
\begin{figure}[!t]
	\begin{center}
        \subfigure[]{\label{result1}\includegraphics[width=0.32\columnwidth]{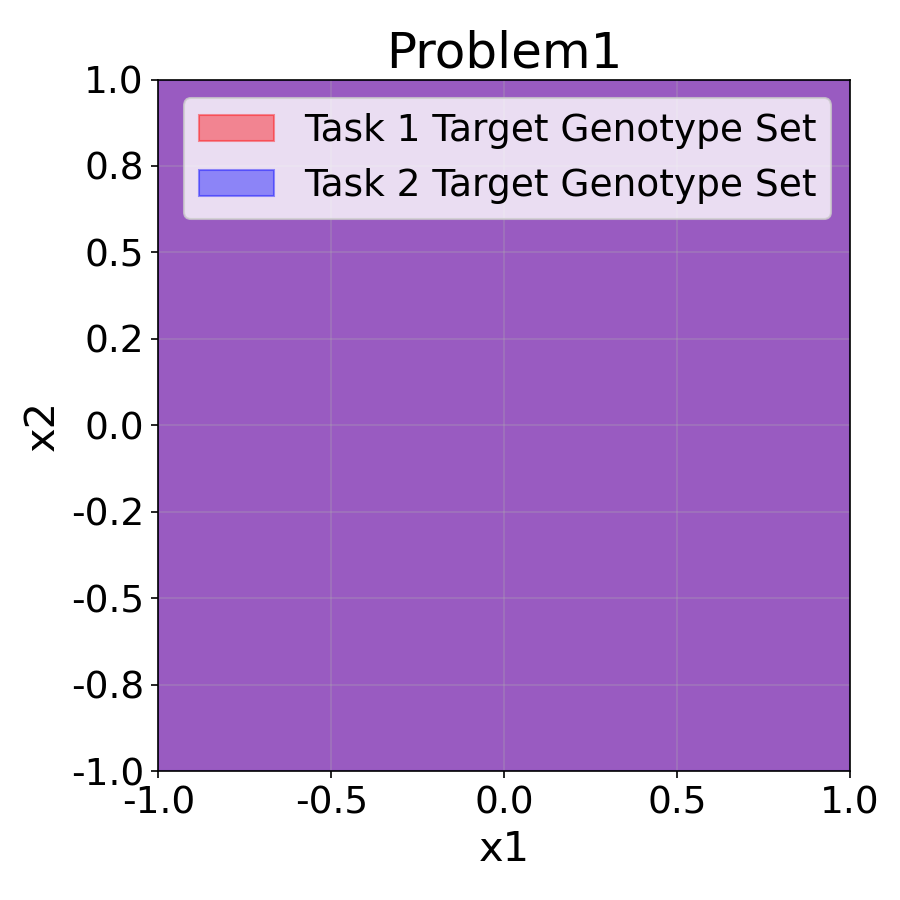}}
		\subfigure[]{\label{result1}\includegraphics[width=0.32\columnwidth]{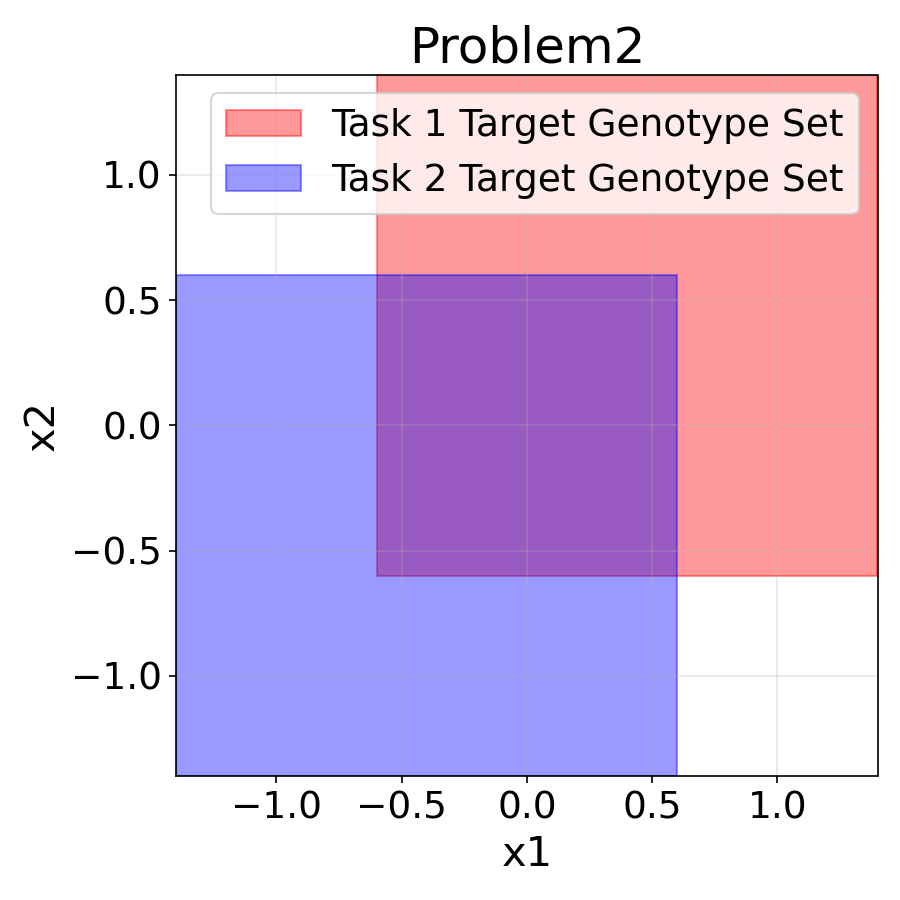}}
        \subfigure[]{\label{result1}\includegraphics[width=0.32\columnwidth]{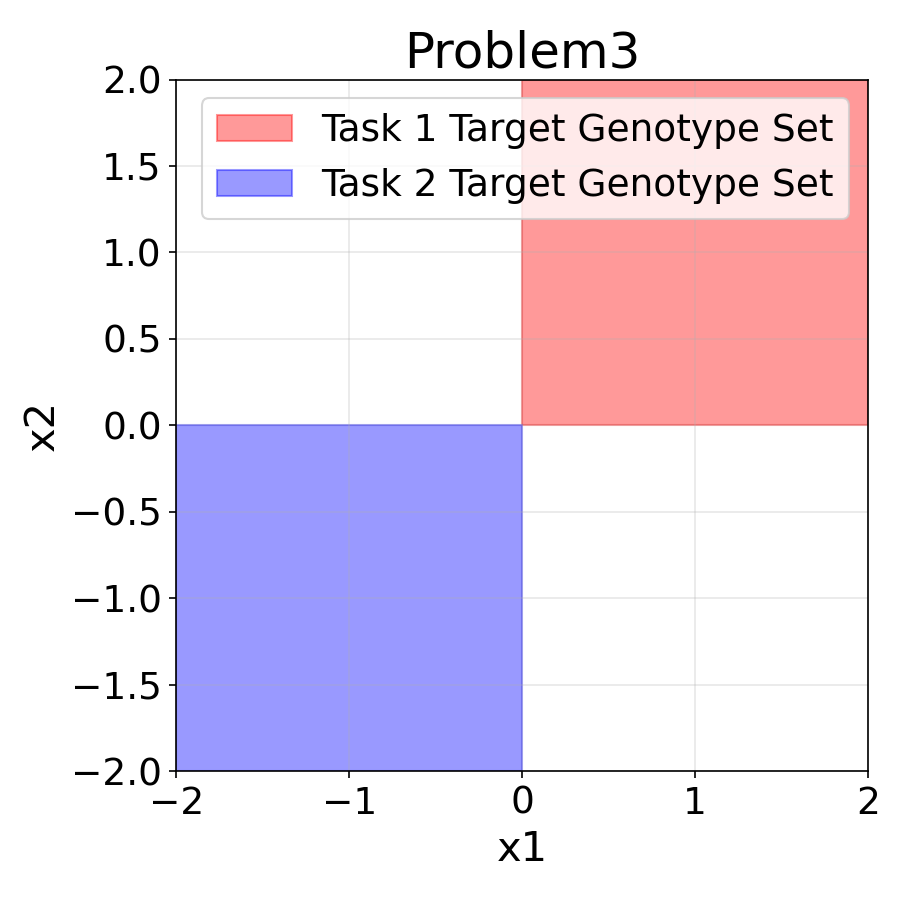}}
        \caption{Target genotype sets of the synthetic basin-type multitask test problems under three representative overlap scenarios. (a) \textbf{Problem 1:} fully overlapping target genotype sets of Task~1 and Task~2. (b) \textbf{Problem 2:} partially overlapping target genotype sets of Task~1 and Task~2. (c) \textbf{Problem 3:} disjoint target genotype sets of Task~1 and Task~2.}
        \label{fig:problems_target_set}
        \vspace{-12pt}
	\end{center}
\end{figure}

\begin{figure*}[!t]
	\begin{center}
        \subfigure[]{\label{result1}\includegraphics[width=0.67\columnwidth]{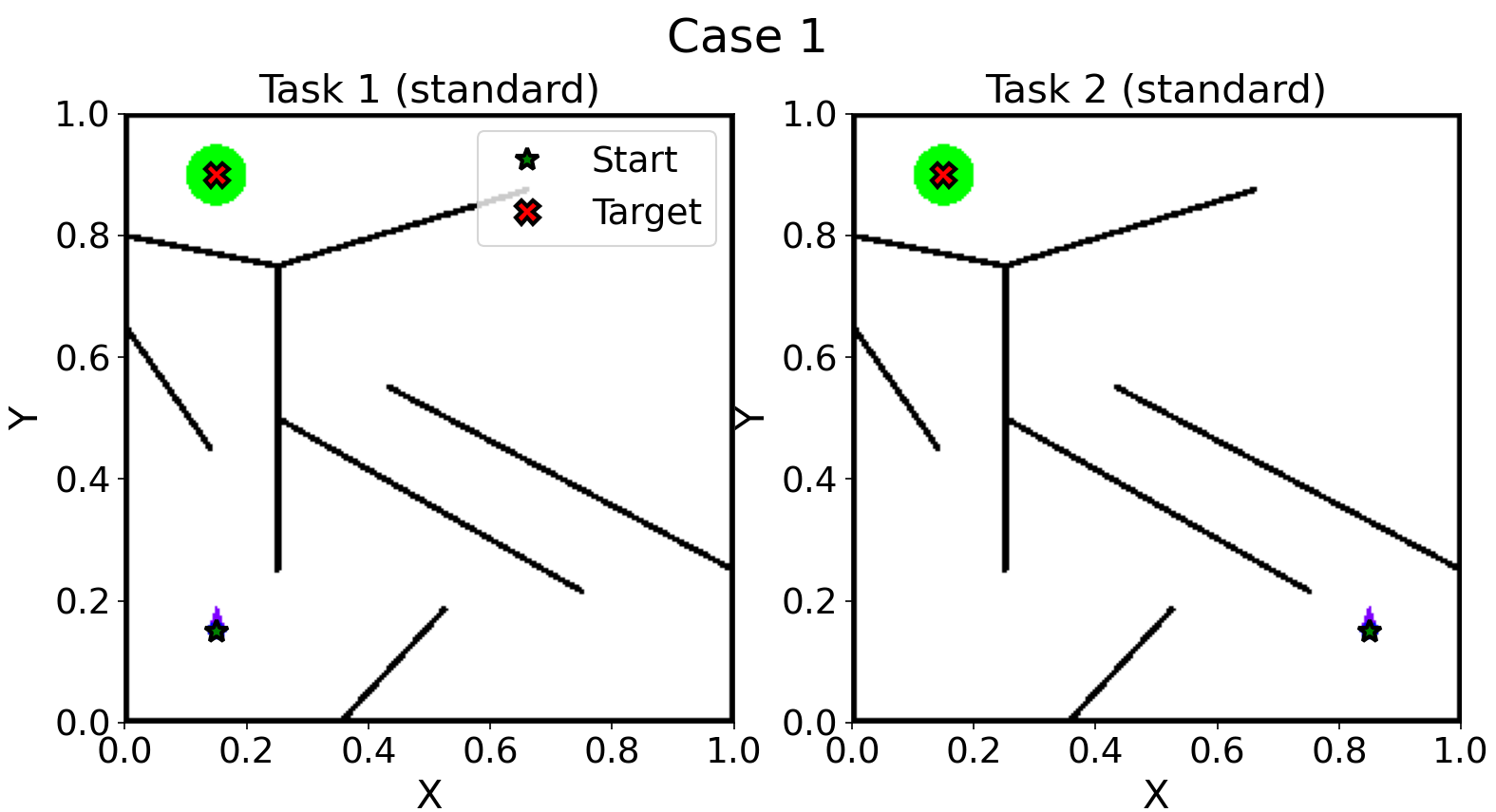}}
		\subfigure[]{\label{result1}\includegraphics[width=0.67\columnwidth]{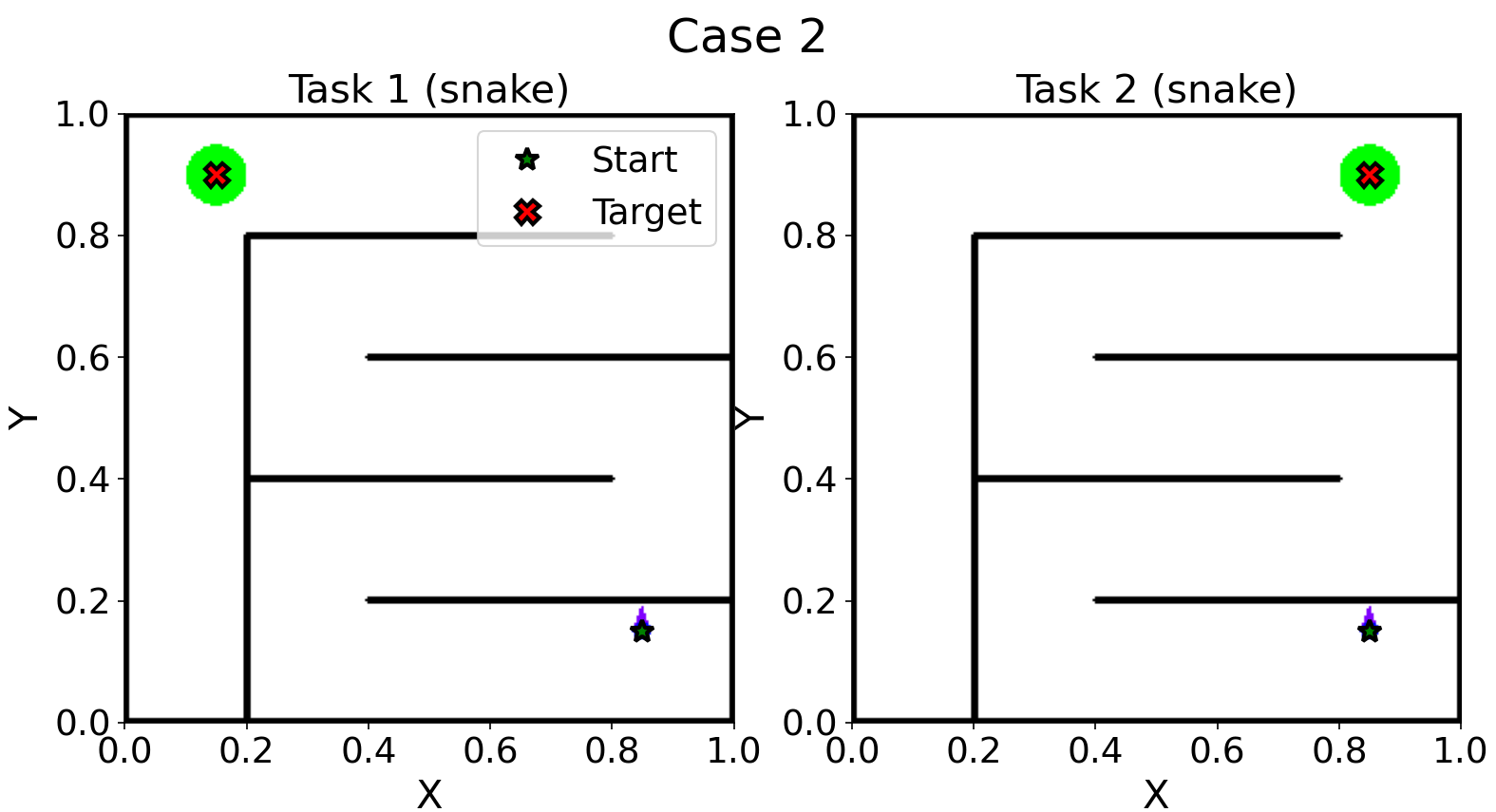}}
        \subfigure[]{\label{result1}\includegraphics[width=0.67\columnwidth]{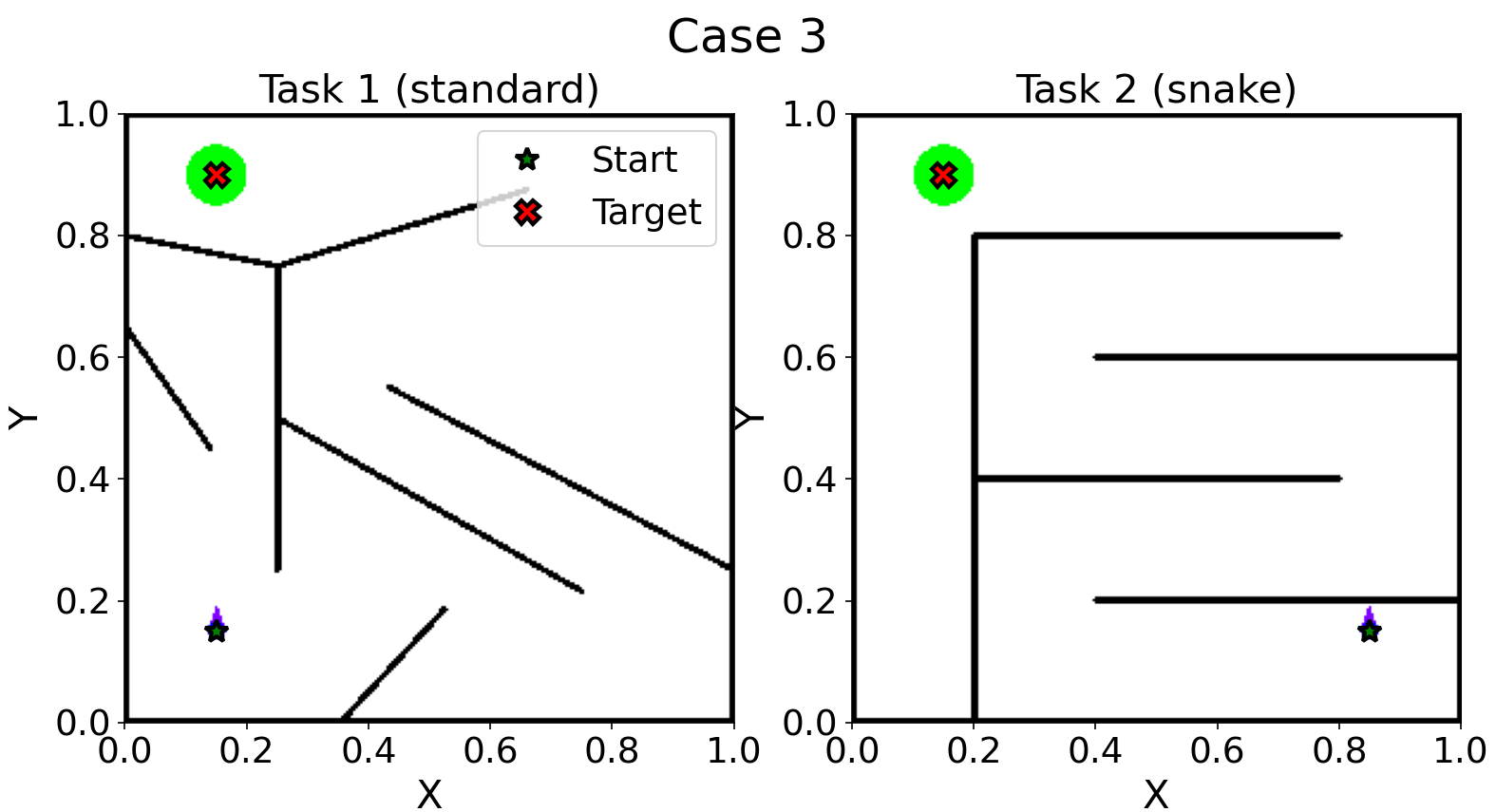}}
        \caption{Illustration of the maze layouts, start positions, and target positions of the three multitask deceptive maze navigation problems. (a) \textbf{Case 1:} identical maze with different start positions. (b) \textbf{Case 2:} identical maze with different target positions. (c) \textbf{Case 3:} different maze environments.}
        \label{fig:problems_maze_settings}
        \vspace{-12pt}
	\end{center}
\end{figure*}

\begin{figure}[!t]
	\begin{center}
        \subfigure[]{\label{result1}\includegraphics[width=0.35\columnwidth]{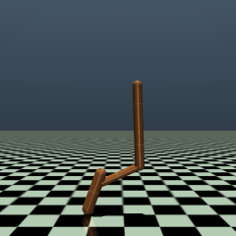}}
		\subfigure[]{\label{result1}\includegraphics[width=0.35\columnwidth]{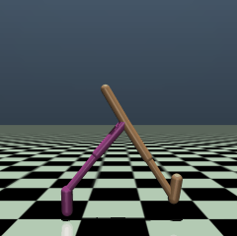}}
        \caption{The considered MuJoCo benchmarks. (a) Hopper. (b) Walker2d.}
        \label{fig:mujoco}
        \vspace{-12pt}
	\end{center}
\end{figure}

\subsection{Benchmark Problems}
In this paper, we consider four groups of benchmark problems with distinct characteristics. First, to analyze the search behavior of MFEA-CoD and the effectiveness of its key components, we construct a family of synthetic problems with two-dimensional genotype and phenotype spaces, enabling direct visualization and interpretation of the search dynamics. Second, deceptive maze navigation tasks~\cite{grillotti2023kheperax} are adopted to evaluate the proposed method on pure novelty search problems. Third, MuJoCo control tasks~\cite{todorov2012mujoco} are introduced to further assess its performance on novelty-regularized optimization problems. Finally, generative-model-based novelty search is considered to visually demonstrate the ability of MFEA-CoD to discover behaviorally diverse and novel samples~\cite{karras2020analyzing}.

\subsubsection{Synthetic Basin-Type Multitask Test Problems}
To enable intuitive and visually interpretable analysis, we construct a family of synthetic multitask novelty search problems. Specifically, three representative two-task settings are considered in the unified genotype space, with fully overlapping, partially overlapping, and disjoint target genotype sets, respectively. The target genotype sets are illustrated in Fig.~\ref{fig:problems_target_set}, and the detailed problem definitions are provided in Section S-II of the supplementary file. These synthetic basin-type problems serve as an effective testbed for examining the exploration dynamics of MFEA-CoD, particularly the interplay between the searching of two tasks under different overlap structures.

\begin{figure}[!t]
	\begin{center}
        \includegraphics[width=0.9\columnwidth]{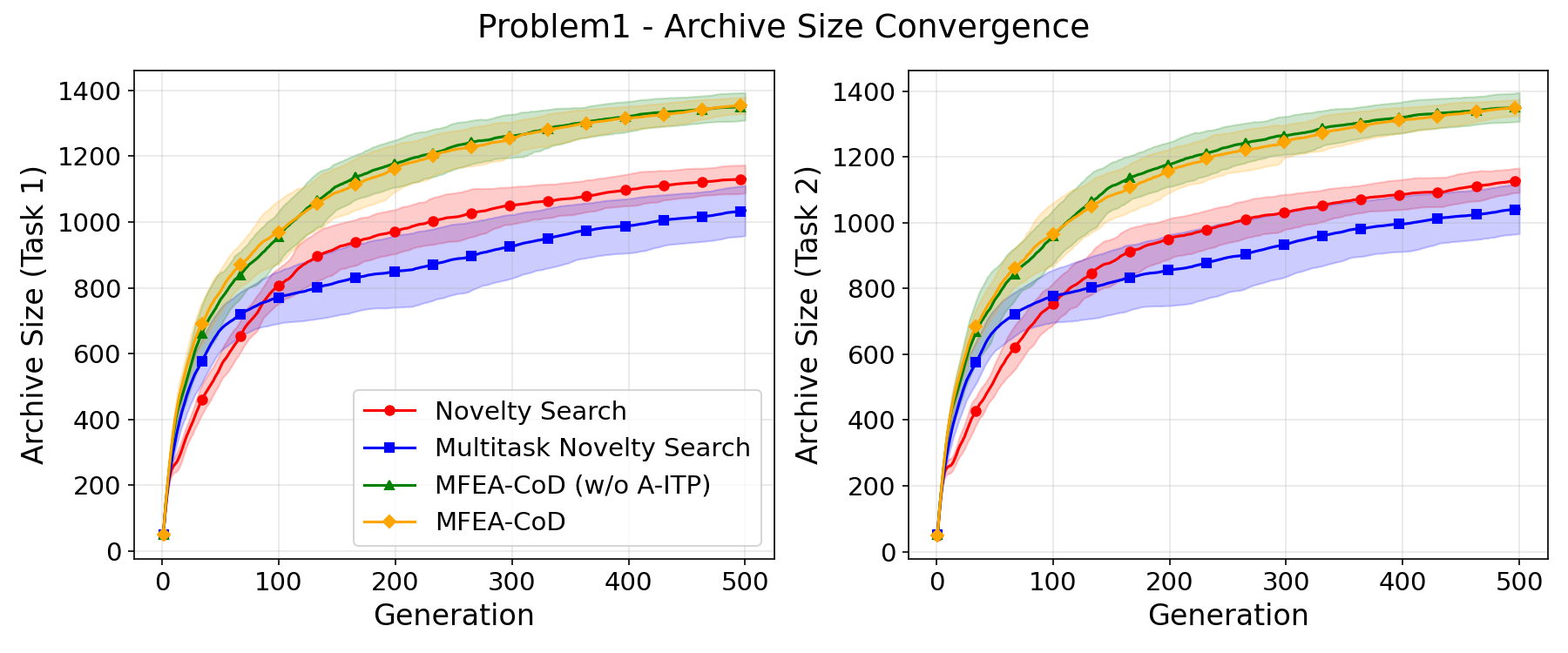}
		\includegraphics[width=0.9\columnwidth]{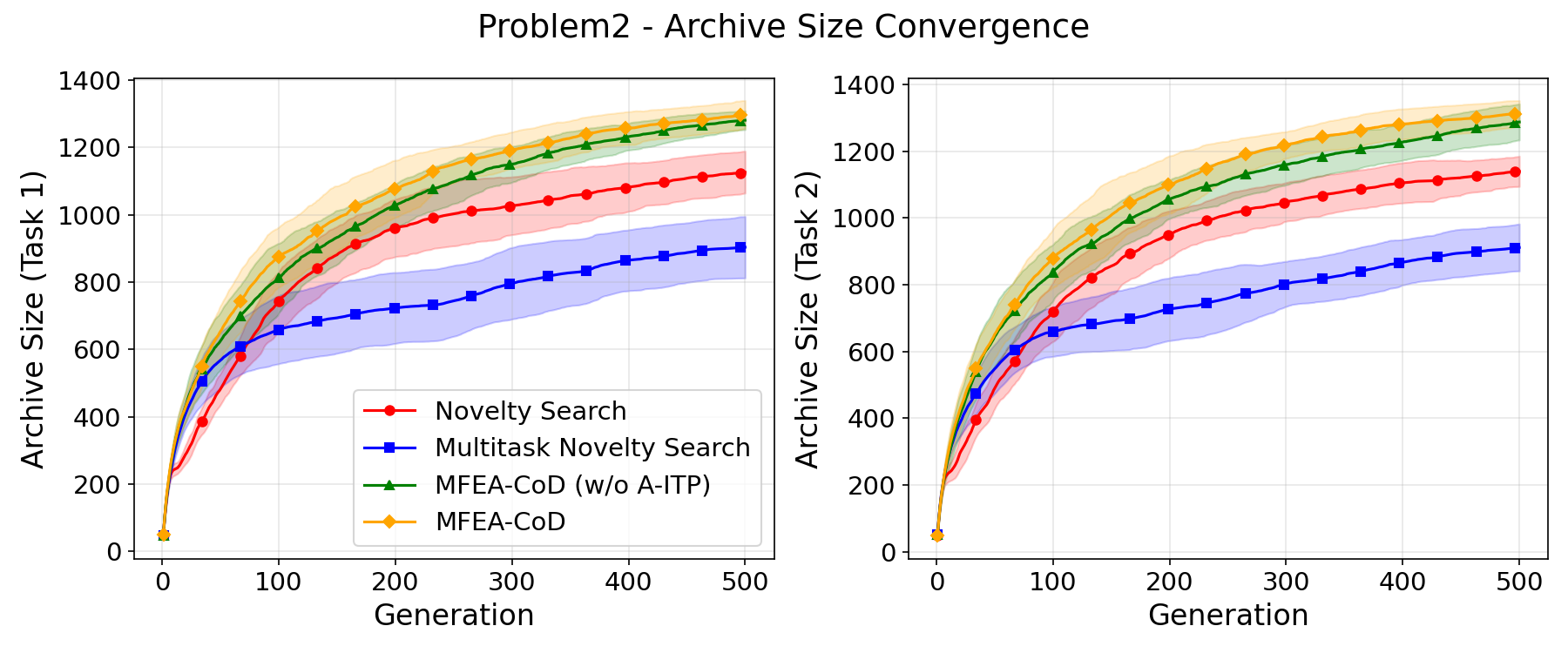}
        \includegraphics[width=0.9\columnwidth]{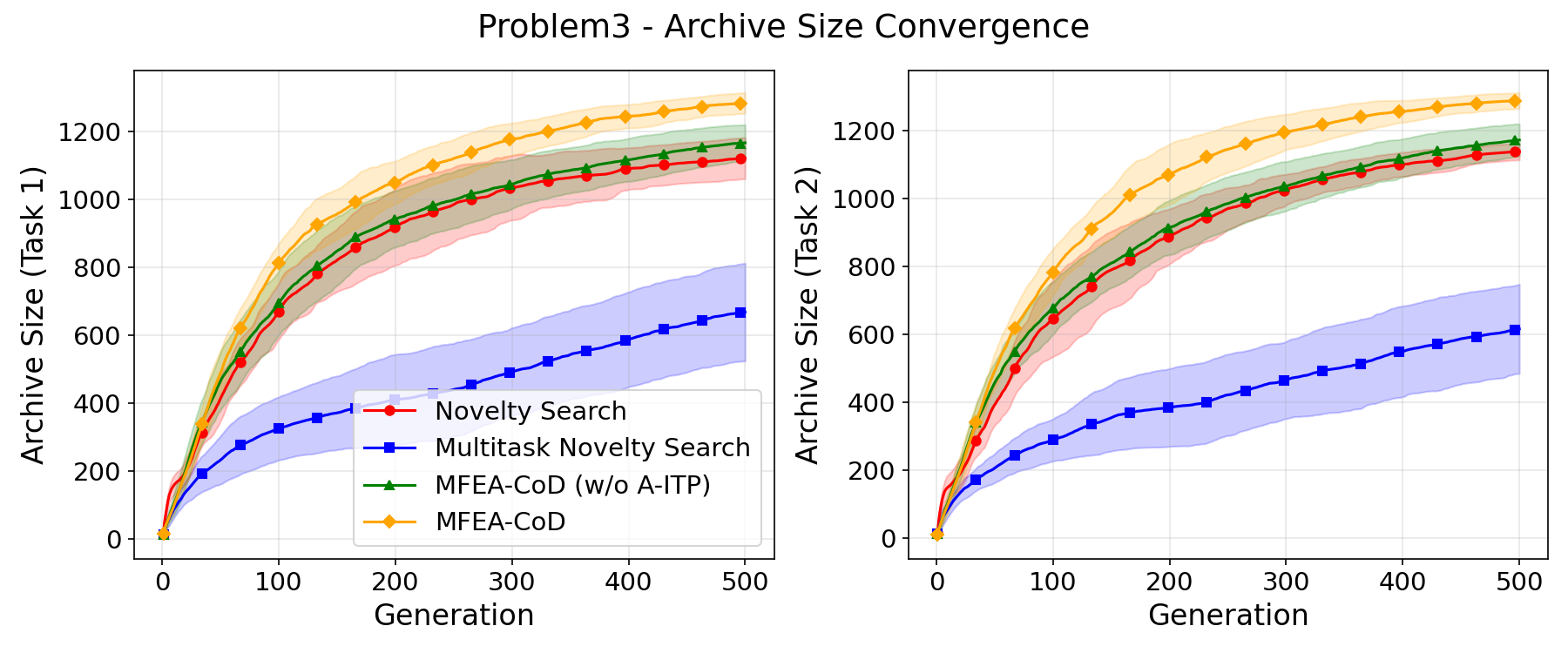}
        \caption{Archive size convergence trends averaged over 20 independent runs of Novelty Search, Multitask Novelty Search, MFEA-CoD (w/o A-ITP), and MFEA-CoD on the synthetic basin-type multitask test problems. 
        }
        \label{fig:problem_convergence}
        \vspace{-12pt}
	\end{center}
\end{figure}

\begin{table*}[!h]
\centering
\caption{Archive size results of Novelty Search, Multitask Novelty Search, MFEA-CoD (w/o A-ITP), and MFEA-CoD on the synthetic basin-type multitask test problems. The Wilcoxon rank-sum test at the $0.05$ significance level was performed to compare MFEA-CoD with its competitors.}
\label{Tab:Problem_Results}
\resizebox{12cm}{!}{\begin{tabular}{c|c|c|c|c}
\hline
\multirow{2}{*}{Problem / Task}
& Novelty Search
& Multitask Novelty Search
& MFEA-CoD (w/o A-ITP)
& MFEA-CoD \\
\cline{2-5}
& Mean$\pm$Std
& Mean$\pm$Std
& Mean$\pm$Std
& Mean$\pm$Std \\
\hline
Problem 1 -- Task 1
& 1130.70$\pm$42.92\;($+$)
& 1034.85$\pm$75.95\;($+$)
& 1351.55$\pm$41.73\;($=$)
& \textbf{1354.70$\pm$24.92} \\
Problem 1 -- Task 2
& 1129.55$\pm$37.12\;($+$)
& 1042.00$\pm$75.34\;($+$)
& \textbf{1351.85$\pm$43.84}\;($=$)
& 1350.30$\pm$25.11 \\
\hline
Problem 2 -- Task 1
& 1126.85$\pm$62.40\;($+$)
& 903.85$\pm$91.46\;($+$)
& 1281.15$\pm$27.90\;($=$)
& \textbf{1297.80$\pm$41.89} \\
Problem 2 -- Task 2
& 1139.95$\pm$45.54\;($+$)
& 911.80$\pm$70.54\;($+$)
& 1287.70$\pm$54.21\;($=$)
& \textbf{1314.40$\pm$37.35} \\
\hline
Problem 3 -- Task 1
& 1120.17$\pm$60.62\;($+$)
& 666.80$\pm$144.05\;($+$)
& 1165.35$\pm$53.47\;($+$)
& \textbf{1283.00$\pm$29.97} \\
Problem 3 -- Task 2
& 1139.56$\pm$24.94\;($+$)
& 617.00$\pm$131.43\;($+$)
& 1173.25$\pm$47.69\;($+$)
& \textbf{1289.50$\pm$23.06} \\
\hline
\multicolumn{1}{c|}{$+/-/=$} & {$6/0/0$}  & {$6/0/0$} & {$2/4/0$} & / \\
\hline
\end{tabular}}
\end{table*}

\subsubsection{Multitask Deceptive Maze Navigation Problems}
The deceptive maze navigation problem is adopted as a representative benchmark for evaluating pure novelty search algorithms~\cite{lehman2011abandoning,pyribs_ns_maze}. In this task, a mobile robot must navigate a two-dimensional maze from a given start position to a target location. The problem is deceptive because greedily approaching the goal may lead the robot into blocked regions rather than along feasible paths. As such, successful search requires exploratory behaviors that may initially move away from the target, making this benchmark particularly suitable for studying novelty search without explicit objective guidance. To evaluate multitask novelty search under different levels of task relatedness, we construct three two-task cases: identical mazes with different start positions, identical mazes with different target positions, and different maze environments. The mazes are selected from two predefined Kheperax~\cite{grillotti2023kheperax} environments, namely \textit{standard} and \textit{snake}. Accordingly, \textit{Case}~1 corresponds to the same maze with different starts, \textit{Case}~2 to the same maze with different targets, and \textit{Case}~3 to different maze environments. For all the tasks, the behavioral descriptor in the phenotype space is defined as the final robot position at the end of the episode. Other detailed settings are given in Section S-II of the supplementary file, and the corresponding maze layouts, start positions, and target positions are shown in Fig.~\ref{fig:problems_maze_settings}.

\subsubsection{Multitask MuJoCo Policy Optimization Problems}
To further assess the proposed method on novelty-augmented optimization problems, we construct a family of multitask MuJoCo~\cite{todorov2012mujoco} benchmarks based on Gymnasium~\cite{towers2024gymnasium} environments. In each benchmark, two tasks share the same observation-action structure and policy parameterization, and differ only in task-specific physical parameters. Two representative two-task problems are considered based on the classical MuJoCo benchmarks including the Hopper and Walker2d (as shown in Fig.~\ref{fig:mujoco}). The first is a \textit{Hopper Gravity} problem, where the two tasks correspond to earth-like and moon-like gravity, respectively. The second is the \textit{Walker2d Friction} problem, where each task pair is defined by normal-friction and low-friction dynamics. For all problems, the objective is the cumulative episode reward. The other detailed settings of the problems, including the definitions of the behavioral descriptors, can be found in Section S-II of the supplementary file.

\subsubsection{Multitask Generative Novelty Search Problems}
To provide a visually interpretable example of multitask novelty search, we construct a multitask generative novelty search problem based on a pretrained StyleGAN2 generator trained on the FFHQ dataset~\cite{karras2020analyzing}. In this problem, candidate solutions are encoded as latent vectors, and search is conducted by varying the latent vectors to control the generated outputs. The two tasks are defined over different phenotype spaces. In the first task, the two behavior descriptors are age and hair length, representing two attributes of the generated images. In the second task, the behavior descriptors are smile expressions and frontal view. These behavior descriptors are evaluated using CLIP~\cite{radford2021learning}, where the generated images and the corresponding text prompts are projected into the CLIP embedding space and their similarity is computed. More detailed settings can be found in Section S-II of the supplementary file.

\subsection{Results on Synthetic Basin-Type Multitask Test Problems}\label{sec:results_problem}
In this subsection, we mainly investigate the performance of the proposed method on the synthetic basin-type multitask test problems. As baselines, MFEA-CoD is compared with the following methods:
\begin{itemize}
    \item \textit{Novelty Search}~\cite{lehman2011abandoning}: This method serves as the standard baseline for pure novelty search. Each task is solved independently using a conventional novelty search algorithm, without any inter-task transfer or repulsion mechanism.
    
    \item \textit{Multitask Novelty Search}: This method serves as a multitask baseline for pure novelty search. It incorporates inter-task transfer with a fixed transfer probability, but does not employ the proposed repulsion mechanism.
    
    \item \textit{MFEA-CoD (w/o A-ITP)}: This variant of the proposed method employs both inter-task transfer and multitask repulsion, while the inter-task transfer probability is fixed rather than being adaptively adjusted.
\end{itemize}
All methods are implemented using the PyRibs library~\cite{tjanaka2023pyribs}. In particular, CMA-ES-based emitters are instantiated with the default update mechanism of PyRibs' \texttt{EvolutionStrategyEmitter} class, while the \texttt{ProximityArchive} provided by PyRibs is adopted to maintain the archive of discovered novel solutions. Detailed parameter settings are provided in Section S-IV of the supplementary file. All of the results are obtained via 20 independent runs. The effectiveness and search characteristics of the proposed method are analyzed from three aspects: 1) the number of solutions retained in the final archive and its convergence trend, 2) the accumulated search distribution in the genotype space, and 3) the dynamic of the adaptive inter-task transfer probabilities in MFEA-CoD. 

\subsubsection{Results of the Archive Size}


\begin{figure*}[!t]
	\begin{center}
        \includegraphics[width=0.9\columnwidth]{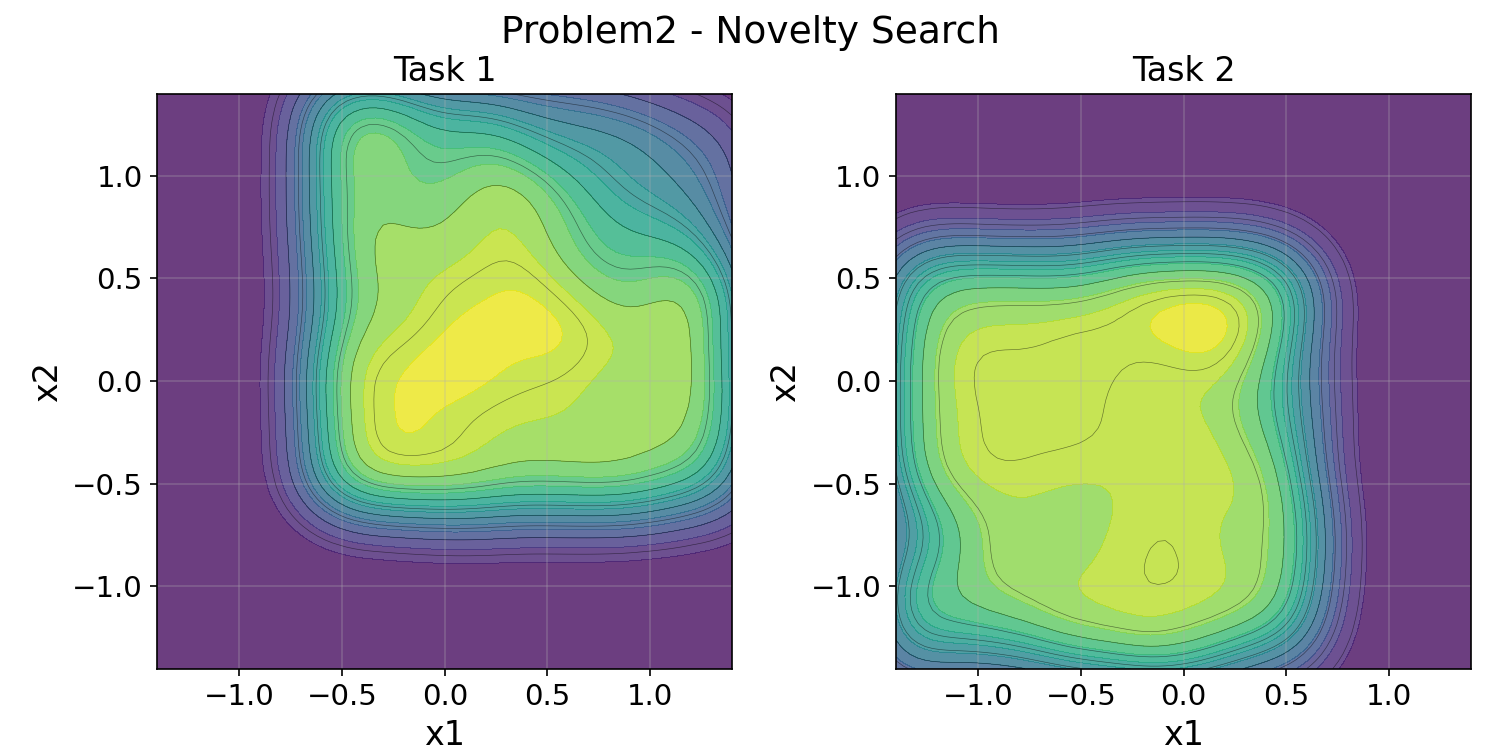}
		\includegraphics[width=0.9\columnwidth]{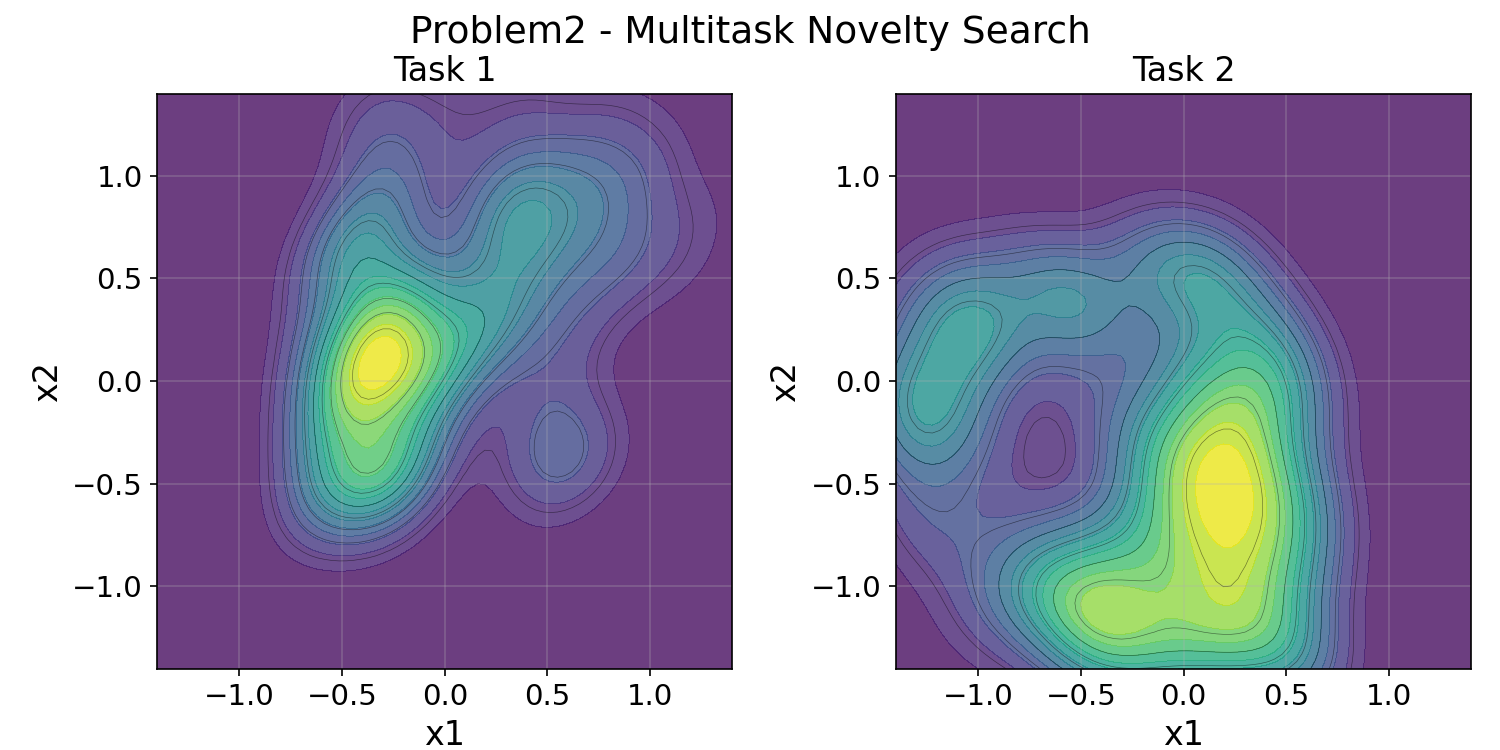}
        \includegraphics[width=0.9\columnwidth]{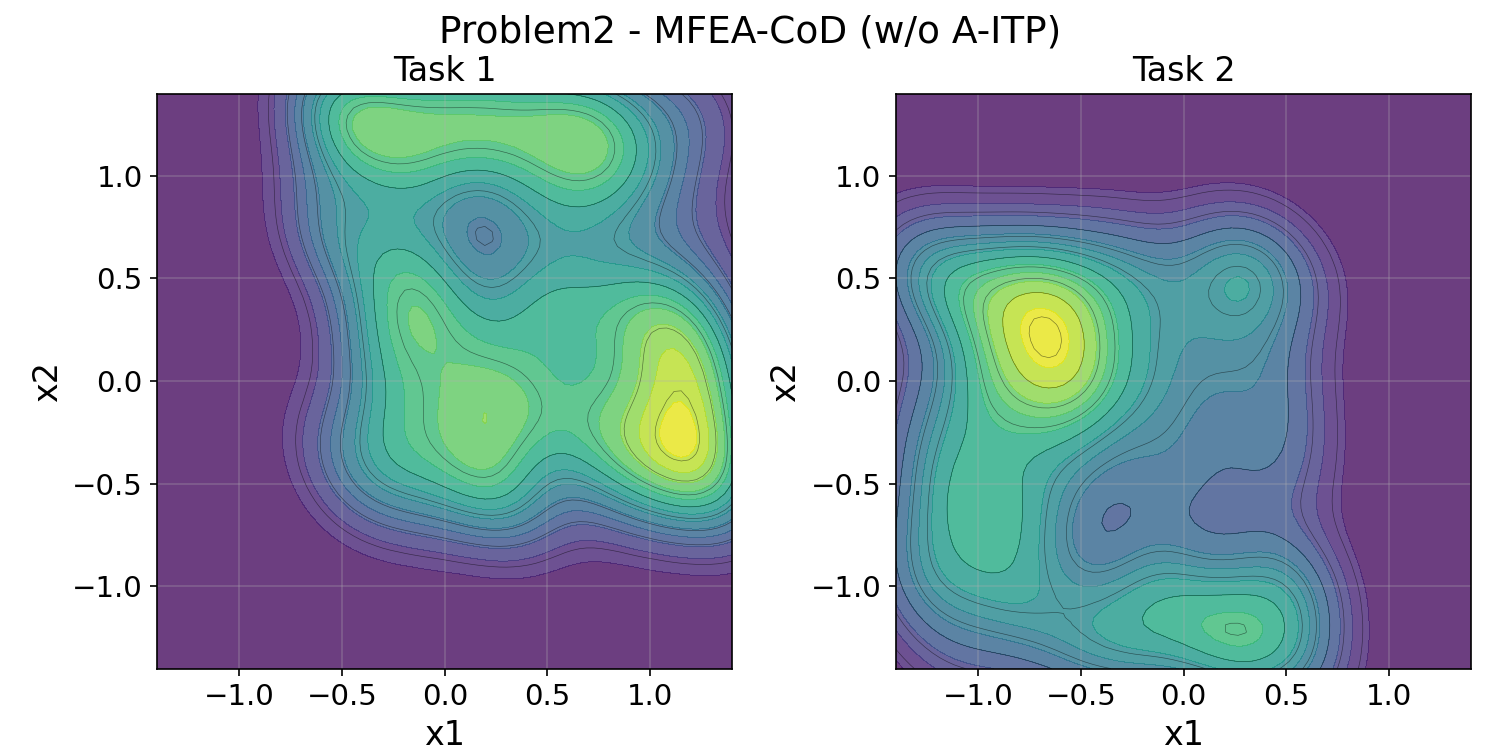}
        \includegraphics[width=0.9\columnwidth]{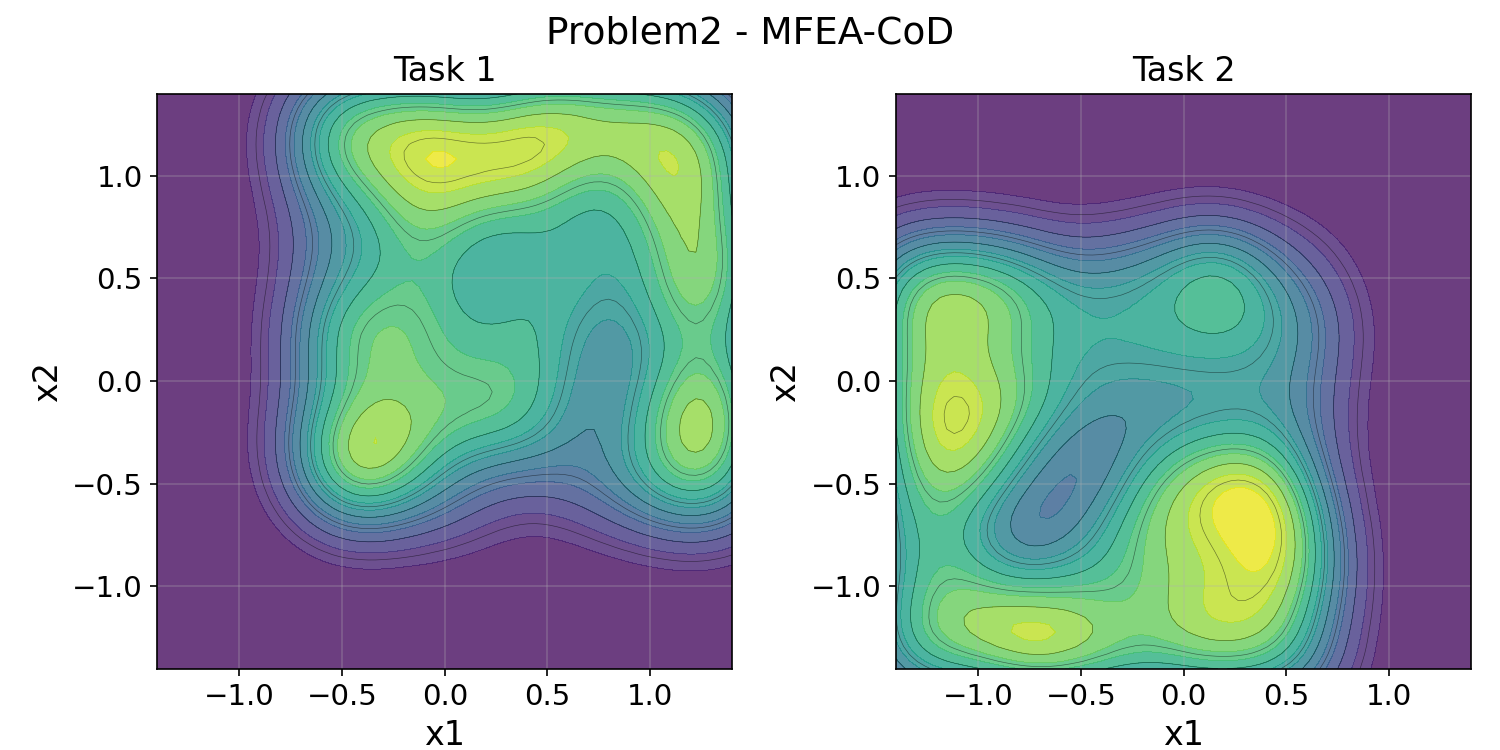}
        \caption{The searching distributions in the genotype space of the Novelty Search, Multitask Novelty Search, MFEA-CoD (w/o A-ITP), and MFEA-CoD on the two tasks of Problem 2 of synthetic basin-type multitask test problems. (a) Novelty Search. (b) Multitask Novelty Search. (c) MFEA-CoD (w/o A-ITP). (d) MFEA-CoD.}
        \label{fig:Problem2_Distribution}
        \vspace{-12pt}
	\end{center}
\end{figure*}

For pure novelty search, the goal is to continuously discover solutions that exhibit novel phenotypes. In the PyRibs library, a solution is inserted into the \texttt{ProximityArchive} once it satisfies the novelty criterion. Therefore, the number of retained solutions in the archive serves as one of the most important indicators of the search capability of an algorithm. Based on this metric, Fig.~\ref{fig:problem_convergence} presents the convergence curves of archive size for the four compared algorithms during the evolutionary process. As can be observed from Fig.~\ref{fig:problem_convergence}, on all tasks of Problems~1--3, both MFEA-CoD and MFEA-CoD (w/o A-ITP) converge substantially faster than the other two baseline methods. Notably, these two methods both incorporate the proposed multitask repulsion operator. This result provides the evidence that the repulsion mechanism is effective for multitask novelty search. Meanwhile, compared with MFEA-CoD (w/o A-ITP), the full version of MFEA-CoD exhibits consistently faster convergence on all tasks of Problems~2 and~3, further confirming the effectiveness of the proposed inter-task transfer probability adaptation strategy on this family of synthetic test problems. For Problem~1, these two repulsion-based algorithms exhibit similar convergence trends, likely because the fixed transfer probability used in MFEA-CoD (w/o A-ITP) is already close to an effective setting for this problem. Table~\ref{Tab:Problem_Results} also reports the final archive sizes and the results of the Wilcoxon rank-sum test at the $0.05$ significance level, where ``$+$'', ``$-$'', and ``$\approx$'' indicate that MFEA-CoD performs better than, worse than, or comparably to a competing method, respectively. The results show that MFEA-CoD outperforms all three baselines, achieving statistically significant improvements on six, six, and two tasks, respectively.

\subsubsection{Search Distributions in the Genotype Spaces}
To further understand the search characteristics of MFEA-CoD, we additionally analyze its search distribution in the genotype space. To this end, we construct the accumulated search distribution as follows. First, from the archive collected during evolution, we remove all solutions introduced via inter-task transfer, and retain only those generated by the emitter of the corresponding task and evaluated on that same task. In this way, the resulting solution set reflects the independent search behaviors of each task, while excluding regions contributed by transferred solutions from other tasks. Based on these filtered solutions, kernel density estimation~\cite{wkeglarczyk2018kernel} is then applied to approximate the accumulated search distribution, thereby revealing the search trajectory induced by the emitters of each task over the course of evolution. For reference, the same analysis is also conducted for the other three baseline methods, so as to provide a clearer comparison of their search behaviors.


Fig.~\ref{fig:Problem2_Distribution} illustrates the search distributions of the four algorithms on Problem~2. It can be observed that, in this case where the target genotype sets of the two tasks partially overlap, Novelty Search exhibits a relatively uniform search pattern within the corresponding target genotype set of each task. In contrast, the other three multitask methods do not concentrate their search in the overlapping region. Instead, they tend to exhibit a complementary search behavior. Moreover, compared with Multitask Novelty Search, the two methods equipped with the repulsion mechanism show not only reduced redundant exploration in the overlapping region, but also higher search density in the non-overlapping regions. This behavior is likely induced by the proposed multitask repulsive operator, which drives the two task emitters to further expand toward their respective exclusive regions. Such a search pattern provides further evidence for the effectiveness of the repulsion mechanism.

\subsubsection{Dynamic of the Inter-Transfer Probabilities}
\begin{figure}[!t]
	\begin{center}
        \includegraphics[width=0.9\columnwidth]{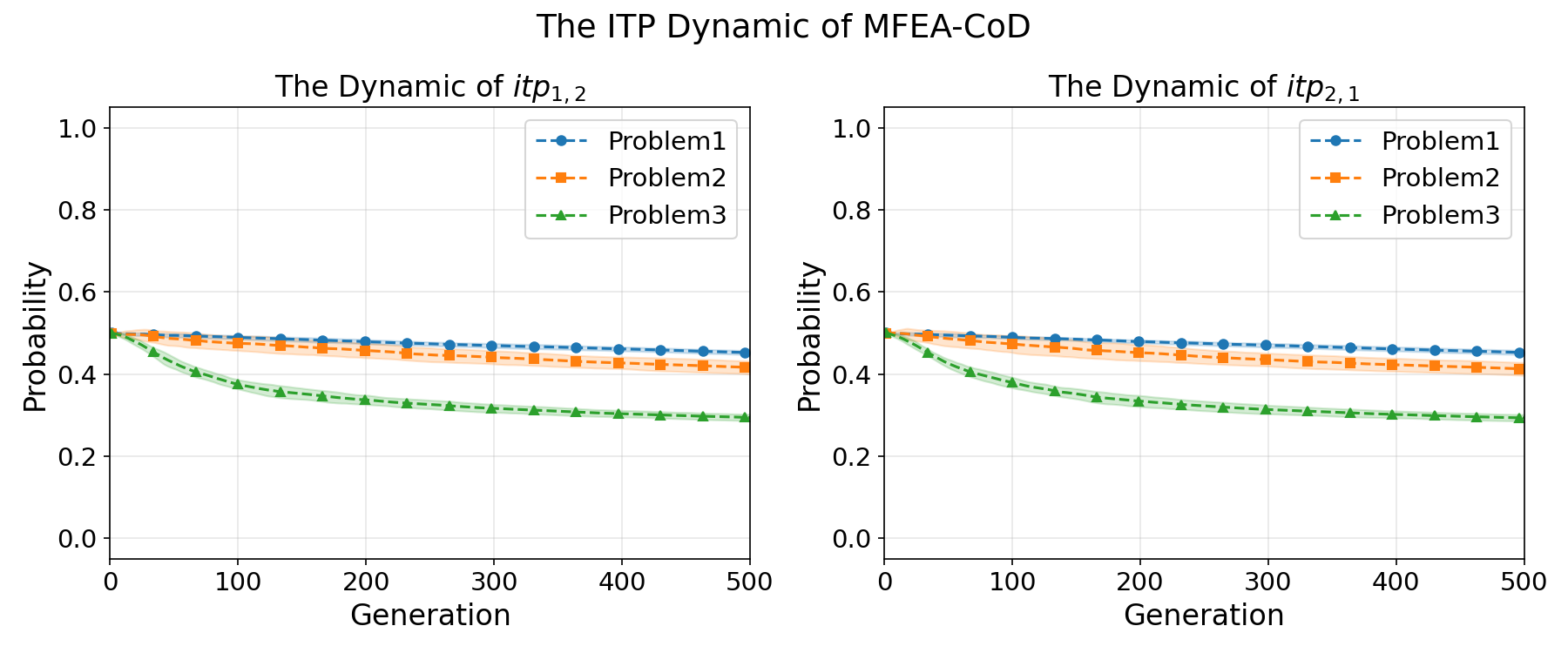}
        \caption{The dynamics of $\text{itp}_{1,2}$ and $\text{itp}_{2,1}$ in MFEA-CoD on Problems~1 to 3. The values of $\text{itp}_{1,2}$ and $\text{itp}_{2,1}$ represent the transfer probability from Task~1 to Task~2 and from Task~2 to Task~1, respectively.}
        \label{fig:ITP_Evolution}
        \vspace{-12pt}
	\end{center}
\end{figure}

To investigate the effectiveness of the proposed inter-task transfer probability adaptation strategy, we analyze the dynamic of the $\textit{ITP}$ matrix during optimization. Fig.~\ref{fig:ITP_Evolution} illustrates the dynamics of $\text{itp}_{1,2}$ and $\text{itp}_{2,1}$ in MFEA-CoD on Problems~1--3. These two values represent the transfer probability from Task~1 to Task~2 and from Task~2 to Task~1, respectively. As can be observed, the transfer probabilities decrease gradually over the evolution for all three problems. Among them, the decrease is slowest on Problem~1, followed by Problem~2, while the most pronounced decline occurs on Problem~3, where the two tasks exhibit the lowest similarity. This behavior is consistent with intuition: as task relatedness decreases from Problem~1 to Problem~3, the utility of inter-task transfer should also diminish, leading to lower transfer probabilities. These observations are also consistent with the analysis in Section~\ref{sec:intuitive_analysis}, which suggests that lower overlap between the target genotype spaces calls for weaker inter-task transfer. Overall, the results provide a basic evidence for the effectiveness and rationality of the proposed inter-task transfer probability adaptation mechanism.

\subsection{Results on Multitask Deceptive Maze Navigation Problems}
In this subsection, we further investigate the performance of MFEA-CoD on the multitask deceptive maze navigation problems. In addition to the baseline methods introduced in Section~\ref{sec:results_problem}, we further compare MFEA-CoD with two representative quality-diversity optimization algorithms in order to assess its ability to overcome deceptive objectives:
\begin{itemize}
    \item \textit{MAP-Elites}~\cite{mouret2015illuminating}: This is one of the most established baseline methods in quality-diversity optimization. It simultaneously emphasizes objective performance and diversity in the phenotype space. In our implementation, the Euclidean distance between the robot and the target is used as the objective signal.
    
    \item \textit{CMA-ME}~\cite{fontaine2020covariance}: This is another representative quality-diversity algorithm. Different from MAP-Elites, it adopts CMA-ES-based emitters to provide stronger search capability.
\end{itemize}
All methods are implemented using the PyRibs library. For the two quality-diversity baselines, we follow their standard settings and employ the \texttt{GridArchive} provided by PyRibs to record diverse solutions. The remaining parameter settings of the considered algorithms are provided in Section S-IV of the supplementary file. All of the results are obtained via 10 independent runs. The performance of MFEA-CoD on the multitask deceptive maze navigation problems is analyzed from the following two perspectives: 1) the number of solutions retained in the final archive and its convergence trend, and 2) the discovery of the successful path. 

\subsubsection{Results of the Archive Size}

\begin{figure}[!t]
	\begin{center}
        \includegraphics[width=0.9\columnwidth]{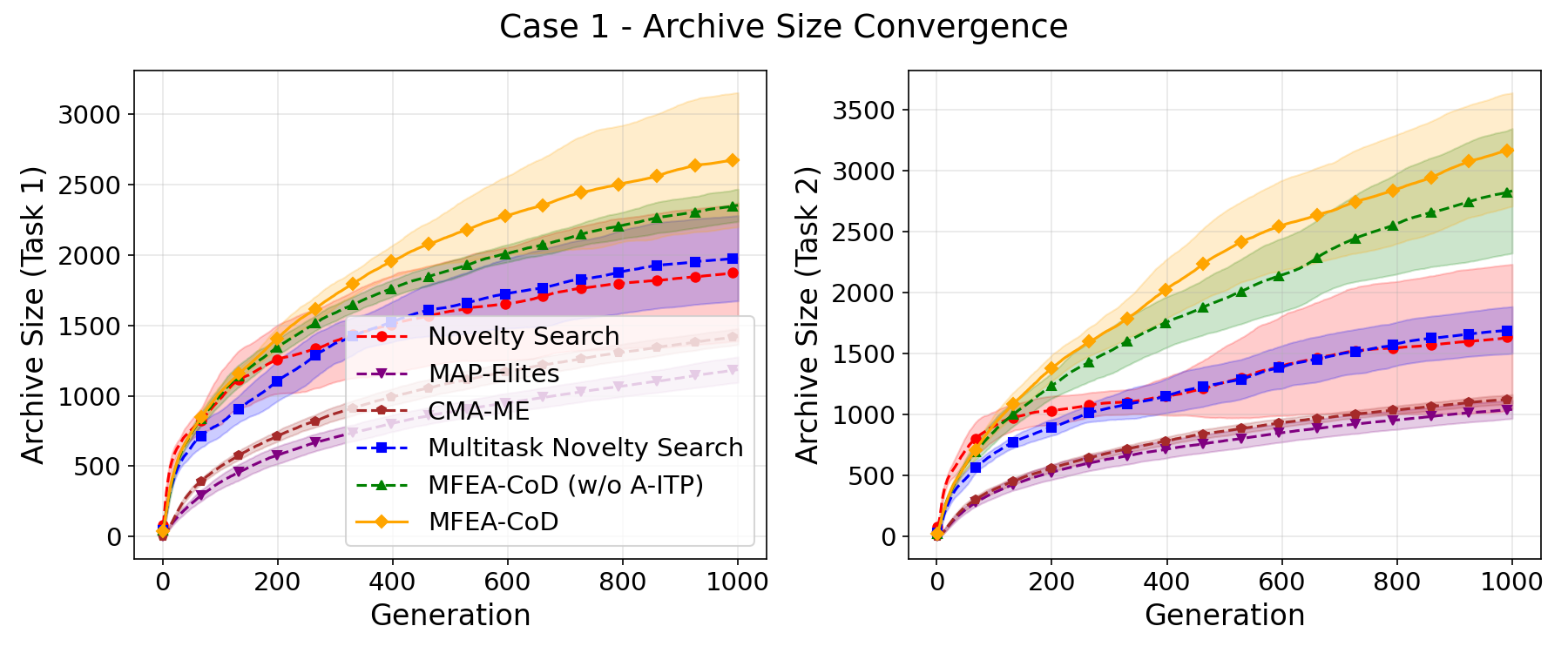}
		\includegraphics[width=0.9\columnwidth]{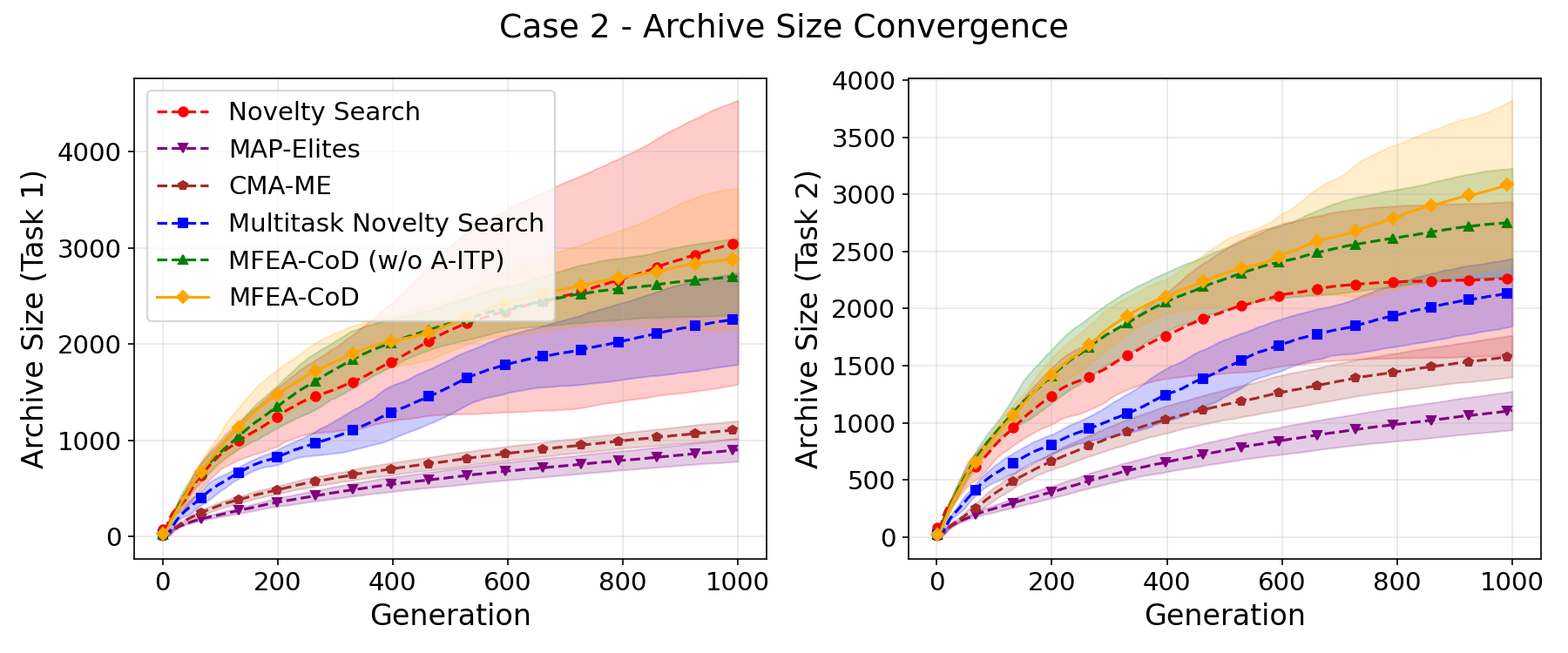}
        \includegraphics[width=0.9\columnwidth]{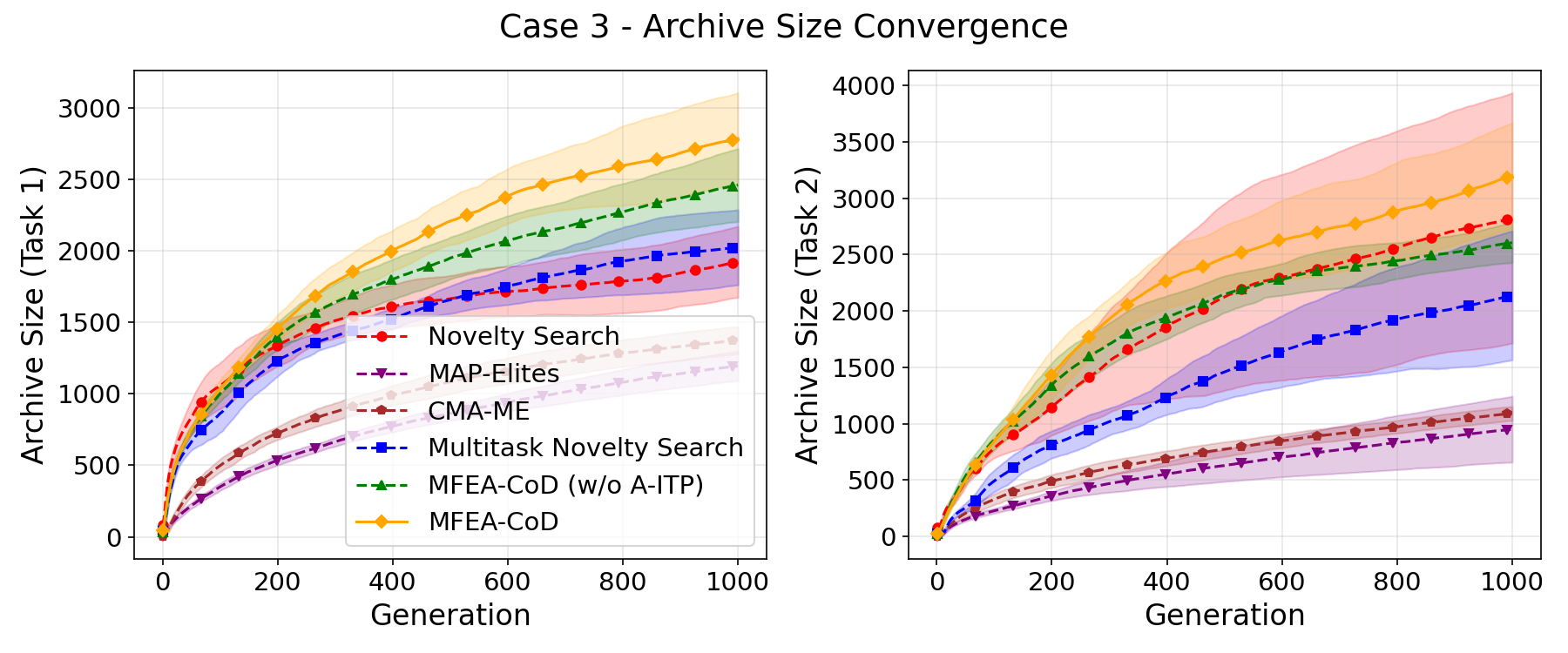}
        \caption{Archive size convergence trends averaged over 20 independent runs of Novelty Search, MAP-Elites, CMA-ME, Multitask Novelty Search, MFEA-CoD (w/o A-ITP), and MFEA-CoD on the multitask deceptive maze navigation problems. 
        }
        \label{fig:case_convergence}
        \vspace{-12pt}
	\end{center}
\end{figure}

\begin{table*}[!t]
\centering
\caption{Archive size results of Novelty Search, MAP-Elites, CMA-ME, Multitask Novelty Search, MFEA-CoD (w/o A-ITP), and MFEA-CoD on the multitask deceptive maze navigation problems. The Wilcoxon rank-sum test at the $0.05$ significance level was performed to compare MFEA-CoD with its competitors.}\label{tab:case_archive_size}
\renewcommand{\arraystretch}{1.12}
\resizebox{15cm}{!}{\begin{tabular}{c|c|c|c|c|c|c}
\hline
\multirow{2}{*}{Case / Task}
& Novelty Search
& MAP-Elites
& CMA-ME
& Multitask Novelty Search
& MFEA-CoD (w/o A-ITP)
& MFEA-CoD \\
\cline{2-7}
& Mean$\pm$Std
& Mean$\pm$Std
& Mean$\pm$Std
& Mean$\pm$Std
& Mean$\pm$Std
& Mean$\pm$Std \\
\hline
Case 1 -- Task 1
& 1878.7$\pm$489.9\;($+$)
& 1185.2$\pm$90.5\;($+$)
& 1419.5$\pm$56.9\;($+$)
& 1975.7$\pm$303.0\;($+$)
& 2351.0$\pm$114.7\;($+$)
& \textbf{2673.6$\pm$476.6} \\
Case 1 -- Task 2
& 1639.2$\pm$591.8\;($+$)
& 1042.5$\pm$74.2\;($+$)
& 1128.5$\pm$35.8\;($+$)
& 1692.8$\pm$192.2\;($+$)
& 2832.7$\pm$510.5\;($\approx$)
& \textbf{3172.1$\pm$464.4} \\
\hline
Case 2 -- Task 1
& \textbf{3062.5$\pm$1475.9}\;($\approx$)
& 902.6$\pm$118.8\;($+$)
& 1111.6$\pm$93.0\;($+$)
& 2265.9$\pm$474.9\;($+$)
& 2704.7$\pm$395.2\;($\approx$)
& {2886.3$\pm$736.2} \\
Case 2 -- Task 2
& 2266.4$\pm$670.0\;($+$)
& 1107.2$\pm$168.6\;($+$)
& 1583.4$\pm$182.7\;($+$)
& 2142.7$\pm$295.7\;($+$)
& 2753.8$\pm$474.2\;($\approx$)
& \textbf{3099.6$\pm$726.5} \\
\hline
Case 3 -- Task 1
& 1922.1$\pm$248.8\;($+$)
& 1194.7$\pm$103.5\;($+$)
& 1375.7$\pm$96.1\;($+$)
& 2023.9$\pm$262.8\;($+$)
& 2459.3$\pm$255.3\;($+$)
& \textbf{2785.0$\pm$320.3} \\
Case 3 -- Task 2
& 2824.4$\pm$1110.1\;($\approx$)
& 951.5$\pm$292.7\;($+$)
& 1090.4$\pm$62.5\;($+$)
& 2137.8$\pm$571.2\;($+$)
& 2605.3$\pm$176.9\;($+$)
& \textbf{3199.1$\pm$467.5} \\
\hline
\multicolumn{1}{c|}{$+/-/=$} & {$4/0/2$}  & {$6/0/0$} & {$6/0/0$} & {$6/0/0$} & {$3/0/3$} & / \\
\hline
\end{tabular}}
\end{table*}

\begin{figure*}[!h]
	\begin{center}
        \includegraphics[width=0.65\columnwidth]{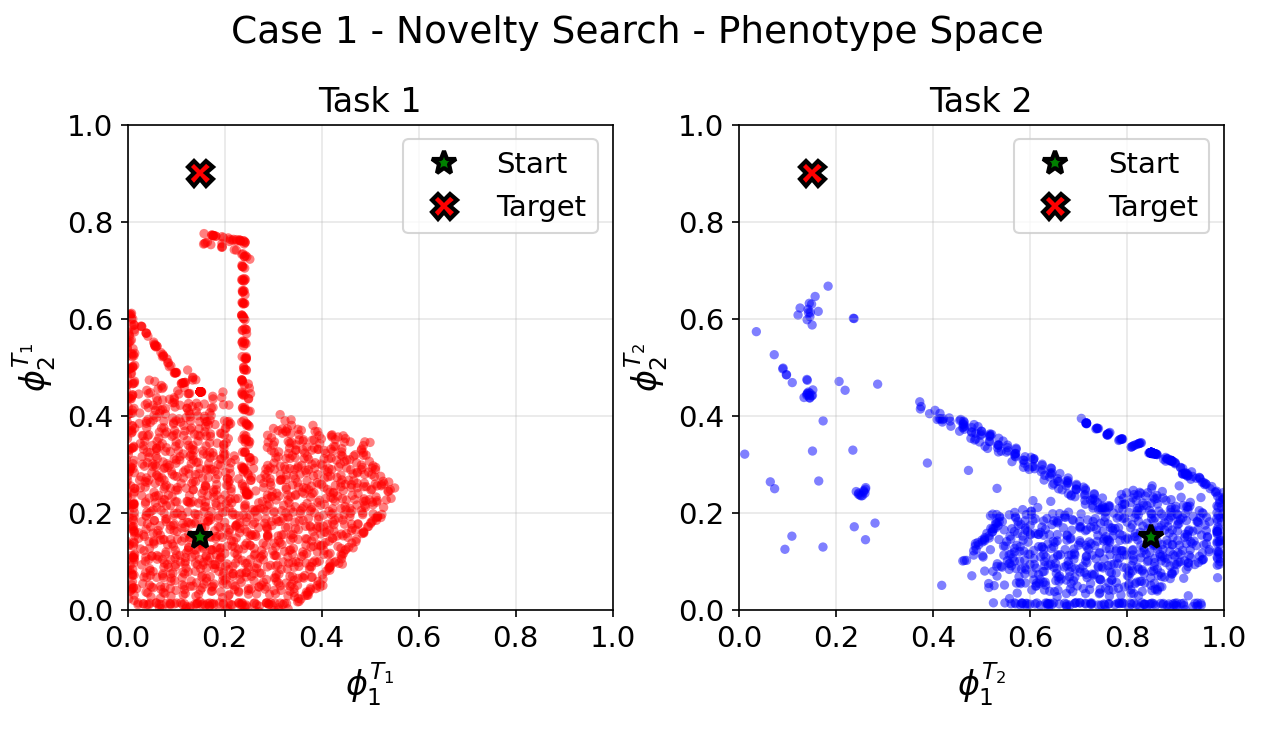}
		\includegraphics[width=0.65\columnwidth]{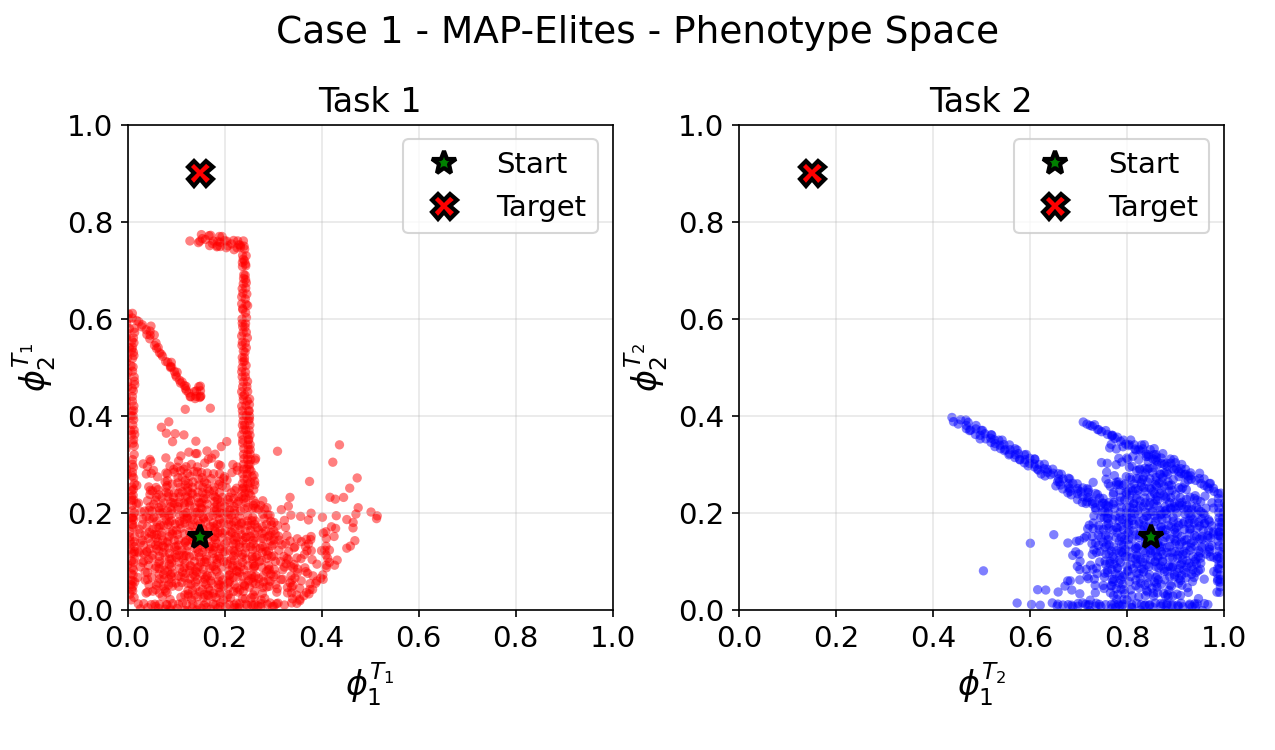}
        \includegraphics[width=0.65\columnwidth]{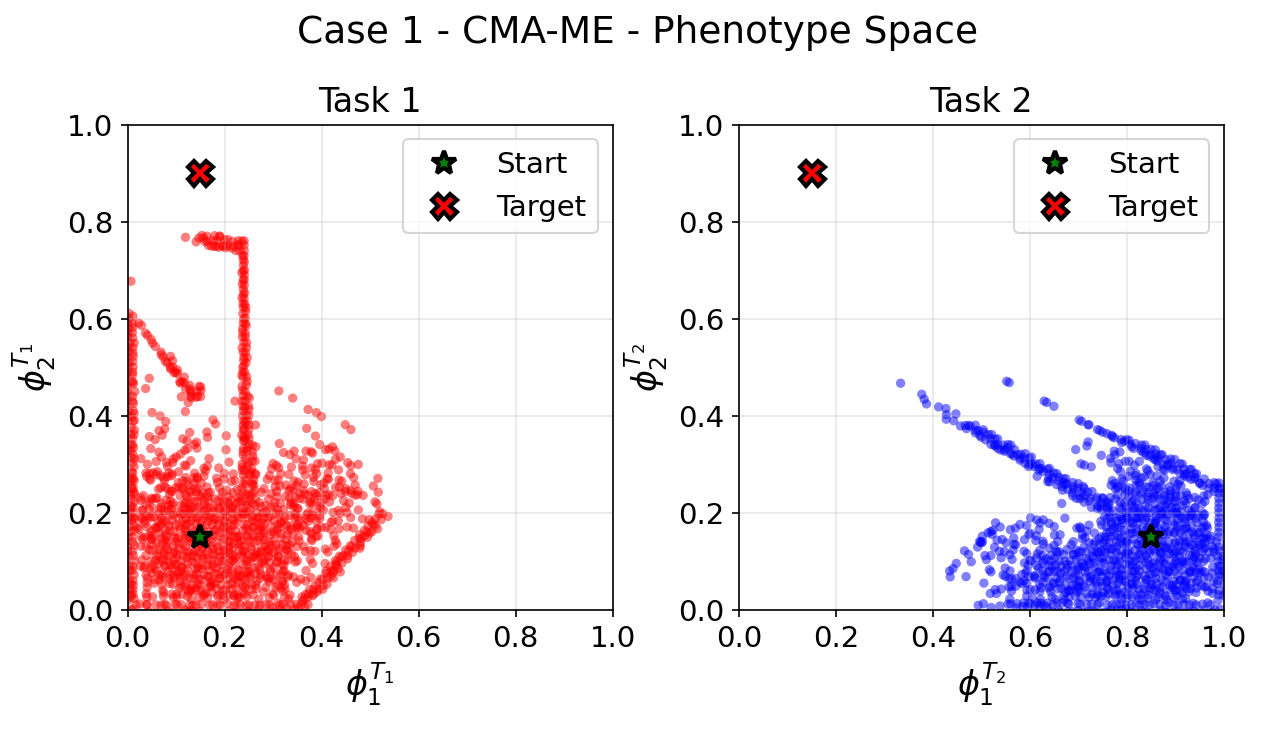}
        \includegraphics[width=0.65\columnwidth]{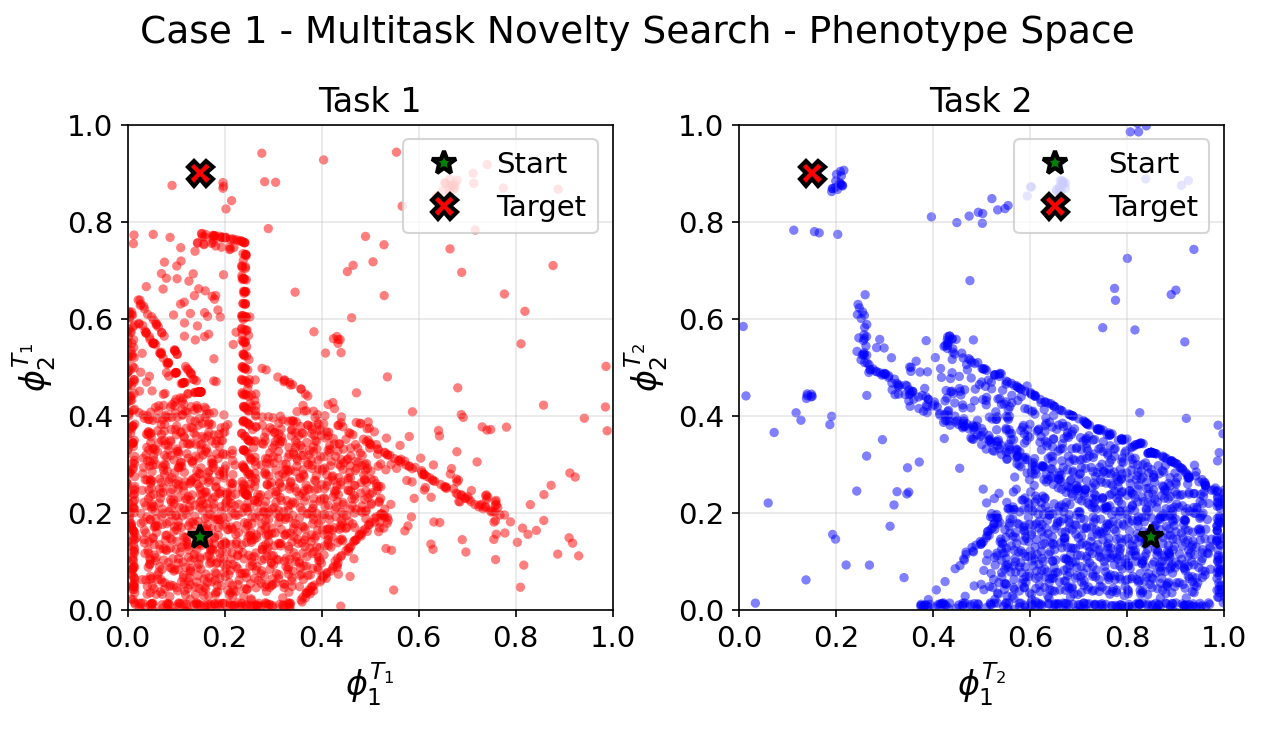}
		\includegraphics[width=0.65\columnwidth]{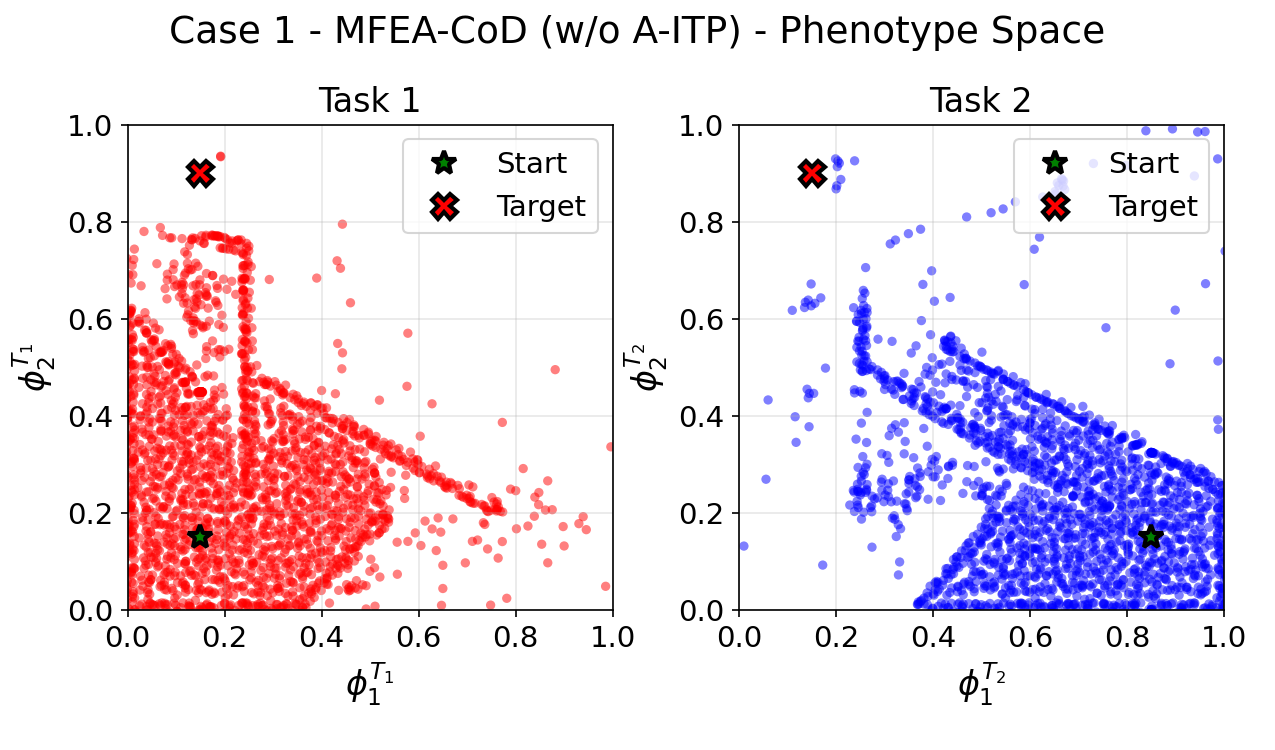}
        \includegraphics[width=0.65\columnwidth]{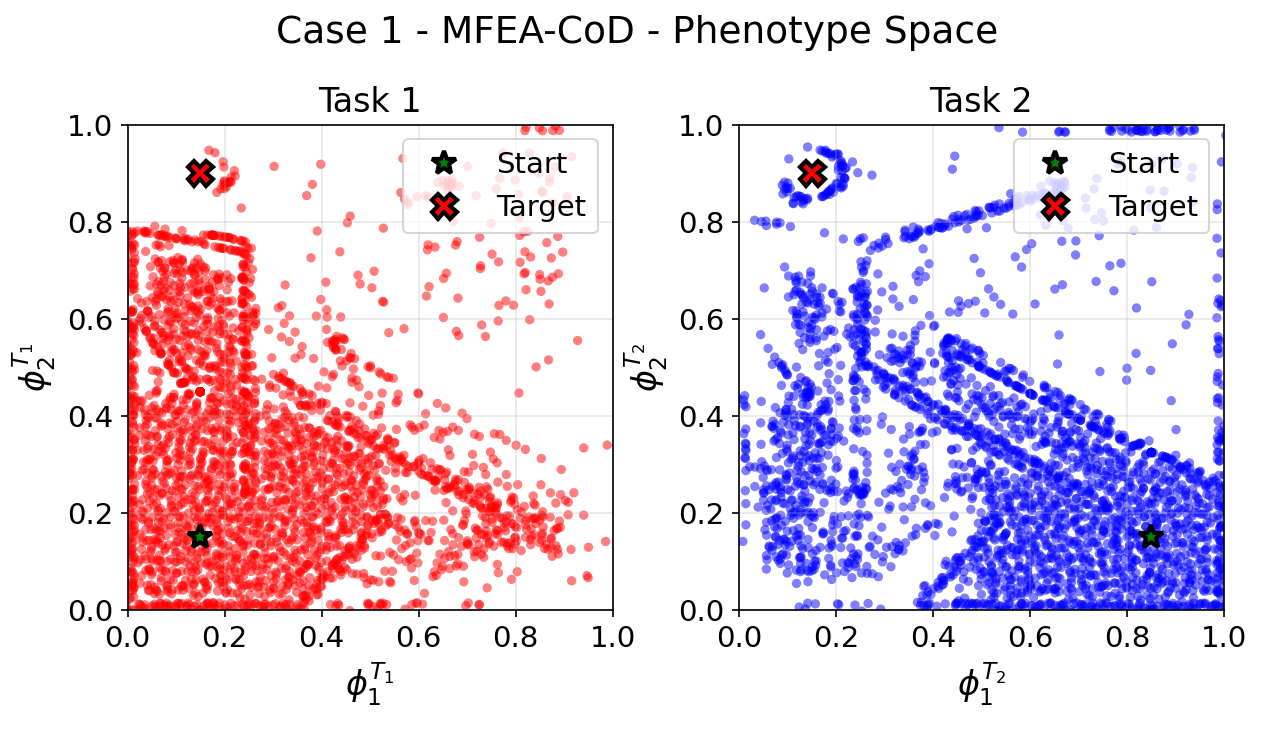}
        \caption{The final archives in the phenotype space of the Novelty Search, MAP-Elites, CMA-ME, Multitask Novelty Search, MFEA-CoD (w/o A-ITP), and MFEA-CoD on the two tasks of Case 1 of the multitask deceptive maze navigation problems. 
        }
        \label{fig:case_phenotpye}
        \vspace{-12pt}
	\end{center}
\end{figure*}

\begin{table*}[!t]
\centering
\caption{Success rates over 10 independent runs and first successful generations of Novelty Search, MAP-Elites, CMA-ME, Multitask Novelty Search, MFEA-CoD (w/o A-ITP), and MFEA-CoD on the multitask deceptive maze navigation problems. The Wilcoxon rank-sum test at the 0.05 significance level is used to compare MFEA-CoD with its competitors.}
\label{tab:maze_success}
\resizebox{18cm}{!}{\begin{tabular}{c|c|c|c|c|c|c|c|c|c|c|c|c}
\hline
\multirow{2}{*}{Case / Task}
& \multicolumn{2}{c|}{Novelty Search}
& \multicolumn{2}{c|}{MAP-Elites}
& \multicolumn{2}{c|}{CMA-ME}
& \multicolumn{2}{c|}{Multitask Novelty Search}
& \multicolumn{2}{c|}{MFEA-CoD (w/o A-ITP)}
& \multicolumn{2}{c}{MFEA-CoD} \\
\cline{2-13}
& SR & FSG Mean$\pm$Std
& SR & FSG Mean$\pm$Std
& SR & FSG Mean$\pm$Std
& SR & FSG Mean$\pm$Std
& SR & FSG Mean$\pm$Std
& SR & FSG Mean$\pm$Std \\
\hline
Case 1 -- Task 1
& 30.0\% & 600.0$\pm$288.9 ($+$)
& 0.0\% & N/A ($+$)
& 0.0\% & N/A ($+$)
& 60.0\% & 574.3$\pm$158.3 ($+$)
& \textbf{90.0\%} & 399.0$\pm$276.6 ($\approx$)
& \textbf{90.0\%} & \textbf{323.1$\pm$167.0} \\
Case 1 -- Task 2
& 10.0\% & 348.0$\pm$0.0 ($+$)
& 0.0\% & N/A ($+$)
& 0.0\% & N/A ($+$)
& 80.0\% & 441.1$\pm$227.4 ($+$)
& \textbf{100.0\%} & {177.4$\pm$189.1} ($\approx$)
& \textbf{100.0\%} & \textbf{136.2$\pm$138.6} \\
\hline
Case 2 -- Task 1
& 70.0\% & 384.6$\pm$225.9 ($+$)
& 20.0\% & 719.5$\pm$184.5 ($+$)
& 10.0\% & 816.0$\pm$0.0 ($+$)
& 90.0\% & 442.6$\pm$135.3 ($+$)
& 90.0\% & 158.4$\pm$173.0 ($+$)
& \textbf{100.0\%} & \textbf{97.2$\pm$69.1} \\
Case 2 -- Task 2
& 70.0\% & 475.6$\pm$244.6 ($+$)
& 30.0\% & 519.7$\pm$275.5 ($+$)
& 40.0\% & 309.8$\pm$328.0 ($+$)
& 90.0\% & 414.8$\pm$96.8 ($+$)
& \textbf{100.0\%} & 92.0$\pm$80.0 ($\approx$)
& \textbf{100.0\%} & \textbf{83.9$\pm$49.6} \\
\hline
Case 3 -- Task 1
& 40.0\% & 715.0$\pm$295.2 ($+$)
& 0.0\% & N/A ($+$)
& 0.0\% & N/A ($+$)
& 40.0\% & 711.8$\pm$155.8 ($+$)
& \textbf{100.0\%} & 406.6$\pm$257.5 ($+$)
& \textbf{100.0\%} & \textbf{323.2$\pm$136.1} \\
Case 3 -- Task 2
& 80.0\% & 446.2$\pm$269.7 ($+$)
& 20.0\% & 484.5$\pm$46.5 ($+$)
& 10.0\% & 629.0$\pm$0.0 ($+$)
& 80.0\% & 362.1$\pm$91.2 ($+$)
& \textbf{100.0\%} & 113.9$\pm$83.2 ($\approx$)
& \textbf{100.0\%} & \textbf{97.4$\pm$46.2} \\
\hline
\multicolumn{1}{c|}{$+/-/=$} 
& \multicolumn{2}{c|}{$6/0/0$}  
& \multicolumn{2}{c|}{$6/0/0$} 
& \multicolumn{2}{c|}{$6/0/0$} 
& \multicolumn{2}{c|}{$6/0/0$} 
& \multicolumn{2}{c|}{$2/0/4$} 
& \multicolumn{2}{c}{ / } \\
\hline
\end{tabular}}
\end{table*}

Fig.~\ref{fig:case_convergence} presents the archive-size convergence trends of the six compared methods, i.e., Novelty Search, MAP-Elites, CMA-ME, Multitask Novelty Search, MFEA-CoD (w/o A-ITP), and MFEA-CoD. As can be observed, the proposed MFEA-CoD exhibits the strongest overall convergence performance among all methods, achieving the fastest convergence on five out of the six tasks. This result demonstrates its superior capability in discovering novel solutions in deceptive maze navigation problems. Table~\ref{tab:case_archive_size} further reports the final archive sizes together with the results of the Wilcoxon rank-sum test at the $0.05$ significance level. The results show that MFEA-CoD outperforms the five baselines, achieving statistically significant improvements on four, six, six, six, and three tasks, respectively. 

To further illustrate the performance of MFEA-CoD, we also visualize the distributions of the final archives in the phenotype space, as shown in Fig.~\ref{fig:case_phenotpye}. It can be observed that, compared with the other methods, the two algorithms equipped with the repulsion mechanism, i.e., MFEA-CoD (w/o A-ITP) and MFEA-CoD, are able to discover a larger number of solutions that are also more broadly dispersed in the phenotype space. This indicates that the proposed repulsion mechanism effectively enhances the phenotype space novelty search capability of the algorithm.

\subsubsection{Results on Searching Successful Path}
Table~\ref{tab:maze_success} reports the success rate (SR) of finding at least one feasible path over 10 independent runs, as well as the first successful generation (FSG), defined as the generation at which the algorithm first discovers a feasible path. These metrics are introduced to assess whether the proposed method can effectively overcome deceptive objectives and accelerate the discovery of successful paths through the stepping-stone effect. As can be observed, MFEA-CoD exhibits highly competitive performance in this regard. In particular, it achieves a success rate of $100\%$ on five out of the six tasks. Moreover, it attains the smallest mean FSG on all the six tasks, indicating a clear advantage in rapidly identifying feasible paths in several deceptive settings. We also report the results of the Wilcoxon rank-sum test at the $0.05$ significance level on the FSG results. The statistical results further support the effectiveness of the proposed method: MFEA-CoD achieves significant improvements over the five baselines on six, six, six, six, and two tasks, respectively. Overall, these observations demonstrate that the proposed method is not only effective in discovering feasible paths with high reliability, but also capable of accelerating path discovery in deceptive maze navigation tasks.

\subsection{Results on Multitask MuJoCo Policy Optimization Problems}
\begin{figure}[!t]
	\begin{center}
        \includegraphics[width=0.9\columnwidth]{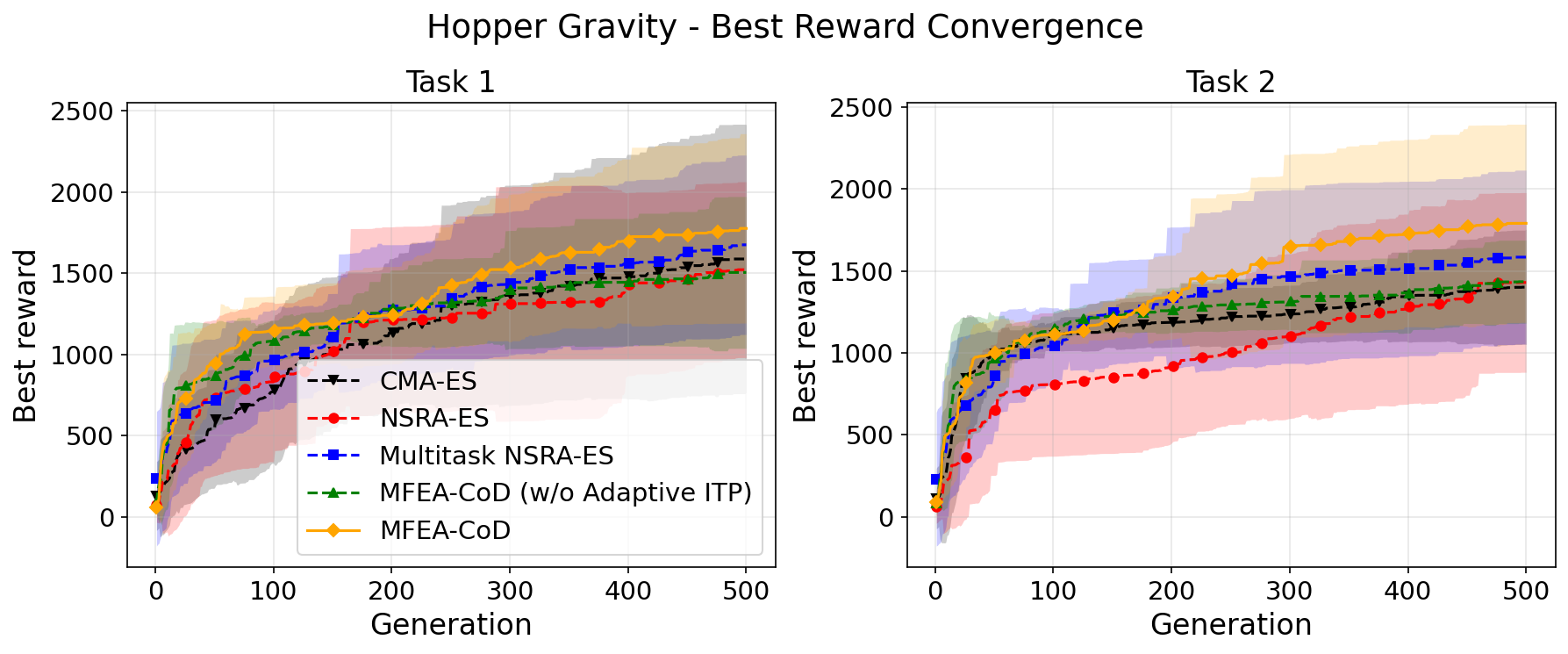}
		\includegraphics[width=0.9\columnwidth]{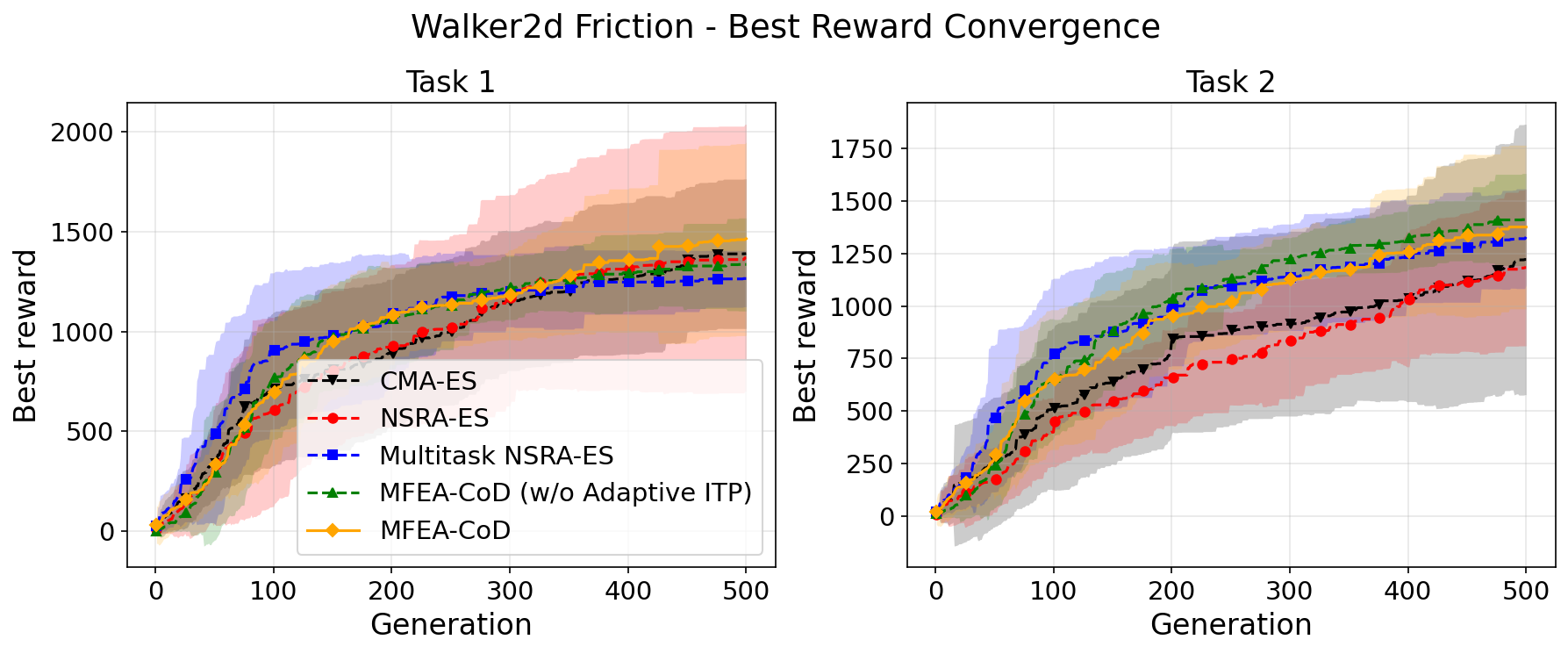}
        \caption{Best reward convergence trends averaged over 10 independent runs of CMA-ES, NSRA-ES, Multitask NSRA-ES, MFEA-CoD (w/o A-ITP), and MFEA-CoD on the multitask MuJoCo policy optimization problems. 
        }
        \label{fig:mujoco_convergence}
        \vspace{-12pt}
	\end{center}
\end{figure}


To assess the effectiveness of MFEA-CoD on novelty-augmented optimization, we evaluate its performance on multitask MuJoCo policy optimization problems. The following baselines are considered:
\begin{itemize}
    \item \textit{CMA-ES}~\cite{hansen2003reducing}: A classical evolutionary optimization algorithm with strong exploration and exploitation trade-off capability. Although it does not explicitly encourage novelty, its stochastic search dynamics may still help escape deceptive fitness landscapes and locate high-quality solutions.
    
    \item \textit{NSRA-ES}~\cite{lehman2011abandoning}: A single-task evolutionary method that adaptively balances novelty-guided exploration and objective-driven exploitation. For multitask problems, it is applied independently to each task.

    \item \textit{Multitask NSRA-ES}: A multitask variant of NSRA-ES obtained by incorporating inter-task transfer to enable cross-task solution information sharing.
\end{itemize}
Moreover, {MFEA-CoD (w/o A-ITP)} is still included to highlight the contribution of the proposed adaptive inter-task transfer probability mechanism.
All methods are implemented in PyRibs, and all results are reported over 10 independent runs. The detailed parameter settings can be found in Section S-IV of the supplementary file. Performance is evaluated in terms of the convergence trend of the best return achieved by the learned policy.

The convergence trends in Fig.~\ref{fig:mujoco_convergence} show that MFEA-CoD remains competitive on the MuJoCo policy optimization benchmarks. On both tasks of the Hopper Gravity problem, it achieves the fastest convergence in terms of the best reward obtained. On the Walker2d Friction problem, MFEA-CoD converges the fastest on Task~1 and is only slightly inferior to its variant, i.e., MFEA-CoD (w/o A-ITP), on Task~2. These results confirm the effectiveness of MFEA-CoD, indicating that the proposed framework remains effective in novelty-augmented optimization settings. 

\subsection{Results on Multitask Generative Novelty Search Problems}
\begin{figure}[!t]
	\begin{center}
        \includegraphics[width=0.9\columnwidth]{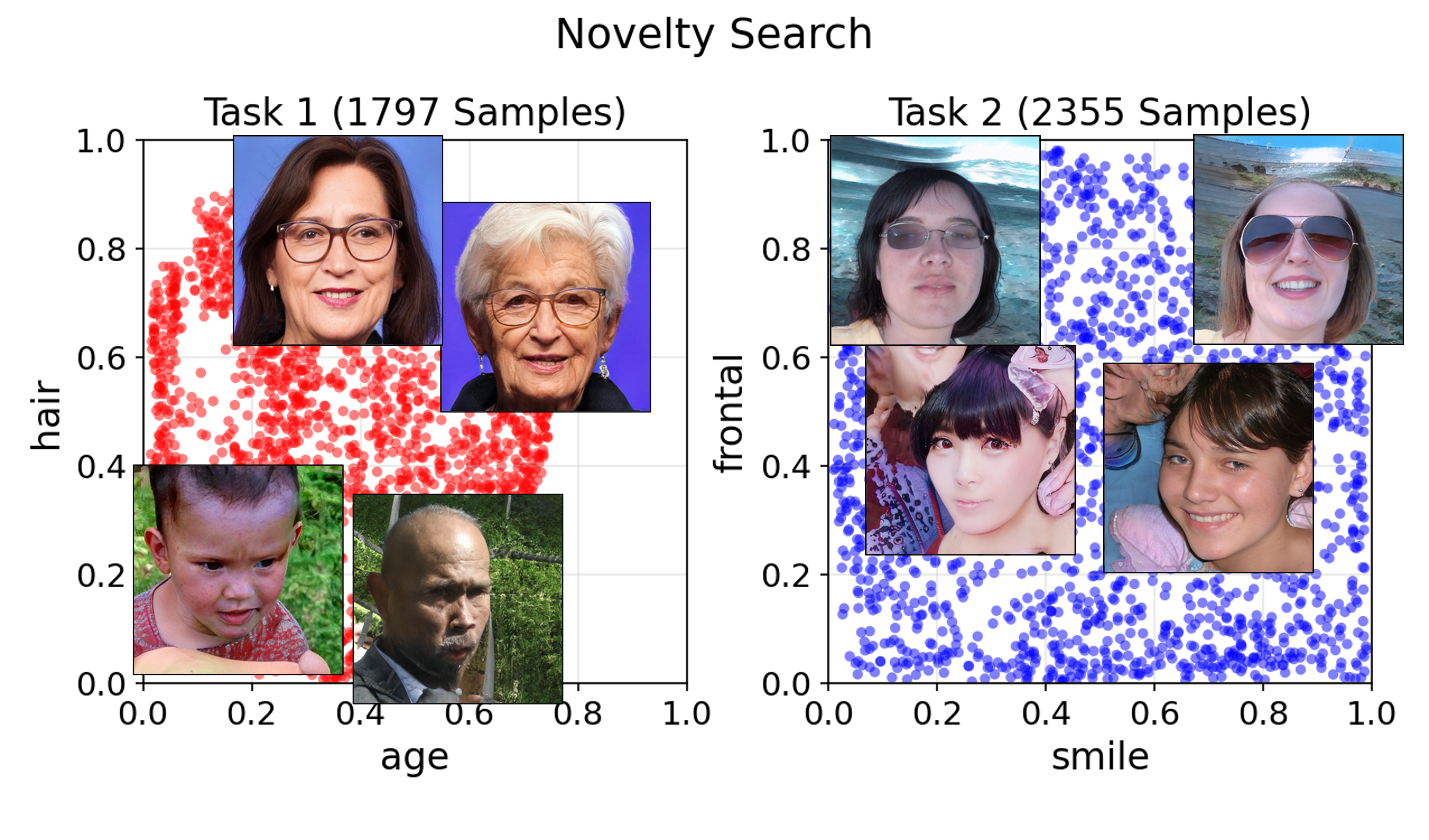}
		\includegraphics[width=0.9\columnwidth]{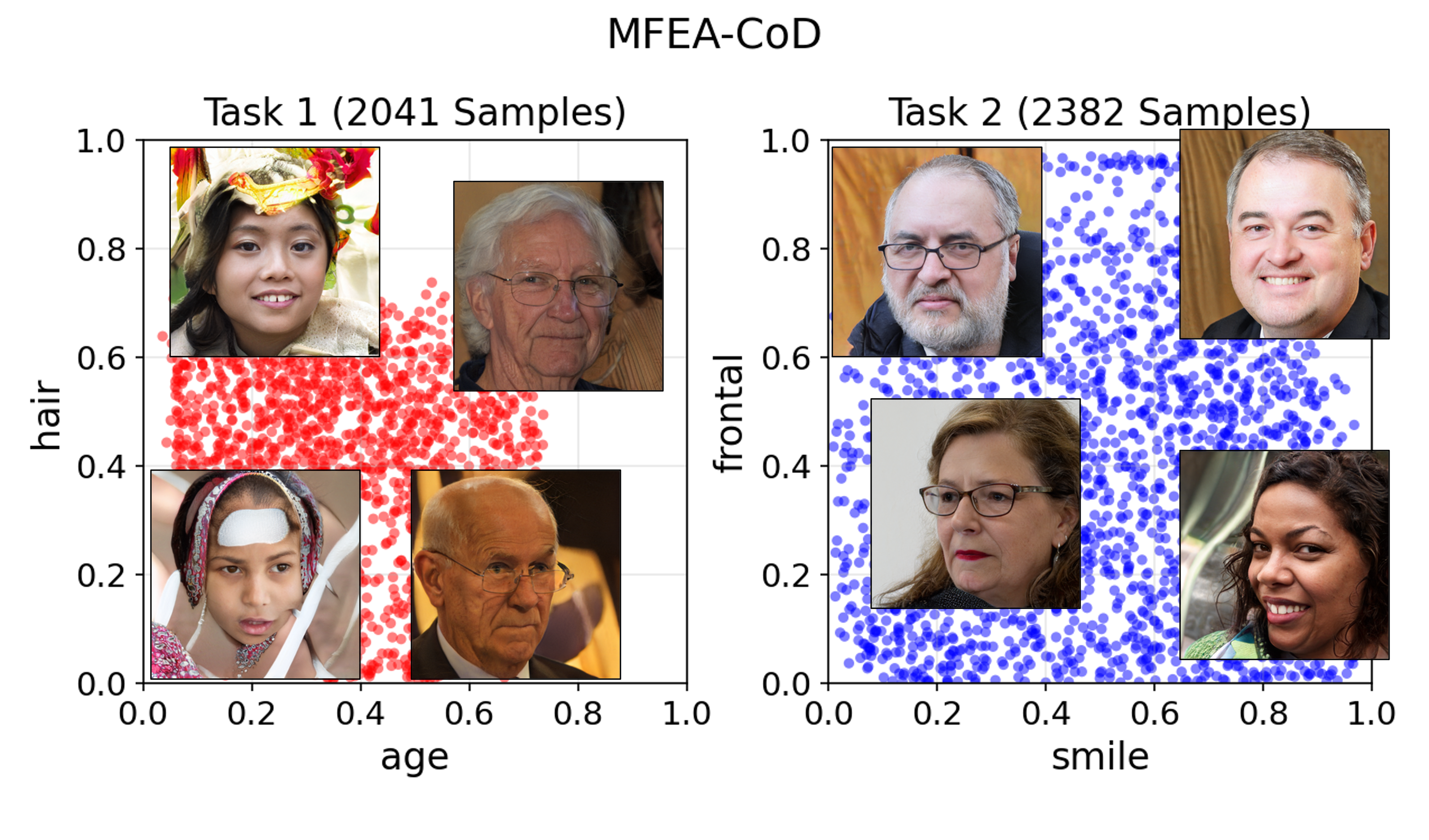}
        \caption{Distribution of the archived samples and representative generated images in the phenotype space obtained by Novelty Search and MFEA-CoD on the multitask generative novelty search problem.
        }
        \label{fig:gen_ns}
        \vspace{-12pt}
	\end{center}
\end{figure}
To provide a visually interpretable example of multitask novelty search, we further examine the behavior of MFEA-CoD on the multitask generative novelty search problem. Specifically, we compare MFEA-CoD with the single-task Novelty Search based on a StyleGAN generator and visualize the distribution of the final archives in the phenotype space. Representative generated samples are also presented to illustrate the novelty-discovery behavior of the two methods. The parameter settings for this experiment are provided in the supplementary file. 
Here, we mainly present representative results from a single run for visualization purposes. As shown in Fig.~\ref{fig:gen_ns}, after one run, MFEA-CoD produces larger archives (2041 samples on Task 1 and 2382 samples on Task 2) than conventional novelty search (1797 samples on Task 1 and 2355 samples on Task 2), indicating a stronger capability for discovering behaviorally novel samples. Moreover, the selected images exhibit clear behavioral variation across different regions of the phenotype space, with substantial differences in the intensities of the two behavior descriptors. For example, in Task 1, samples in the upper-right region show more pronounced characteristics of long hair and older age. Similarly, in Task 2, samples in the lower-right region exhibit clearer non-frontal facial poses and smiling expressions. Overall, these visual results provide intuitive evidence of the effectiveness of the proposed method in discovering diverse novel samples.

\section{Conclusion}
In this paper, we expanded EMT from the conventional optimization setting to the domain of novelty search. We first analyzed the potential effectiveness of EMT in novelty search and provided theoretical insights into the benefits of multitask novelty search. Based on this perspective, we proposed MFEA-CoD for solving multitask {pure} novelty search problems. The proposed method incorporates a {multitask repulsive operator} to reduce redundant exploration in overlapping genotype regions and thereby improve the efficiency of discovering novel solutions. In addition, an adaptive inter-task transfer probability mechanism was developed to regulate transfer strength online, enabling the algorithm to better accommodate different multitask structures. Furthermore, we expanded MFEA-CoD to novelty-augmented optimization in order to mitigate premature convergence caused by deceptive landscapes. The proposed framework was evaluated on four groups of multitask problems, including synthetic basin-type problems, deceptive maze navigation problems, MuJoCo policy optimization problems, and a generative novelty search problem. Experimental results demonstrated that MFEA-CoD can effectively improve the performance in both pure novelty search and novelty-augmented optimization.





\bibliographystyle{IEEEtran}
\bibliography{myref,additionalref}

\ifCLASSOPTIONcaptionsoff
  \newpage
\fi

\clearpage
\onecolumn

	\begin{center}
		\vspace*{1em} 
		{\LARGE \bfseries Supplementary File of 
			``From Consistency to Collaborative Discovery: \textit{MFEA-CoD} for Multitask Novelty Search''\par}
		\vspace{2em} 
	\end{center}

	\setcounter{section}{0}
	\renewcommand\thesection{S-\Roman{section}}
	
	\setcounter{table}{0}
	\renewcommand\thetable{S-\Roman{table}}
	
	\setcounter{figure}{0}
	\renewcommand\thefigure{S-\arabic{figure}}
	
	\setcounter{equation}{0}
	\renewcommand\theequation{S-\arabic{equation}}
	
	\setcounter{theorem}{0}
	\renewcommand{\thetheorem}{\arabic{theorem}}

	\setcounter{definition}{0}
	\renewcommand{\thedefinition}{S-\arabic{definition}}
	
	\section{Proof of Theorem 1}
	
	\begin{theorem}
		\label{thm:multitask_novelty_overlap1}
		Consider two pure novelty-search tasks defined on a common genotype space $\mathcal{X}$. For each task $i\in\{1,2\}$, let
		\begin{equation*}
		\begin{aligned}
		\phi_i:\mathcal{X}\to\mathcal{B}_i
		\end{aligned}
		\end{equation*}
		denote the task-specific behavior mapping, and let $\Omega_i = \{x \mid \phi_i(x)\in \tilde{\mathcal{B}}_i\} \subseteq\mathcal{X}$ denote the corresponding target genotype set, where $\tilde{\mathcal{B}}_i$ is the set with reachable behaviors. Define the task-exclusive and overlapping regions by
		\begin{equation*}
		\begin{aligned}
		E_1 &:= \Omega_1\setminus\Omega_2,\\
		E_2 &:= \Omega_2\setminus\Omega_1,\\
		O &:= \Omega_1\cap\Omega_2.
		\end{aligned}
		\end{equation*}
		Fix a novelty resolution $r>0$. For each task $i$, let $\psi_i^{(r)}(\phi_i(x))$ denote the archive contribution induced by behavior $\phi_i(x)$ at resolution $r$, and define the equivalence relation
		\begin{equation*}
		\begin{aligned}
		x\sim_i y
		\quad\Longleftrightarrow\quad
		\psi_i^{(r)}(\phi_i(x))=\psi_i^{(r)}(\phi_i(y)).
		\end{aligned}
		\end{equation*}
		Thus, the equivalence classes of $\sim_i$ represent phenotype-resolvable novelty units for Task~$i$.
		Assume that the two induced partitions coincide on the overlap region $O$, i.e.,
		\begin{equation*}
		\begin{aligned}
		\forall x,y\in O,\qquad
		x\sim_1 y \iff x\sim_2 y.
		\end{aligned}
		\end{equation*}
		Let $\sim_O$ denote this common equivalence relation on $O$, and define
		\begin{equation*}
		\begin{aligned}
		\Pi_O := O/{\sim_O}, \
		c := |\Pi_O|.
		\end{aligned}
		\end{equation*}
		Also define
		\begin{equation*}
		\begin{aligned}
		a_1
		&:=
		\left|
		\left\{
		Q\in \Omega_1/{\sim_1}:Q\cap E_1\neq\emptyset
		\right\}
		\right|,\\
		a_2
		&:=
		\left|
		\left\{
		Q\in \Omega_2/{\sim_2}:Q\cap E_2\neq\emptyset
		\right\}
		\right|.
		\end{aligned}
		\end{equation*}
		Suppose the following hold:
		\begin{enumerate}
			\item Under independent search, Task~$i$ must discover all of its own novelty units. Discovery follows the standard coupon-collector model: if a task has $N$ relevant novelty units, then each evaluation independently samples one of these $N$ units uniformly at random, with repeated hits allowed. Hence the expected number of evaluations required to discover all $N$ units is $NH_N$, where $H_N=\sum_{k=1}^N \frac{1}{k}$.
			
			\item Under multitask search, the shared overlap units are partitioned into two disjoint subsets
			\begin{equation*}
			\begin{aligned}
			\Pi_O^{(1)}\cap \Pi_O^{(2)}&=\emptyset,\\
			\Pi_O^{(1)}\cup \Pi_O^{(2)}&=\Pi_O,
			\end{aligned}
			\end{equation*}
			and Task~$i$ directly explores only the units in $\Pi_O^{(i)}$.
			
			\item Once an overlap novelty unit in $\Pi_O$ is discovered by one task, it is transferred to the other task at no additional evaluation cost.
		\end{enumerate}
		
		Let
		$c_1 := |\Pi_O^{(1)}|,
		c_2 := |\Pi_O^{(2)}|,
		c_1+c_2=c.$
		Let $T_{\mathrm{ind}}$ and $T_{\mathrm{mt}}$ denote the total evaluation costs under the independent and multitask regimes, respectively. Then
		\begin{equation*}
		\begin{aligned}
		\mathbb{E}[T_{\mathrm{ind}}]
		&=
		(a_1+c)H_{a_1+c}
		+
		(a_2+c)H_{a_2+c},
		\end{aligned}
		\end{equation*}
		and
		\begin{equation*}
		\begin{aligned}
		\mathbb{E}[T_{\mathrm{mt}}]
		&=
		(a_1+c_1)H_{a_1+c_1}
		+
		(a_2+c_2)H_{a_2+c_2}.
		\end{aligned}
		\end{equation*}
		Moreover,
		\begin{equation*}
		\begin{aligned}
		\mathbb{E}[T_{\mathrm{mt}}]
		\le
		\mathbb{E}[T_{\mathrm{ind}}],
		\end{aligned}
		\end{equation*}
		with strict inequality whenever $c>0$.
	\end{theorem}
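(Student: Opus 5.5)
The plan is to reduce both expected costs to coupon-collector expectations by counting the novelty units each task must discover itself, and then compare them using the monotonicity of $N\mapsto NH_N$.

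\textbf{Step 1: count Task~$i$'s units.} Every class $Q\in\Omega_i/{\sim_i}$ is contained in $E_i\cup O$. I would adopt the idealization implicit in the definitions of $a_i$ and $c$: classes meeting $E_i$ and classes meeting $O$ are distinct, and the classes meeting $O$ correspond bijectively to $\Pi_O$. This uses the coinciding-partition hypothesis, which makes $\sim_1$ and $\sim_2$ restricted to $O$ equal to $\sim_O$. Then Task~$i$ has exactly $a_i+c$ relevant units.

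\textbf{Step 2: the independent regime.} Each task collects all $a_i+c$ units, so assumption~1 gives expectation $(a_i+c)H_{a_i+c}$ for Task~$i$. Linearity of expectation over the two tasks yields $\mathbb{E}[T_{\mathrm{ind}}]$; no independence between the tasks is needed.

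\textbf{Step 3: the multitask regime.} By assumption~2, Task~$i$ directly samples only its $a_i$ exclusive units and the $c_i$ units in $\Pi_O^{(i)}$. By assumption~3, the remaining $c_{3-i}$ overlap units arrive by free transfer. So Task~$i$'s evaluations form a coupon-collector process on $a_i+c_i$ units. Assumption~1, applied with $N=a_i+c_i$ and read as uniform sampling over the units the task explores, gives $(a_i+c_i)H_{a_i+c_i}$. Adding the two tasks by linearity gives $\mathbb{E}[T_{\mathrm{mt}}]$.

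\textbf{Step 4: the comparison.} Let $g(N)=NH_N$, with $g(0)=0$. Then
\[
g(N+1)-g(N)=H_{N+1}+\tfrac{N}{N+1}\cdot 0+\dots,
\]
and computing directly, $g(N+1)-g(N)=(N+1)H_{N+1}-NH_N=H_N+1>0$. Hence $g$ is strictly increasing. Since $c_i\le c$, we get $g(a_i+c_i)\le g(a_i+c)$ for each $i$, and summing gives the inequality.

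For strictness when $c>0$: $c_1+c_2=c$ with $c_1,c_2\ge 0$ forces $c_1<c$ or $c_2<c$. Otherwise $c_1=c_2=c$ would give $2c=c$, i.e.\ $c=0$. Strict monotonicity of $g$ then makes at least one summand strictly smaller.

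\textbf{Main obstacle.} The delicate step is Step~1, justifying that the unit count for Task~$i$ is exactly $a_i+c$. A single $\sim_i$-class could in principle meet both $E_i$ and $O$, which would cause double counting. The way around this is to read the definitions of $a_i$ and $\Pi_O$, together with the coincidence hypothesis, as specifying the partition of Task~$i$'s units into exclusive and overlap units. I would state this modelling convention explicitly at the start of the proof. Everything after that is linearity of expectation and the elementary monotonicity of $NH_N$.
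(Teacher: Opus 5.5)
Your proposal is correct and follows the paper's proof essentially step for step. Both arguments count $a_i+c$ units per task in the independent regime and $a_i+c_i$ in the multitask regime, apply the coupon-collector expectation with linearity, and conclude via $g(n+1)-g(n)=H_n+1>0$ together with the fact that $c_1=c_2=c$ is impossible when $c>0$.

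Two of your additions make precise points the paper leaves implicit: the explicit convention in Step~1 (no $\sim_i$-class meets both $E_i$ and $O$), and setting $g(0)=0$, which is needed if some $a_i+c_i=0$. You should delete the garbled first display in Step~4; your direct computation right after it already gives the correct difference.
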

	
	\begin{proof}
		For each task, the relevant search objects are phenotype-resolvable novelty units rather than raw genotypes. By construction, the novelty units for Task~$i$ are the equivalence classes of $\sim_i$.
		
		Because the partitions induced by $\sim_1$ and $\sim_2$ coincide on the overlap region $O$, the set $O$ decomposes into a common collection of overlap novelty units:
		\begin{equation*}
		\begin{aligned}
		\Pi_O = O/{\sim_O}, \
		|\Pi_O| = c.
		\end{aligned}
		\end{equation*}
		Therefore, under independent search, Task~1 must discover $a_1$ task-exclusive units together with the $c$ shared overlap units, for a total of $a_1+c$ units. Likewise, Task~2 must discover $a_2+c$ units. By the coupon-collector assumption,
		\begin{equation*}
		\begin{aligned}
		\mathbb{E}[T_{\mathrm{ind}}]
		&=
		(a_1+c)H_{a_1+c}
		+
		(a_2+c)H_{a_2+c}.
		\end{aligned}
		\end{equation*}
		
		Now consider the multitask regime. By assumption, the overlap units are partitioned as
		\begin{equation*}
		\begin{aligned}
		\Pi_O^{(1)}\cap \Pi_O^{(2)}=\emptyset, \
		\Pi_O^{(1)}\cup \Pi_O^{(2)}=\Pi_O,
		\end{aligned}
		\end{equation*}
		with
		\begin{equation*}
		\begin{aligned}
		|\Pi_O^{(1)}|=c_1, \
		|\Pi_O^{(2)}|=c_2, \
		c_1+c_2=c.
		\end{aligned}
		\end{equation*}
		Hence Task~1 directly explores its $a_1$ exclusive units and the $c_1$ overlap units assigned to it, for a total of $a_1+c_1$ units; similarly, Task~2 directly explores $a_2+c_2$ units. Again by the coupon-collector assumption,
		\begin{equation*}
		\begin{aligned}
		\mathbb{E}[T_{\mathrm{mt}}]
		&=
		(a_1+c_1)H_{a_1+c_1}
		+
		(a_2+c_2)H_{a_2+c_2}.
		\end{aligned}
		\end{equation*}
		
		By the zero-cost transfer assumption, every overlap unit discovered by one task is immediately available to the other task without additional evaluations. Since
		\begin{equation*}
		\begin{aligned}
		\Pi_O^{(1)}\cup \Pi_O^{(2)}=\Pi_O,
		\end{aligned}
		\end{equation*}
		all shared overlap units are eventually discovered by one of the two tasks, and thus both tasks obtain full overlap coverage.
		
		It remains to compare the two expected generation costs. Define
		\begin{equation*}
		\begin{aligned}
		g(n):=nH_n.
		\end{aligned}
		\end{equation*}
		Then
		\begin{equation*}
		\begin{aligned}
		g(n+1)-g(n)
		&=
		(n+1)H_{n+1}-nH_n\\
		&=
		H_n+1 > 0,
		\end{aligned}
		\end{equation*}
		so $g$ is strictly increasing on the positive integers. Since $c_1\le c$ and $c_2\le c$, it follows that
		\begin{equation*}
		\begin{aligned}
		g(a_1+c_1)&\le g(a_1+c),\\
		g(a_2+c_2)&\le g(a_2+c).
		\end{aligned}
		\end{equation*}
		Summing the two inequalities yields
		\begin{equation*}
		\begin{aligned}
		\mathbb{E}[T_{\mathrm{mt}}]
		\le
		\mathbb{E}[T_{\mathrm{ind}}].
		\end{aligned}
		\end{equation*}
		
		If $c>0$, then the identity
		\begin{equation*}
		\begin{aligned}
		c_1+c_2=c
		\end{aligned}
		\end{equation*}
		implies that $c_1=c$ and $c_2=c$ cannot both hold simultaneously. Hence at least one of the inequalities
		\begin{equation*}
		\begin{aligned}
		g(a_1+c_1)&\le g(a_1+c),\\
		g(a_2+c_2)&\le g(a_2+c)
		\end{aligned}
		\end{equation*}
		is strict. Therefore,
		\begin{equation*}
		\begin{aligned}
		\mathbb{E}[T_{\mathrm{mt}}]
		<
		\mathbb{E}[T_{\mathrm{ind}}].
		\end{aligned}
		\end{equation*}
		This completes the proof.
	\end{proof}
	
	\section{Details of the Benchmark Problems}
	
	\subsection{Synthetic Basin-Type Multitask Test Problems}
	To enable intuitive and visually interpretable analysis, we first construct a family of synthetic multitask novelty search problems. These problems are deliberately designed to be simple yet representative, so that the effects of inter-task transfer and repulsion under different overlap structures can be clearly examined. Specifically, we consider a family of two-dimensional basin-type tasks. For each task $\mathcal{T}_i$, the decision variable is $\mathbf{x}=(x_1,x_2)\in\mathcal{X} \subset \mathbb{R}^2$, and the task is parameterized by a task-specific basin center $\mathbf{s}_{i}\in\mathbb{R}^2$. Let the basin half-width be fixed as $w=1$. The objective function, behavioral descriptor, behavioral space, and feasible genotype set of task $\mathcal{T}_i$ are defined as
	\begin{equation*}
	\label{eq:synthetic_basin_tasks}
	\begin{aligned}
	\max: f_{i}(\mathbf{x})
	&=
	\begin{cases}
	0, & \mathbf{x}\in [\mathbf{s}_{i}-w,\;\mathbf{s}_{i} + w]^2,\\
	-\|\mathbf{x}-\mathbf{s}_{i}\|_2^2-100, & \text{otherwise},
	\end{cases}\\[2pt]
	\phi_i(\mathbf{x})
	&=
	\operatorname{Clip}\!\left(
	\frac{1}{2}(\mathbf{x}-\mathbf{s}_{i})+\frac{1}{2},
	\,0,\,1
	\right),\\
	\Omega_i
	&=
	\left\{
	\mathbf{x}\in\mathbb{R}^2 \;\middle|\; f_i(\mathbf{x})\ge \tau
	\right\},\\[2pt]
	\end{aligned}
	\end{equation*}
	where $\operatorname{Clip}(\cdot,0,1)$ truncates each coordinate into $[0,1]$, and $\tau$ is the objective threshold used in archive admission. In our experiments, $\tau$ is set close to zero so that only solutions located inside the basin are accepted into the novelty archive. Under this construction, novelty search is effectively restricted to the feasible basin, while the resulting behavioral space is normalized to the unit square for convenient visualization.
	
	\begin{table*}[!h]
		\centering
		\caption{Synthetic basin-type multitask test problem settings.}
		\label{tab:synthetic_cases}
		\renewcommand{\arraystretch}{1.12}
		\begin{tabular}{clccc}
			\hline
			Problem & Task & Basin center & Search domain $\mathcal{X}$ & Characteristic \\
			\hline
			\multirow{2}{*}{\textit{Problem} 1}
			& Task 1 & $\mathbf{s}_{1} = (0,\,0)$ & \multirow{2}{*}{$[-1,1]^2$} & \multirow{2}{*}{Fully overlapping target genotype sets} \\
			& Task 2 & $\mathbf{s}_{2} = (0,\,0)$ &  & \\
			\hline
			\multirow{2}{*}{\textit{Problem} 2}
			& Task 1 & $\mathbf{s}_{1} = (0.4,\,0.4)$ & \multirow{2}{*}{$[-1.4,1.4]^2$} & \multirow{2}{*}{Partially overlapping target genotype sets} \\
			& Task 2 & $\mathbf{s}_{2} = (-0.4,\,-0.4)$ &  & \\
			\hline
			\multirow{2}{*}{\textit{Problem} 3}
			& Task 1 & $\mathbf{s}_{1} = (1,\,1)$ & \multirow{2}{*}{$[-2,2]^2$} & \multirow{2}{*}{Disjoint target genotype sets} \\
			& Task 2 & $\mathbf{s}_{2} = (-1,\,-1)$ &  & \\
			\hline
		\end{tabular}
	\end{table*}

	To investigate the effect of target genotype set relatedness, we construct three representative two-task problems by varying the basin centers of $\mathcal{T}_1$ and $\mathcal{T}_2$. The corresponding settings are summarized in Table~\ref{tab:synthetic_cases}. These problems represent overlapping, partially overlapping, and disjoint target genotype sets.
	For each problem, the search domain is defined as the smallest axis-aligned square covering the union of the two basin regions, i.e.,
	\begin{equation*}
	\label{eq:synthetic_domain}
	\mathcal{X}
	=
	\left[
	\min\!\bigl(\textbf{s}_{1},\textbf{s}_{2}\bigr)-w,\;
	\max\!\bigl(\textbf{s}_{1},\textbf{s}_{2}\bigr)+w
	\right]^2.
	\end{equation*}
	Then, the resulting domains are $[-1,1]^2$, $[-1.4,1.4]^2$, and $[-2,2]^2$ for \textit{Problem} 1 to 3, respectively.
	
	Overall, these synthetic basin-type problems provide a simple yet effective testbed for examining the exploration dynamics of MFEA-CoD, particularly in terms of how the proposed framework coordinates inter-task transfer and repulsion under different overlap structures.
	
	\subsection{Multitask Deceptive Maze Navigation Tasks}
	
	The deceptive maze navigation task is adopted as a representative benchmark for evaluating the algorithms in the novelty search domain~\cite{lehman2011abandoning,pyribs_ns_maze}. In this problem, a mobile robot must navigate a two-dimensional maze from a given start position to a target location. Its deceptive nature stems from the fact that directly minimizing the distance to the goal is often misleading, as such a strategy may drive the robot toward blocked regions rather than feasible paths. As a result, successful solvers must discover exploratory behaviors that may initially move away from the target before eventually reaching it. This property makes the problem particularly suitable for studying novelty search and other divergent search paradigms without considering the objective.
	
	To further evaluate the proposed MFEA-CoD in deceptive and practically relevant settings, we consider a set of multitask continuous-control problems based on Kheperax maze~\cite{grillotti2023kheperax} navigation. In these tasks, the robot operates in a two-dimensional maze normalized to $[0,1]\times[0,1]$, starting from a predefined initial position and attempting to reach a specified target location. Each episode is executed for a maximum number of 250 control steps, and if the robot fails to reach the target within this horizon, the episode terminates at the final state.
	
	In this paper, the tasks are considered as the \textit{pure novelty search} settings, which means that the objectives are absolutely abandoned. For each task $\mathcal{T}_i$, the behavioral descriptor is defined as the final robot position at the end of the episode, namely
	\begin{equation*}
	\mathbf{b}_{i}(\pi_{\psi})=(x_{\mathrm{final}},y_{\mathrm{final}})\in\mathcal{B}_i=[0,1]^2,
	\end{equation*}
	where $\pi_{\psi}$ denotes the control policy decided by a neural network parameterized by $\psi$. Accordingly, novelty search is performed in the two-dimensional behavioral space induced by reachable terminal positions. A rollout is regarded as successful if the Euclidean distance between the final position and the target is smaller than a predefined threshold $\varepsilon$, which is set to $0.1$ in our experiments.
	
	In this paper, we construct three two-task cases to examine multitask transfer under three representative settings: identical maze with different start positions, identical maze with different target positions, and different maze environments. The maze layouts are selected from two predefined Kheperax environments, namely \textit{standard} and \textit{snake}. The former contains relatively complex wall structures, whereas the latter imposes stronger geometric constraints through a narrow winding corridor. The detailed settings are summarized in Table~\ref{tab:maze_cases}. Specifically, \textit{Case}~1 corresponds to the same maze with different starts, \textit{Case}~2 to the same maze with different targets, and \textit{Case}~3 to different maze environments. Together, these cases provide a diverse testbed for evaluating multitask novelty search under varying levels of structural similarity.
	
	\begin{table*}[!t]
		\centering
		\caption{Multitask deceptive maze navigation problem settings.}
		\label{tab:maze_cases}
		\renewcommand{\arraystretch}{1.12}
		\begin{tabular}{clcccc}
			\hline
			Case & Task & Maze & Start position & Target position & Characteristic \\
			\hline
			\multirow{2}{*}{\textit{Case} 1}
			& Task 1 & standard & $(0.85,\,0.15)$ & $(0.15,\,0.90)$ & \multirow{2}{*}{Same maze, different starts, same target} \\
			& Task 2 & standard & $(0.15,\,0.15)$ & $(0.15,\,0.90)$ & \\
			\hline
			\multirow{2}{*}{\textit{Case} 2}
			& Task 1 & snake & $(0.85,\,0.15)$ & $(0.15,\,0.90)$ & \multirow{2}{*}{Same maze, same start, different targets} \\
			& Task 2 & snake & $(0.85,\,0.15)$ & $(0.85,\,0.90)$ & \\
			\hline
			\multirow{2}{*}{\textit{Case} 3}
			& Task 1 & standard & $(0.15,\,0.15)$ & $(0.15,\,0.90)$ & \multirow{2}{*}{Different mazes} \\
			& Task 2 & snake & $(0.85,\,0.15)$ & $(0.15,\,0.90)$ & \\
			\hline
		\end{tabular}
	\end{table*}
	
	\subsection{Multitask MuJoCo Policy Optimization Problems}
	
	To further assess the proposed MFEA-CoD in high-dimensional \textit{novelty-augmented optimization} settings, we consider a family of multitask MuJoCo policy optimization problems. In contrast to the previous novelty-search benchmarks, the algorithm aims to maximize the cumulative task reward while leveraging novelty as an auxiliary search signal. Such problems provide a suitable testbed for examining whether MFEA-CoD can leverage collaborative discovery to escape local optima through novelty search more efficiently.
	
	In this paper, we consider two representative MuJoCo locomotion benchmarks~\cite{todorov2012mujoco} derived from Gymnasium environments~\cite{towers2024gymnasium}, namely \texttt{Hopper-v4} and \texttt{Walker2d-v4}. For each benchmark, two related tasks are optimized simultaneously. The paired tasks within the same benchmark share the same robot morphology, observation space, action space, policy representation, and reward definition, but differ in one key physical property of the underlying simulator. Therefore, these benchmarks provide controlled multitask settings in which the tasks remain closely related while exhibiting distinct locomotion dynamics, making cross-task transfer meaningful but nontrivial.
	
	For all tasks, the control policy is represented by a fully connected neural network with two hidden layers, each containing 16 neurons with \texttt{tanh} activation. The output layer also adopts \texttt{tanh}, producing bounded continuous actions. A candidate solution encodes the entire set of network parameters. During evaluation, each policy is executed for at most 1000 simulation steps, and the objective value is defined as the cumulative episodic return under the default reward function of the corresponding Gymnasium environment. In addition, a two-dimensional behavioral descriptor is defined for each benchmark for archive organization and behavior-space analysis.
	
	For the \textit{Hopper Gravity} problem, the two tasks differ only in gravitational acceleration along the vertical axis. Specifically, the first task adopts the standard Earth-like gravity of $-9.81$~m/s$^2$, whereas the second task uses a Moon-like gravity of $-1.62$~m/s$^2$. The observation and action dimensions are 11 and 3, respectively. Following the implementation used in our experiments, the behavioral descriptor is defined as
	\[
	\mathbf{b}_i(\pi_{\psi})=\bigl(r_{\mathrm{contact}},\, \bar{\theta}_{\mathrm{torso}}\bigr),
	\]
	where $r_{\mathrm{contact}}$ denotes the fraction of simulation steps during which the foot geom is in contact with the floor, and $\bar{\theta}_{\mathrm{torso}}$ denotes the average torso-tilt-related state statistic over the episode. This descriptor captures both contact dynamics and body posture, thereby distinguishing different hopping strategies induced by the change in gravity.
	
	For the \textit{Walker2d Friction} problem, the two tasks differ only in the friction condition of the simulator. More precisely, the sliding-friction coefficient of every geom in the MuJoCo model is multiplied by a task-specific factor, yielding a normal-friction task with multiplier 1.0 and a low-friction task with multiplier 0.3. The observation and action dimensions are 17 and 6, respectively. The behavioral descriptor is defined as
	\[
	\mathbf{b}_i(\pi_{\psi})=\bigl(r_{\mathrm{left}},\, r_{\mathrm{right}}\bigr),
	\]
	where $r_{\mathrm{left}}$ and $r_{\mathrm{right}}$ denote the fractions of simulation steps during which the left foot and the right foot are in contact with the ground, respectively. This descriptor reflects gait style and contact asymmetry, and thus enables the archive to distinguish different locomotion patterns under different friction conditions.

	\subsection{Multitask Generative Novelty Search Problem}
	We further consider the multitask generative novelty search problem based on face image generation. This problem is studied under the pure novelty-search setting. Specifically, the optimization is performed in the latent space of a pre-trained image generator, while the search is driven entirely by novelty in task-specific semantic behavior spaces. By default, for each task, candidate solutions are represented as 512-dimensional latent vectors and are decoded into face images using a pre-trained StyleGAN2~\cite{karras2020analyzing} generator from the Hugging face.
	
	The two tasks differ only in the definition of the behavior descriptor. In both tasks, the behavior coordinates are semantic proxy quantities extracted automatically from generated images by pre-trained vision models, rather than physical measurements or manually annotated attributes. In the current implementation, age is estimated by the vision transformer model \texttt{nateraw/vit-age-classifier}, whereas the remaining semantic attributes are extracted using CLIP, specifically \texttt{openai/clip-vit-base-patch32}~\cite{radford2021learning}. Given a generated RGB face image, the image is first encoded by CLIP into an $L_2$-normalized image feature vector. Each textual prompt is likewise encoded into an $L_2$-normalized text feature vector. Their similarities are then computed by scaled cosine similarity, followed by softmax normalization to obtain a prompt-level probability distribution.
	
	For Task 1, denoted as the \textit{age--hair} task, the behavior descriptor is defined as
	\[
	\mathbf{b}_1(\mathbf{z}) = \bigl(s_{\mathrm{age}},\, s_{\mathrm{hair}}\bigr).
	\]
	Here, $s_{\mathrm{age}}$ is obtained from the age classifier by taking the softmax output over age categories, computing the expectation with respect to the representative age of each category, and then dividing the resulting age estimate by 100 so that it is approximately mapped to $[0,1]$. The second coordinate, $s_{\mathrm{hair}}$, is computed using CLIP-based prompt scoring over four hair-related prompts, i.e., ``\textit{a face with no hair bald}", ``\textit{a face with short hair}", ``\textit{a face with medium length hair}", and ``\textit{a face with long hair}". Let $\mathbf{p}_{\mathrm{hair}}$ denote the softmax probabilities of these four prompts. Then the hair score is defined as the weighted sum
	\[
	s_{\mathrm{hair}}=\sum_{k=1}^{4} p_{\mathrm{hair},k} \, \omega_k,
	\]
	where the ordered weights $\omega_k, k \in \{1,\ldots,4\}$ are set to $[0,\;0.33,\;0.66,\;1.0]$. Consequently, $s_{\mathrm{hair}}$ provides a scalar semantic coordinate in $[0,1]$ that increases monotonically from bald/short-hair appearance to long-hair appearance. Therefore, the first task organizes generated faces according to semantic variation in age and hair length.
	
	For Task 2, denoted as the \textit{smile--frontal} task, the behavior descriptor is defined as
	\[
	\mathbf{b}_2(\mathbf{z}) = \bigl(s_{\mathrm{smile}},\, s_{\mathrm{frontal}}\bigr).
	\]
	Both coordinates are computed from CLIP-based binary prompt matching. More specifically, $s_{\mathrm{smile}}$ is obtained by contrasting a neutral-expression prompt, i.e., ``\textit{neutral expression face portrait}", with a smiling-face prompt, i.e., ``\textit{smiling happy face portrait}", applying softmax to the two corresponding CLIP similarity scores, and taking the probability associated with the smiling prompt. Similarly, $s_{\mathrm{frontal}}$ is obtained by contrasting a side-profile prompt, i.e., ``\textit{side profile view of a human face}" with a front-facing portrait prompt, i.e., ``\textit{front facing portrait looking at camera}", and taking the probability associated with the frontal-view prompt after softmax normalization. Hence, higher values of $s_{\mathrm{smile}}$ and $s_{\mathrm{frontal}}$ indicate stronger semantic evidence of smiling expression and frontal face orientation, respectively.
	
	\section{Implementation Details}
	\subsection{Implementation Details of the MFEA-CoD for Pure Novelty Search}
	This section provides several implementation details that are not explicitly shown in \textbf{Algorithms~1--3} but are used in the actual realization of MFEA-CoD. In our implementation, the algorithm is based on the emitter-driven framework. For each task, an independent novelty archive is maintained, while a task-specific emitter set is used to generate offspring in the genotype space. During evolution, the algorithm distinguishes between the \emph{source task} of an offspring, i.e., the task to which its emitter belongs, and the \emph{target task}, i.e., the task on which the offspring is actually evaluated after inter-task transfer. This distinction is maintained throughout one generation and is required for both archive update and emitter feedback.
	
	In the implementation, novelty is always computed with respect to the archive of the \emph{target task}. Thus, once an offspring is reassigned to a target task through the inter-task transfer, its behavioral descriptor is evaluated only on that task, and its novelty score is computed only against the archive associated with the target task. If the offspring is admitted, it is directly inserted into the target-task archive, regardless of which source-task emitter generated it. In this way, transferred offspring can make effective contributions to the target task rather than being restricted to their original task domain.
	
	Regarding the fitness feedback and the update of emitters, the implementation does not directly use the raw novelty value from a single archive as the scalar fitness. Instead, emitter update is driven by a multitask novelty ranking procedure. Specifically, for each target task, novelty is computed with respect to the corresponding task archive and ranked within the subset of offspring evaluated on that task. The resulting task-wise rankings are then aggregated into a unified scalar feedback fitness for emitter update, which is the same as the classical MFEA~\cite{gupta2015multifactorial}. Furthermore, under the pure novelty-search setting, an offspring generated by one task but evaluated on another task is assigned a large penalty during the target-task ranking stage, so that it is down-ranked relative to offspring originating from that task. This treatment prevents transferred offspring from dominating the local update of target-task emitters merely due to favorable cross-task evaluation outcomes, thus alleviating undesirable cross-task attraction caused by comparatively better fitness values. Each emitter maintains its own search distribution and is updated independently through an evolution strategy based emitter. After archive insertion and ranking construction are completed, the ranked offspring associated with each emitter are returned to its internal optimizer, which then updates the corresponding search distribution.
	
	\subsection{Implementation Details of MFEA-CoD for Novelty-Augmented Optimization}
	Beyond the modifications described in Section~V, the main implementation difference lies in the construction of the scalar fitness feedback used for emitter updates. In contrast to the pure novelty-search setting, emitter updating is no longer driven solely by multitask novelty ranking. Instead, for each target task $\mathcal{T}_i$, the evaluated offspring are first ranked separately according to their objective values and novelty scores. Let $r_i^{\mathrm{obj}}$ and $r_i^{\mathrm{nov}}$ denote the normalized rank-based scores obtained from the objective ranking and novelty ranking, respectively. The fitness used for emitter update is then computed as
	$w_i\, r_i^{\mathrm{obj}} + (1-w_i)\, r_i^{\mathrm{nov}}$, 
	where $w_i$ is the $i$-th task-specific trade-off coefficient. This normalized fitness is subsequently used by the emitter-side optimizer to rank offspring and update the search distribution. Such a normalization is particularly important in multitask novelty-augmented optimization, since the numerical scales of objective values and novelty scores may vary substantially across tasks; otherwise, emitter adaptation and transfer learning could be biased toward tasks with larger raw values.
	
	Another implementation detail is that, unlike in the pure novelty-search setting, cross-task evaluated solutions are not penalized during ranking. This design is motivated by the objective-driven nature of novelty-augmented optimization: the search should first identify high-quality regions in the solution space through multitask cooperative search, after which novelty can further promote collaborative exploration in the neighborhoods of those promising regions.
	
	\section{Parameter Settings}
	This section summarizes the main parameter settings used in all experiments. Unless otherwise stated, all algorithms are evaluated under a common protocol within each benchmark category to ensure a fair comparison across task cases and algorithm variants. Since the considered benchmarks differ substantially in problem domain and task characteristics, several settings are problem-specific. Accordingly, the following subsections focus only on the settings that vary across different problem classes.
	
	\subsection{Parameter Settings for the Synthetic Basin-Type Multitask Problems}
	For the experiments about the synthetic basin-type multitask problems, each algorithm executes 20 independent runs per instance and 500 generations per run. The main archive is implemented using \texttt{ProximityArchive} with $k=15$ nearest neighbors and a novelty threshold of 0.05. In addition, objective-gating is enabled: solutions with objective values lower than $-0.001$ are not inserted into the archive. This setting restricts archive growth to feasible basin regions while still allowing all evaluated solutions to participate in novelty-based ranking. All methods use the same \texttt{EvolutionStrategyEmitter} configuration, with 5 emitters per task, batch size 10 per emitter, initial step size $\sigma_0=0.1$, \texttt{mu} selection, and the \texttt{basic} restart rule. Initial emitter centers are sampled uniformly from $[-0.5,0.5]^2$. 
	
	The compared multitask variants differ only in their \textit{ITP} and repulsion configurations. In the fixed-transfer setting, the off-diagonal transfer probabilities are fixed, i.e., $\text{itp}_{1,2}=\text{itp}_{2,1}=0.5$. In the adaptive-transfer setting, the \textit{ITP} matrix is updated online using the proposed adaptive method. The learner uses a learning rate $\alpha$ of 0.4, a decay factor $\lambda$ of 0.5, a regularization coefficient $\beta$ of 0.02, and constrains transfer probabilities to the interval $[0.05,0.95]$. For the repulsion-enabled variants of MFEA-CoD, the repulsion strength $\eta$ is set to $0.2$, and the size of temporary archives is set to 100.
	
	\subsection{Parameter Settings for the Multitask Deceptive Maze Navigation Tasks}
	For the multitask deceptive maze navigation tasks, the archive and emitter structures follow the same general framework as in the synthetic basin-type multitask problem experiments, while the task representation and optimization budget differ. Each experiment is conducted for 10 independent runs and 1000 generations per run. Each maze task uses a two-dimensional behavior descriptor defined by the final robot position in $[0,1]^2$. Each episode lasts for at most 250 control steps. The controller is a multilayer perceptron with hidden-layer sizes $(8,8)$, and the flattened policy parameters form the search variable. A rollout is counted as successful when the Euclidean distance between the final robot position and the target position is smaller than 0.1.
	
	The maze experiments use a novelty threshold of 0.02, and the size of temporary archives is set to 5000. The same \texttt{EvolutionStrategyEmitter} configuration as above, except that the batch size is increased to 16 per emitter and the initial emitter centers are sampled uniformly from $[0,1]^d$, where $d$ denotes the policy network dimension. The multitask-specific settings are also different: the multitask novelty weight is set to 0, the repulsion strength is set to 0.6, and the adaptive-\textit{ITP} learner uses a learning rate of 1.2, a decay factor of 0.5, and an \textit{ITP} range of $[0.05,0.95]$. Note that, for this group of problems, the repulsion strength will be linearly decayed from 0.6 to 0.3 over generations.
	
	In the fixed-transfer setting, the off-diagonal entries of the transfer matrix are fixed to $\text{itp}_{1,2}=\text{itp}_{2,1}=0.3$. In addition to the four novelty search based methods, the two quality-diversity optimization baselines, namely MAP-Elites and CMA-ME, are also considered. To ensure a fair comparison, these baselines retain the same emitter settings as the novelty-search-based methods, but use a \texttt{GridArchive} of size $100 \times 100$ from the PyRibs library instead of a \texttt{ProximityArchive} based on their default settings.
	
	\subsection{Parameter Settings for the Multitask MuJoCo Policy Optimization Problems}
	
	For the multitask MuJoCo policy optimization problems, the archive and emitter structures follow the same general framework as above, but the optimization is novelty-augmented rather than purely novelty-driven. Each experiment is repeated for 10 independent runs and 500 generations per run. Parallel evaluation is performed using a worker pool whose size is determined automatically from the available CPU resources.
	
	For both problems, the main archive is implemented using \texttt{ProximityArchive} with $k=15$ and local competition enabled, since identical or highly similar phenotypes may correspond to distinct strategies with different objective values. The optimization objective is the cumulative episodic return of the target task, while novelty is computed in the corresponding behavior space. As in the novelty-augmented formulation described in the main text, the objective-novelty trade-off is controlled by adaptive NSRA-style weighting. By default, the initial objective and novelty weights are set to 0.8 and 0.2, respectively; the objective weight is decreased by 0.05 when a task fails to improve its best objective for 15 consecutive generations, and is lower bounded by 0.2.
	
	The emitter configuration remains \texttt{EvolutionStrategyEmitter}, with one emitter per task and a batch size of 64 for each task. The emitter center is initialized from a standard Gaussian vector in the policy parameter space, and no explicit box bounds are imposed on the search distribution. 
	For the multitask variants, the fixed-transfer setting uses $\text{itp}_{1,2}=\text{itp}_{2,1}=0.3$, whereas the adaptive-transfer setting updates the \textit{ITP} matrix online. In both MuJoCo scripts, the \textit{ITP} adapter uses a learning rate of 0.2, a decay factor of 0.5, and constrains transfer probabilities to the interval $[0.05,0.95]$. However, there are several different problem-specific parameters, due to the fact that the two problems differed significantly. For the \textit{Hopper Gravity} problem, the novelty threshold is set to 0.01, the temporary archive size is set to 1000, and the regularization coefficient for adaptive \textit{ITP} learning is set to 0.5. For the \textit{Walker2d Friction} problem, the corresponding values are 0.005, 200, and 0.1, respectively. Unless otherwise stated, all remaining settings are shared by the two MuJoCo problems, including the repulsion strength $\eta=0.5$.

	\subsection{Parameter Settings for the Multitask Generative Novelty Search Problems}
	
	Finally, for the multitask generative novelty search problems, the overall MFEA novelty-search framework is the same as in the maze setting, but the search is conducted in the latent space of a pre-trained face generator. Each experiment is run for 100 generations. As in the previous pure novelty-search benchmarks, the main archive is implemented using \texttt{ProximityArchive} with $k=15$ nearest neighbors. The novelty threshold is 0.02.
	
	The emitter and optimizer settings follow the same configuration as in the earlier pure novelty-search benchmarks, except that each task uses 3 emitters with batch size 8 per emitter. Since the latent space is high-dimensional, emitter centers are initialized from a Gaussian distribution with standard deviation 0.5. The size of temporary archives for both tasks are set to 2000. The MFEA-CoD settings use novelty weight $\lambda=0.3$, linear decayed repulsion strength from 0.5 to 0, regularization coefficient 0.5, adaptive-\textit{ITP} learning rate 0.1, and the initial transfer probability $\text{itp}_{1,2}=\text{itp}_{2,1}=0.3$.
	
\end{document}